\documentclass{article}
\usepackage{graphicx} % Required for inserting images

\usepackage{amsmath,amssymb,amsfonts}
\usepackage{amsthm}
\usepackage{booktabs}
\usepackage{graphicx}
\usepackage{float}
\usepackage{pgfplots}
\usepackage{xcolor}
\usepackage{subcaption}
\usepackage{hyperref}
\usepackage{natbib}
\usepackage{pgfplots}
\pgfplotsset{compat=1.18}
\usepackage[margin=1in]{geometry}
\usetikzlibrary{arrows.meta,calc,positioning,decorations.pathreplacing,shapes.geometric}

\newcommand{\E}{\mathbb{E}}

\newcommand{\phase}{\sigma}
\newcommand{\Wone}{W_1}

\newtheorem{definition}{Definition}
\newtheorem{proposition}{Proposition}
\newtheorem{corollary}{Corollary}

\definecolor{sharedColor}{RGB}{52,101,164}
\definecolor{headColor}{RGB}{117,80,145}
\definecolor{specialistColor}{RGB}{190,42,42}
\definecolor{transientColor}{RGB}{232,170,145}
\definecolor{settledColor}{RGB}{157,199,143}
\definecolor{navyColor}{RGB}{32,74,135}

\title{Single or Multiple Policies for Phase-Structured Reinforcement Learning?}
\author{
  Guilhem Loussouarn, Nancy Nayak, and Kin K. Leung \\
  \small Department of Electrical and Electronic Engineering \\
  \small Imperial College London, UK \\
}
\date{}

\begin{document}

\maketitle
\begin{abstract}
    Many reinforcement-learning (RL) problems are non-stationary yet structured and can be decomposed into phases, each with its own transition probabilities and reward functions. When the phase sequence is known, the common solution augments the state with information to satisfy the Markovian property and applies standard RL techniques. However, prior work finds that the multi-policy approach for different phases can outperform a single state-augmented policy shared among the phases, for reasons that remain unclear. In this work, we first show that the shared policy can theoretically achieve performance of any multi-policy solution. However, whether a multi-policy solution can perform better than the corresponding single shared policy in practice depends on function approximation, learning and optimization processes, as well as, for multi-policy solutions, the sample efficiency and loss of continuity from one policy to another. We propose a regime-based phase decomposition method to identify which policy can provide better performance. The method is based on consideration of the duration of transient system dynamics relative to the duration of the quasi-stationary period. Numerical experiments are conducted with different non-stationary RL problems to validate our four major hypotheses: (a) longer phase durations favor multi-policies, (b) the heterogeneity between phases increases the burden on single policy, (c) multi-policies need sufficient data for each phase, and (d) environment-specific transition dynamics between phases can affect which policy is preferable.

\end{abstract}

\section{Introduction}

Many sequential decision-making problems are non-stationary but exhibit recurring structure. Their transition dynamics or rewards may change between a finite set of phases. Within a phase, the interactions are described by a stationary Markov decision process (MDP) with phase-specific transitions and rewards; across phases, these MDPs are coupled, since the actions taken in one phase determine the state from which the next begins. We call the resulting sequence of coupled MDPs a \emph{phase-structured MDP}. Examples include daily traffic and demand cycles \citep{oliveira2017mobile,shi2018mobile,haffey2018periodic,dan2014periodic}, robotic contact modes \citep{kroemer2021review}, and changing computational workloads \citep{amvrosiadis2018workloads}. We focus on deterministic scheduled phases, allowing both periodic and arbitrary fixed non-periodic structures, in which the active phase is observed. A latent or history-dependent phase process must instead be inferred from interaction, making the problem partially observable, and we leave that case to future work.

\begin{figure*}[t]
  \centering
  % Force the whole thing to shrink to the text width
    \begin{tikzpicture}[
  x=1cm, y=1cm,
  font=\sffamily,
  >=Stealth,
  title/.style={font=\large\bfseries, text=navyColor},
  box/.style={draw=black!40, rounded corners=3pt, inner sep=6pt, align=center},
  diagnostic/.style={box, minimum width=2.8cm, minimum height=1.0cm, font=\small},
  architecture/.style={box, minimum width=3.4cm, minimum height=0.9cm, font=\small\bfseries},
  flow/.style={->, line width=1.0pt, draw=black!60}
]

% ===========================================================================
% TOP LEVEL: (a) PHASE STRUCTURE AND WITHIN-PHASE REGIMES (CENTERED AT x=6.9)
% ===========================================================================
\begin{scope}[xshift=0.55cm]
\node[title, anchor=center] at (6.85, 5.2) {(a) Phase structure and within-phase regimes};

% Dynamics labels
\node[font=\scriptsize,text=black!68] at (2.90,4.55) {$\bigl(p_{m-1},r_{m-1}\bigr)$};
\node[font=\small\bfseries,text=navyColor] at (6.85,4.55) {$\bigl(p_m,r_m\bigr)$};
\node[font=\scriptsize,text=black!68] at (10.75,4.55) {$\bigl(p_{m+1},r_{m+1}\bigr)$};

% Neighboring occurrences
\draw[black!45,fill=black!2,line width=.75pt] (1.60,2.55) rectangle +(2.60,1.45);
\fill[transientColor!52] (1.60,2.55) rectangle +(.55,1.45);
\fill[settledColor!43] (2.15,2.55) rectangle +(2.05,1.45);
\node[font=\scriptsize\bfseries] at (2.90,3.72) {phase $m-1$};

\draw[navyColor,fill=black!1,line width=1.05pt] (4.60,2.35) rectangle +(4.50,1.85);
\fill[transientColor!68] (4.60,2.35) rectangle +(1.25,1.85);
\fill[settledColor!56] (5.85,2.35) rectangle +(3.25,1.85);
\node[font=\small\bfseries,text=navyColor] at (6.85,3.92) {phase $m$ of duration $T_m$};
\node[font=\scriptsize\bfseries,text=specialistColor!85!black,align=center] at (5.22,3.25) {Transient\\$L_m$ steps};
\node[font=\scriptsize\bfseries,text=settledColor!35!black,align=center] at (7.48,3.25) {Quasi-stationary (QS)\\$T_m-L_m$ steps};

\draw[black!45,fill=black!2,line width=.75pt] (9.50,2.55) rectangle +(2.60,1.45);
\fill[transientColor!52] (9.50,2.55) rectangle +(.55,1.45);
\fill[settledColor!43] (10.05,2.55) rectangle +(2.05,1.45);
\node[font=\scriptsize\bfseries] at (10.80,3.72) {phase $m+1$};

% Handoff boundary states
\draw[specialistColor,dashed,line width=.85pt] (4.60,2.20) -- (4.60,4.28);
\draw[->,specialistColor,line width=.9pt] (4.00,2.12) -- (4.56,2.12);
\node[font=\small,text=specialistColor!85!black,align=center] at (4.30,1.88) {Handoff state from $m-1$};

\draw[specialistColor,dashed,line width=.85pt] (9.50,2.20) -- (9.50,4.28);
\draw[->,specialistColor,line width=.9pt] (8.95,2.12) -- (9.46,2.12);
\node[font=\small,text=specialistColor!85!black,align=center] at (9.22,1.88) {Handoff state to $m+1$};

% Timeline axis
\draw[->,line width=.85pt,draw=black!58] (4.55,1.42) -- (9.18,1.42);
\foreach \x/\lab in {4.60/$t$,5.85/$t+L_m$,9.10/$t+T_m$} {
  \draw[black!58] (\x,1.32) -- (\x,1.52);
  \node[font=\scriptsize,below=2pt] at (\x,1.32) {\lab};
}
\end{scope}

% Divider between Level 1 and Level 2
\draw[black!20, line width=0.8pt, dashed] (0, 0.4) -- (14.2, 0.4);

% ===========================================================================
% BOTTOM LEVEL (LEFT): (b) ESTIMATE COMPETING EFFECTS
% ===========================================================================
\node[title, anchor=west] at (0.25, -0.3) {(b) Estimate competing effects};

\node[diagnostic, fill=transientColor!25, draw=transientColor!80!black] (handoff) at (1.7, -1.5)
  {\textbf{Handoff Cost}\\$K_m$};

\node[diagnostic, fill=settledColor!25, draw=settledColor!80!black] (hetero) at (4.85, -1.5)
  {\textbf{Settled QS Gain}\\$G_m$};

\node[diagnostic, fill=settledColor!20, draw=settledColor!70!black] (gain) at (1.7, -3.2)
  {\textbf{Heterogeneity }\\$H_m$};

\node[diagnostic, fill=black!4] (budget) at (4.85, -3.2)
  {\textbf{Phase-Local Data}\\$B_m \simeq B/M$};

% ===========================================================================
% BOTTOM LEVEL (RIGHT): (c) SELECT A SHARING LEVEL
% ===========================================================================
\node[title, anchor=west] at (7.35, -0.3) {(c) Select a sharing level};

% One connector summarizes panel (b) before entering the architecture scale.
% Keeping it inside the inter-panel gap prevents it from crossing either panel.
\draw[->,line width=1.15pt,draw=navyColor] (6.35, -2.35) -- (7.30, -2.35)
  node[midway,above=2pt,font=\scriptsize,text=navyColor] {combine};

\draw[->,draw=black!52,line width=.9pt] (7.55, -1.0) -- (7.55, -3.7);
\node[font=\tiny,text=black!62,anchor=south] at (7.55, -0.87) {more sharing};
\node[font=\tiny,text=black!62,anchor=north,align=center] at (7.55, -3.9) {more specialization};

\node[architecture, draw=sharedColor!80, text=sharedColor, fill=sharedColor!8]
  (shared) at (9.55, -1.2) {Single Shared};

\node[architecture, draw=headColor!80, text=headColor, fill=headColor!8]
  (head) at (9.55, -2.35) {Multihead};

\node[architecture, draw=specialistColor!80, text=specialistColor, fill=specialistColor!8]
  (multi) at (9.55, -3.5) {Multipolicy};

\draw[sharedColor!70,line width=.85pt] (7.55, -1.2) -- (shared.west);
\draw[headColor!70,line width=.85pt] (7.55, -2.35) -- (head.west);
\draw[specialistColor!70,line width=.85pt] (7.55, -3.5) -- (multi.west);

% Architecture Descriptions
\node[font=\scriptsize, align=left, anchor=west, text=black!80] at (11.45, -1.2)
  {\textbf{High handoff cost};\\Knowledge sharing essential};

\node[font=\scriptsize, align=left, anchor=west, text=black!80] at (11.45, -2.35)
  {\textbf{Intermediate regime};\\Shared trunk + local heads};

\node[font=\scriptsize, align=left, anchor=west, text=black!80] at (11.45, -3.5)
  {\textbf{High settled gain};\\Sufficient phase-local data};

\end{tikzpicture}
 \caption{Conceptual framework for architecture selection in phase-structured MDPs. 
  \textbf{(a)} Each phase $m$ of duration $T_m$ exhibits local dynamics $(p_m, r_m)$ and can be split into a transient regime of $L_m$ steps (dominated by handoff states inherited from the preceding phase) and a quasi-stationary (QS) regime of $T_m - L_m$ steps. 
  \textbf{(b)} The net trade-off between specialization and sharing depends on four diagnostic mechanisms: handoff cost $K_m$ who favours coordinated policies; settled gain $G_m$ representing the advantage of specialization learning; task heterogeneity $H_m$, which represents the difference between faces and can affect shared learning; and phase-local data budget $B_m$, which predominantly affects multipolicy learning. 
  \textbf{(c)} Combining these diagnostics places the problem on a continuum from full parameter sharing (\textit{Single Shared}) to full specialization (\textit{Multipolicy}), with modular architectures (\textit{Multihead}) occupying the intermediate regime.}
  \label{fig:intro_roadmap}
\end{figure*}

State augmentation restores a time-homogeneous MDP only when the appended statistic is sufficient for both the physical dynamics and the evolution of the schedule \citep{puterman1994mdp}. For a periodic schedule, the phase index and local clock are sufficient. More general schedules may require additional scheduler information, such as global time or the position within a known sequence. This resolves how the process is represented, but not how policy parameters should be shared across phases. We distinguish three primary architectures along a parameter-sharing spectrum (Figure~\ref{fig:intro_roadmap}c): a \emph{shared policy} uses one phase-conditioned network for all phases; a \emph{multi-policy} architecture uses an independent specialist for each phase; and a \emph{multi-head policy} shares a representation but uses phase-specific output heads.

Recent work on periodic network resource allocation found that multipolicy PPO can outperform a shared state-augmented policy \citep{chen2025pmdp}. If the active phase is already observed by the shared policy, why should separate policies help? At the level of unrestricted policy classes, they should not: a shared phase-conditioned policy can reproduce any collection of phase-specific policies, so the advantage of specialization cannot follow from expressivity alone, nor does it imply that specialization is always preferable. How much information and how many parameters are shared nonetheless affects learning performance, sample efficiency, and computational cost. We therefore ask: \emph{when should a policy share parameters across phases, and when should it specialize?}

\textbf{Central claim.}
Under finite learning resources, specialization is favoured when its phase-local learning benefit exceeds the costs of sample fragmentation and boundary handoff. Sharing is favoured when transfer, data pooling, and cross-phase coordination dominate.

We relate this trade-off to three properties of a phase-structured MDP. First, \emph{phase heterogeneity} determines how different the behaviours required in each phase are. Second, the \emph{training budget} determines whether every specialist receives enough data to learn its phase. Third, the \emph{within-phase regime structure} determines the importance of coordination. Immediately after a phase change, performance may depend strongly on the state inherited from the previous phase. We call this the transient regime. Once this dependence has weakened, the phase enters a quasi-stationary regime in which phase-local specialization may be more beneficial. The duration of the transient and its sensitivity to the inherited state determine the handoff cost of independent policies.

We study these mechanisms under observable phase structure, leaving latent phase inference outside our scope. Across our three environments, changing the phase count, duration, heterogeneity, budget, or boundary dependence can reverse the ranking of shared and specialized architectures. Our contributions are fivefold. First, we formalize phase-structured MDPs and identify the scheduler state that restores a time-homogeneous MDP. Second, we show that the shared phase-conditioned policy class is equivalent to the multipolicy class, so any performance difference between them is a learning effect rather than a representational one. Third, we split each phase occurrence into transient and quasi-stationary regimes and decompose the learned return gap into a settled specialization gain and a transient handoff cost, yielding a phase-wise rule for when specialization pays and tying the gain to phase heterogeneity and the per-phase sample budget. Fourth, we derive diagnostics linking architecture choice to phase heterogeneity, phase-local sample budgets, transient duration, and handoff sensitivity. Finally, we test these predictions with single shared, multihead, and multipolicy PPO on HAPS coverage control, PhasePendulum, and PhaseGridWorld, and probe the proposed mechanisms through capacity controls, gradient measurements, warm starts, distillation, and controlled handoff interventions.

 \textbf{Paper roadmap.}
The remainder of this paper follows the conceptual flow outlined in Figure~\ref{fig:intro_roadmap}. Section~\ref{sec:related_work} situates our approach within the broader literature on non-stationary RL, multi-task learning, and capacity. Section~\ref{sec:phase-structured-mdps} formalizes phase-structured MDPs and the policy class equivalence.   Section~\ref{sec:regime-framework} proposes the regime decomposition into transient and quasi-stationary phases (Figure~\ref{fig:intro_roadmap}a) and derives the diagnostic metrics quantifying handoff costs, settled gains, and sample budgets (Figure~\ref{fig:intro_roadmap}b). Finally, Section~\ref{sec:experiments} validates these predictions experimentally across three environments, demonstrating how competing mechanisms propose an appropriate level of sharing of architectural components across different architectures(Figure~\ref{fig:intro_roadmap}c).
\section{Related Work}
\label{sec:related_work}
\textbf{Structured time-varying MDPs.} Prior work studies MDPs whose transition and reward functions change according to an underlying structure. Periodic MDPs assume that these changes repeat on a fixed cycle and commonly include the period index in the state to use state-augmented policies \citep{riis1965periodic, veugen1983periodicity, podtsykin1991periodic, osogami2014periodic, hu2014periodic, aniket2023periodic}. Other models consider stochastic or hidden changes, such as switching non-stationary MDPs governed by a latent Markov chain \citep{amiri2026switching}. We use \emph{phase-structured MDP} to refer broadly to the cases when the change structure is deterministic and observed. Sec.~\ref{sec:phase-structured-mdps} provides the formal definition. As for phase-structured RL, it represents the full range of RL methods applied to phase-structured MDPs.

\textbf{Periodic multi-policy reinforcement learning.}
\citep{chen2025pmdp} propose multi-policy RL for periodic network resource allocation and report gains over single-policy baselines. A related multi-policy architecture has been applied to high-altitude platform system (HAPS) control in a phase-structured telecommunications environment \citep{loussouarn2026haps} with similar improvements. In both cases, no explanations on the gap between state-augmented policy and multi-policy performances are given. We take these empirical observations as motivation and study the architectural choice between parameter sharing and phase specialization across controlled domains. Our HAPS benchmark retains the
coverage-control component of the latter environment, enabling systematic ablations of the factors governing this choice.

\textbf{Multi-task learning, modularity, and transfer.}
Multi-task learning aims to solve related tasks jointly by sharing parameters or representations \citep{caruana1997multitask}. Phase-structured RL admits a similar interpretation: the phase acts as a task descriptor, a state-augmented policy shares all parameters across phases, and a multipolicy uses independent phase specialists. Sharing can improve data efficiency through transfer, but it can also cause negative transfer when tasks require incompatible updates. PCGrad and CAGrad modify task gradients to reduce such conflicts
\citep{yu2020pcgrad,liu2021cagrad}; we evaluate them as baselines and use phase-specific gradient geometry as a local diagnostic.

Modular and transfer methods address limitations of complete sharing or
separation. Mixture-of-experts models divide complex mappings among specialized
modules \citep{jacobs1991adaptive,jordan1994hierarchical}. Divide-and-conquer RL uses local policies to simplify difficult optimization, while distillation, Distral, progressive networks, and generalist--specialist learning transfer or preserve knowledge across policies and tasks \citep{ghosh2017divide,rusu2015distillation,teh2017distral, rusu2016progressive,jia2022generalist}. These ideas motivate our multihead and warm-start baselines, which we define in Sec.~\ref{sec:experiments}.

The analogy remains incomplete because standard multi-task formulations
generally sample tasks separately \citep{zeng2021decentralized}. Phases are instead consecutive parts of one trajectory: the policy used in one phase determines the state passed to the next. Existing multi-task methods therefore address sharing and transfer, but not necessarily cross-phase coordination.

\textbf{Capacity and irrelevant features.}
Network size and training compute can affect deep-RL performance, although their benefits depend on the algorithm, environment, and data regime
\citep{hilton2023scaling,schwarzer2023bbf,rybkin2025value,fu2025compute}. Moreover, observations that are irrelevant to the optimal decision can still increase the difficulty of finite-sample learning with function approximation \citep{jong2004irrelevant}. These observations motivate our capacity and distractor controls, which test whether multi-policy gains can be explained by parameter count or by the difficulty of extracting phase information.

\section{Phase-Structured MDPs}
\label{sec:phase-structured-mdps} 

We begin by recalling the standard MDP formulation. An MDP is a tuple $\mathcal{P} = (\mathcal{S},\mathcal{A},p,r,\mu_0,\gamma)$, where $\mathcal{S}$ and $\mathcal{A}$ are the state and action spaces, $p(\cdot\mid s, a)$ is the state-transition probability distribution (or transition kernel), $r(s, a)$ is the expected one-step reward, $\mu_0$ is the initial-state distribution, and $\gamma\in(0,1]$ is the discount factor.

\textbf{Phases and schedules.}
Let $\mathcal{M}=\{0,\ldots,M-1\}$ be a finite set of phases. Each phase $m\in\mathcal{M}$ identifies a stationary mode of the environment, described by a transition probability distribution $p_m$ and reward function $r_m$. A phase is therefore a model of the local dynamics and objective. A deterministic phase schedule is a map $\sigma:\mathbb{N}\rightarrow\mathcal{M},$ and $\sigma(t)$ denotes the phase active at time $t$. For example, consider the sequence of phases $m$ in Table \ref{tab:generalsch} consisting of four phases $m\in\mathcal{M}=\{0,1,2,3\}$ that may occur multiple times in an episode. The total duration of the occurrence of each phase is arbitrary, i.e., the duration of the first occurrence $(j=0)$ of phase $m=0$ is two time steps, i.e. $T_{m,j}|_{m=0,j=0}=2$, whereas the second occurrence $(j=1)$ of phase $m=0$ is only one time step, i.e., $T_{m,j}|_{m=0,j=1}=1$. $\tau(t)$ denotes the local time step within a phase.

\begin{table}[h]
\centering
\begin{tabular}{l|ccccccccccccccccccc}
\hline
$t$ & 0 & 1 & 2 & 3 & 4 & 5 & 6 & 7 & 8 & 9 & 10 & 11 & 12 & 13 & 14 & 15 & 16 &\dots\\
\hline
$m=\sigma(t)$ & 0 & 0 & 1 & 1 & 1 & 2 & 2 & 3 & 3 & 3 & 3 & 0 & 2 & 2 & 0 & 0 & 0  &\dots\\
$\tau(t)$ & 0 & 1 & 0 & 1 & 2 & 0 & 1 & 0 & 1 & 2 & 3 & 0 & 0 & 1 & 0 & 1 & 2 &\dots\\
\hline
\end{tabular}
\caption{A phase schedule with phase $m\in \mathcal{M}=\{0,1,2,3\}$.}
\label{tab:generalsch}
\end{table}

\begin{definition}[Phase-structured MDP]
\label{def:psmdp}
A deterministic phase-structured MDP is a tuple \\ $\mathcal{P}_{\sigma}
=
\left(
\mathcal{S},
\mathcal{A},
\{p_m\}_{m\in\mathcal{M}},
\{r_m\}_{m\in\mathcal{M}},
\mu_0,
\gamma,
\sigma
\right).$ At time $t$, the active phase is $\sigma(t)$ and starting from an initial state $s_0\sim\mu_0$, the process evolves at time $t$ according to\[
s_{t+1}\sim p_{\phase(t)}(\cdot\mid s_t,a_t),
\qquad
r_t=r_{\phase(t)}(s_t,a_t).
\]

For a finite horizon $N$, the objective is to find an optimal policy $\pi^*$ such that:
\[
\pi^\star
\in
\arg\max_{\pi} J_N(\pi), \quad
\text{ where } J_N(\pi)
=
\mathbb{E}_{\mu_0,\pi,\sigma}
\left[
\sum_{t=0}^{N-1}\gamma^t
r_{\sigma(t)}(s_t,a_t)
\right].
\]
The expectation is taken over the initial state, policy actions, and state
transitions under the fixed schedule $\sigma$. For an infinite horizon, we
take $N=\infty$ and assume $\gamma<1$ and bounded rewards.
\end{definition}
\textbf{Scheduler state and Markovianity.}
The state $s_t$ alone may not be Markov. For the same state, the transition or reward can differ depending on the active phase $\sigma(t)$. 
Let $z_t = (\sigma(t), i_t)$ denote the scheduler state, where $\sigma(t)$ is the phase index and $i_t$ is auxiliary schedule context. We call $z_t$ \emph{sufficient} if transition dynamics depend on history solely through $(s_t, a_t, z_t)$. In practice, observing only $\sigma(t)$ is insufficient because phase dynamics vary internally over time. While appending global time ($i_t = t$) technically restores Markovianity, it transforms the state space into an acyclic directed graph that prevents data re-use across cycles. To preserve both Markovian sufficiency and sample efficiency, $i_t$ must be mapped to a compact quotient space; for periodic phases of length $T_m$, the within-phase clock $\tau(t) = t \bmod T_m$ suffices ($z_t = (\sigma(t), \tau(t))$; see App.~\ref{app:markov-sufficiency}).

\begin{proposition}[Scheduler-state augmentation]
\label{prop:scheduler-state-markov}
If $z_t$ is sufficient, then the augmented state $\widetilde{s}_t=(s_t,z_t)$ defines a time-homogeneous MDP and recovers Markovianity.
\end{proposition}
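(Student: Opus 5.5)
The plan is to build the augmented MDP explicitly and then check that its kernel and reward do not depend on $t$. Sufficiency has to be read as covering the scheduler as well as the physical dynamics. That is, there are maps $\phi$ (updating the auxiliary context) and $\psi$ (returning the next phase) such that
\[
i_{t+1}=\phi(z_t)
\qquad\text{and}\qquad
\sigma(t+1)=\psi(z_t).
\]
Equivalently, there is a time-independent update $z_{t+1}=g(z_t)$. This reading matches the introduction's requirement that the appended statistic be sufficient for the evolution of the schedule. For the periodic case, $g(m,\tau)=(m,\tau+1)$ if $\tau<T_m-1$, and otherwise $g(m,\tau)=(m',0)$, where $m'$ is the next phase. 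I would state this reading of sufficiency explicitly at the start, since the proposition is false without it.

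Given $g$, I define the augmented MDP on $\widetilde{\mathcal S}=\mathcal S\times\mathcal Z$ with the same action space $\mathcal A$. Its kernel, reward and initial distribution are
\[
\widetilde p\bigl((s',z')\mid (s,z),a\bigr)=p_{m(z)}(s'\mid s,a)\,\mathbf 1\{z'=g(z)\},
\]
\[
\widetilde r\bigl((s,z),a\bigr)=r_{m(z)}(s,a),
\qquad
\widetilde\mu_0=\mu_0\otimes\delta_{z_0},
\]
where $m(z)$ is the first coordinate of $z$. None of these objects involves $t$, so the augmented MDP is time-homogeneous by construction.

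The key step is to show that the process $(s_t,z_t)$ generated by the phase-structured MDP of Definition~\ref{def:psmdp} is a Markov chain with exactly this kernel. Fix a history $h_t=(s_0,a_0,\dots,s_t,a_t)$. Definition~\ref{def:psmdp} gives $s_{t+1}\sim p_{\sigma(t)}(\cdot\mid s_t,a_t)$, and $\sigma(t)$ is a coordinate of $z_t$. The scheduler $z_{t+1}=g(z_t)$ is deterministic and therefore conditionally independent of $s_{t+1}$. Together these give
\[
\Pr(s_{t+1},z_{t+1}\mid h_t,z_{0:t})=\widetilde p\bigl((s_{t+1},z_{t+1})\mid (s_t,z_t),a_t\bigr),
\]
and the reward is $r_t=\widetilde r(\widetilde s_t,a_t)$. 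This conditional law depends on the history only through $(\widetilde s_t,a_t)$, which is the Markov property.

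It remains to show that the objective is preserved. I would argue by induction on $t$ that, for any policy, the joint laws of $(s_t,a_t,r_t)$ coincide in the two processes. It follows that $J_N$ in Definition~\ref{def:psmdp} equals the standard discounted return of the augmented MDP. Consequently, a stationary Markov policy $\pi(a\mid s,z)$ in the augmented MDP suffices, by standard MDP theory \citep{puterman1994mdp}. The case $N=\infty$ then follows from $\gamma<1$ and bounded rewards.

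The main obstacle is the scheduler update. Sufficiency as literally stated in the text mentions only the transition dynamics, and without a time-invariant $g$ the law of $z_{t+1}$ could depend on $t$, which breaks time-homogeneity. I would resolve this by taking the strengthened definition above as the intended meaning. I would also remark that for arbitrary non-periodic schedules one can always choose $i_t=t$, which gives $g(m,t)=(\sigma(t+1),t+1)$. Homogeneity then holds formally, at the cost of an acyclic state space, as the paper notes.
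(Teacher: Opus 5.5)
Your proof is correct and follows the paper's route. The paper also writes the augmented kernel as $p_{\sigma(t)}(A\mid s_t,a_t)\,q(B\mid z_t)$ and then checks memorylessness, the absence of explicit $t$-dependence, and the reward $\widetilde r((s,z),a)=r_{m(z)}(s,a)$. Your worry about the scheduler is already handled by the paper's formal sufficiency condition in the appendix, which includes a time-invariant scheduler kernel $q(\cdot\mid z_t)$; your deterministic $g$ is the Dirac case $q=\delta_{g(z)}$ that the paper itself uses for the periodic clock.

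One correction concerns your extra return-equivalence step, which the paper does not include. For finite $N$, standard MDP theory gives an optimal Markov policy $\pi_t(a\mid s,z)$ that generally depends on $t$ through the time-to-go. So \emph{stationary} policies suffice only for $N=\infty$ with $\gamma<1$, or when $z_t$ itself encodes the remaining horizon. This does not affect the proposition, which only asserts Markovianity and time-homogeneity.
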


The formal Markovian sufficiency conditions and a periodic phase schedule (Corr.~\ref{cor:phase-clock-markov}) are detailed in App.~\ref{app:markov-sufficiency}.
% extended non-periodic examples, 
The proofs of Cor.~\ref{cor:phase-clock-markov} and Prop.~\ref{prop:scheduler-state-markov} are detailed in App.~\ref{app:proofs}).

% The periodic result does not automatically extend to general schedules. 

\textbf{Phase-local models and handoffs.}
For a phase $m$ and an entry-state distribution $\mu$, define the stationary
phase-local model
\begin{equation}
\mathcal{P}_m(\mu)
=
(\mathcal{S},\mathcal{A},p_m^{\pi},r_m,\mu,\gamma),
\label{eq:phase-local-mdp}
\end{equation}
which describes the local dynamics and rewards. In a phase-structured MDP, however, $\mu$ is not fixed, i.e., the distribution $\mu_{m,j}^{\pi}$ of the state at the start of the $j^{th}$ occurrence of phase $m$ is induced by the actions $\pi$ in the preceding phases. The state transition probabilities $p_m^{\pi}$ therefore are also dependent on previous policies $\pi$ such that $p_m^{\pi}=\sum_{a\in \mathcal{A}}p_m(s_{t+1}|a_t, s_t)\pi(a_t|s_t)$. The phases therefore cannot in general be treated as independent. This cross-phase dependence is the handoff effect studied in Sec.~\ref{sec:regime-framework}.

\textbf{Shared policy and specialized policies.}
Since the augmented state $\Tilde{s}_t$ yields a time-homogeneous MDP, standard RL methods can learn a single policy $\pi$ across all phases. We call any such policy a \textit{shared policy}. Instead, \citep{chen2025pmdp}  train a separate \textit{specialized policy} for each phase, an architecture shown to outperform shared policies in phase-structured problems. Yet, specialized policies hold no representational advantage, since any set of them can be expressed by a single shared policy.

\begin{proposition}[Shared--multipolicy equivalence]
\label{prop:policy-class-equivalence}
There is a bijection between shared policies $\pi^\text{sh}(\cdot\mid s, i, m)$ and an ensemble of specialized policies $\boldsymbol{\pi}^\textbf{sp} =(\pi_0(\cdot\mid s, i),\ldots,\pi_{M-1}(\cdot\mid s, i))$ expressed through the gated map $\Gamma$ such that
\begin{equation}
\begin{aligned}
\pi^{\rm sp}(a\mid s,i ,m)
= \Gamma(\boldsymbol{\pi}^\textbf{sp})(a\mid s, i, m) =
\pi_m(a\mid s, i),\\
\pi^\text{sh}(a\mid s, i, m)=\pi_m(a\mid s, i) = \pi^\text{sp}(a\mid s, i, m).
\end{aligned}
\label{eq:shared-multipolicy-bijection}
\end{equation}
It is always possible to represent a shared policy with an ensemble of specialized policies and vice versa.
\end{proposition}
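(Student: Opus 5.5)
The plan is to treat both objects as functions on the augmented space and to write down explicit maps in each direction. Let $\Pi_{\rm sh}$ be the set of Markov kernels $\pi^{\rm sh}:\mathcal{S}\times\mathcal{I}\times\mathcal{M}\to\Delta(\mathcal{A})$, where $\mathcal{I}$ is the space of auxiliary context $i$. This is the policy class of the time-homogeneous augmented MDP of Prop.~\ref{prop:scheduler-state-markov}. Let $\Pi_{\rm sp}=\prod_{m\in\mathcal{M}}\Pi_m$ be the set of $M$-tuples of kernels $\pi_m:\mathcal{S}\times\mathcal{I}\to\Delta(\mathcal{A})$. The gated map is defined by $\Gamma(\boldsymbol{\pi}^{\rm sp})(a\mid s,i,m)=\pi_m(a\mid s,i)$. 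The candidate inverse is the restriction map $R(\pi^{\rm sh})=\bigl(\pi^{\rm sh}(\cdot\mid\cdot,\cdot,0),\ldots,\pi^{\rm sh}(\cdot\mid\cdot,\cdot,M-1)\bigr)$, which fixes the phase coordinate.

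The steps, in order, are as follows.

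\textbf{Well-definedness.} First check that both maps land in the right sets. For each fixed $(s,i,m)$, $\Gamma(\boldsymbol{\pi}^{\rm sp})(\cdot\mid s,i,m)=\pi_m(\cdot\mid s,i)$ is a probability distribution. Likewise, for each $m$, $R(\pi^{\rm sh})_m$ is a kernel on $\mathcal{S}\times\mathcal{I}$. In the general case, measurability in $(s,i,m)$ follows because $\mathcal{M}$ is finite and carries the discrete $\sigma$-algebra. A map on $\mathcal{S}\times\mathcal{I}\times\mathcal{M}$ is measurable if and only if each of its $M$ sections is measurable.

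\textbf{Inverse identities.} Next verify $R\circ\Gamma=\mathrm{id}_{\Pi_{\rm sp}}$ and $\Gamma\circ R=\mathrm{id}_{\Pi_{\rm sh}}$ pointwise. The first holds because $R(\Gamma(\boldsymbol{\pi}))_m(a\mid s,i)=\Gamma(\boldsymbol{\pi})(a\mid s,i,m)=\pi_m(a\mid s,i)$. The second holds because $\Gamma(R(\pi^{\rm sh}))(a\mid s,i,m)=R(\pi^{\rm sh})_m(a\mid s,i)=\pi^{\rm sh}(a\mid s,i,m)$. Together these give the bijection and the identity $\pi^{\rm sh}=\pi^{\rm sp}$ in \eqref{eq:shared-multipolicy-bijection}.

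\textbf{Behavioural equivalence.} Finally, observe that corresponding policies induce the same law on trajectories of $\mathcal{P}_\sigma$. At every time $t$ the action is drawn from $\pi_{\sigma(t)}(\cdot\mid s_t,i_t)$ under both representations. An induction on $t$ using the kernels $p_{\sigma(t)}$ then shows that the trajectory distributions coincide, so $J_N$ agrees for the pair. This is what justifies the remark that any performance gap is a learning effect.

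The main obstacle is not the algebra, which is immediate, but stating the policy classes correctly. The bijection holds only between \emph{unrestricted} classes, or more generally between a shared class that is closed under phase-wise gating and the product of the specialist classes. For parameterized networks, a fixed-width shared network need not realize every tuple of independent specialists. I would therefore state the result at the level of the unrestricted kernel classes, and note in a remark that for function classes the inclusion $\Gamma(\prod_m\Pi_m)\subseteq\Pi_{\rm sh}$ requires the shared architecture to contain the gated construction, for example a network with $M$ sub-networks selected by a one-hot encoding of $m$.
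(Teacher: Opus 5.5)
Your proposal is correct and follows essentially the same route as the paper. The paper's concatenation map $\Phi$ and slicing map $\Psi$ are exactly your $\Gamma$ and $R$; it checks both inverse identities pointwise, then uses the same induction over $t$ to conclude identical trajectory laws and returns, and hence equal optimal values. Your measurability check and your remark restricting the bijection to unrestricted or gating-closed classes are sensible additions; the paper leaves the latter implicit by speaking only of the unrestricted policy space.
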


This is a policy-class equivalence: with unrestricted function classes, shared and specialized policies attain the same optimal return. However, it says nothing about where each architecture reaches when trained. With finite networks, finite data, and gradient-based optimization, the two can converge to different solutions, which is why either may outperform the other in practice. The next section characterizes when specialization is preferable to sharing, and when the reverse holds. Illustrations and proof of Prop.~\ref{prop:policy-class-equivalence} can be found inside App.~\ref{app:proofs}

\section{When to Share and When to Specialize}
\label{sec:regime-framework}
Each phase-local model $\mathcal{P}_m(\mu)$ introduced in \eqref{eq:phase-local-mdp} is stationary within its phase horizon but inherits its initial entry distribution $\mu$ from the preceding phases. We now ask how long that inheritance matters within a phase, and how the answer, together with the heterogeneity of consecutive phases, determines whether a shared or a specialized policy is better suited.

\subsection{Transient and Quasi-Stationary Regimes}
At a phase boundary, an inherited condition such as position, velocity, network load, or other initial-state condition can strongly affect subsequent performance. We call the period during which performance remains sensitive to this inherited state the \emph{transient regime}, and the remaining steps, once the sensitivity has become small,  the \emph{quasi-stationary regime}, in which behaviour is governed mainly by the current phase's dynamics and objective. Quasi-stationarity does not require convergence to an invariant state distribution. It requires only that the effect of the entry state on the return quantities under study has become small.

Let $L_{m,j}\leq T_{m,j}$ denote the duration of the transient regime of the $j^{th}$ occurrence of phase $m$, leaving a quasi-stationary regime of duration $T_{m,j}-L_{m,j}$ time steps. Occurrences of the same phase may differ in $L_{m,j}$ because they inherit different entry distributions, and in slowly mixing or disconnected systems the transient may span the whole phase ($L_{m,j}=T_{m,j}$), so periodicity alone does not guarantee recovery before the next boundary. We estimate $L_{m,j}$ from phase-age behaviour and properties of the environment. App.~\ref{sec:transient-stationary} gives the formal recovery definition and the occurrence-level aggregation.

The two regimes expose different architectural trade-offs. In the quasi-stationary regime, independent training may make a phase-local mapping easier to approximate and optimize than a joint mapping across all phases. In the transient regime, policies remain coupled through the state passed across the boundary. A shared policy trained on complete trajectories can account for this continuation value, whereas independently trained phase-local specialized policies may not. Sharing also pools data and reuses representations, whereas a balanced specialized policy receives only about $B/M$ of a total budget $B$. We now quantify these effects.

\subsection{Settled Gains and Transient Handoff Costs}
The regime decomposition explains why the preferred architecture may vary across phase-structured problems; we now make that comparison quantitative. We measure the gain phase-specific learning obtains in the quasi-stationary regime against the cost specialization incurs in the transient regime, which together form the architecture diagnostic developed in the next subsection.

\textbf{A comparison between learned shared and learned specialized policies.}
% Fix learned shared and specialist controllers 
Let the \emph{learned} shared and the \emph{learned} specialized policies are denoted by $\widehat\pi^{\rm sh}$ and $\widehat\pi^{\rm sp}$, respectively. For occurrence $j$ of phase $m$, with a transient regime duration of $L_{m,j}\leq T_{m,j}$, define 
% $ Z_{m,j}=\sum_{t=0}^{T_{m,j}-1}\gamma^t$, and 
$\lambda_{m,j}=\sum_{t=0}^{L_{m,j}-1}\gamma^t/\sum_{t=0}^{T_{m,j}-1}\gamma^t$.
Thus $\lambda_{m,j}$ is the fraction of the occurrence's discounted mass assigned to the transient regime. Let $\bar J_{m,j,A}^{\rm tr}$ and $\bar J_{m,j,A}^{\rm qs}$ denote the discounted average rewards of architecture $A\in\{\mathrm{sh},\mathrm{sp}\}$ for single shared and specialized policies, in the transient and quasi-stationary windows. For each occurrence, let
\begin{equation}
G_{m,j}
=
\bar J_{m,j,\mathrm{sp}}^{\rm qs}
-
\bar J_{m,j,\mathrm{sh}}^{\rm qs},
\qquad
K_{m,j}
=
\bar J_{m,j,\mathrm{sh}}^{\rm tr}
-
\bar J_{m,j,\mathrm{sp}}^{\rm tr}.
\label{eq:main-regime-terms}
\end{equation}
A positive $G_{m,j}$ is a quasi-stationary specialization gain, while a positive $K_{m,j}$ is a transient advantage for sharing. 
% For readability, we use the occurrence index $j$ only to introduce the construction. 
For readability, we omit the index $j$, and the remainder of the paper uses phase-level quantities $\lambda_m$, $G_m$, and $K_m$, obtained by aggregating over occurrences of phase $m$. This does not assume that all occurrences have identical entry states or transient lengths. The advantage of using learned specialized policies over a learned single shared policy is given by
\begin{equation}
\Delta\bar J
:=
\bar J(\widehat\pi^{\rm sp})
-
\bar J(\widehat\pi^{\rm sh})
=
\sum_{m=0}^{M-1}
\omega_m
\left[
(1-\lambda_m)G_m-\lambda_mK_m
\right],
\label{eq:main-regime-decomposition}
\end{equation}
where $\omega_m$ is the fraction of discounted evaluation mass belonging to phase $m$.  The derivation of the above formulation is given in detail in App.~\ref{sec:transient-stationary}. Specialized policies for phase-local models win over a single shared policy in phase $m$ exactly when its quasi-stationary advantage exceeds its transient deficit, i.e., 
\begin{equation}
(1-\lambda_m)G_{m}>\lambda_mK_{m}.
\label{eq:phasewise-transient-rule}
\end{equation}

\textbf{Transient hand off cost $K_m$.}
The entry law $\mu_{m,A}$ is produced by the policy acting in the phase before $m$, and no architecture removes this coupling. Standard multi-policy training optimizes each specialized policy on phase-local experience, so a specialist may miss how its late actions shift the entry state of the next phase, whereas a shared policy trained on complete trajectories receives that cross-phase credit. Accordingly, $K_m$ is large when achievable return is sensitive to the entry state and recovery is slow; and small when entry distributions match, dynamics contract quickly, or phases reset independently.

\textbf{Quasi-stationary gain $G_m$.}
In the quasi-stationary regime, phase $m$ resembles its phase-local MDP, a specialized policy may approximate the local optimum $\pi_m^\star$ better than a finite shared policy trained across all phases, giving $G_m>0$. This is a finite learning effect, not an expressivity advantage. Since, better phase-local policies need not achieve a better global return once handoffs are included, we isolate the settled comparison by evaluating both architectures on the same settled reference distribution $\mu_m^{\mathrm{ref}}$.

Let $\bar J_m^\star$ and $\Pi_m^\star$ denote the optimal quasi-stationary return in phase $m$ and its set of optimal policies. We measure the heterogeneity of phase $m$ by the loss incurred when policies optimized for other phases are transferred to it:
\begin{equation}
H_m
=
\sum_{k\neq m}a_{mk}
\left[
\bar J_m^\star
-
\sup_{\pi\in\Pi_k^\star}
\bar J_m^{\mathrm{qs}}(\pi;\mu_m^{\mathrm{ref}})
\right],
\qquad
\sum_{k\neq m}a_{mk}=1,
\label{eq:phase-heterogeneity}
\end{equation}
so $H_m$ is large when behaviour optimal elsewhere performs poorly in phase $m$; the supremum avoids dependence on an arbitrary choice of policy when a phase has several optima. For the shared and specialized architectures, define the phase-local learning gaps 
\begin{align}
\varepsilon_{m,\mathrm{sh}}(B;H_m)
=
\bar J_m^\star-
\mathbb E\!\left[
\bar J_m^{\mathrm{qs}}
(\widehat\pi_{\mathrm{sh},B};\mu_m^{\mathrm{ref}})
\right],\nonumber
\quad
\varepsilon_{m,\mathrm{sp}}(B_m)
=
\bar J_m^\star-
\mathbb E\!\left[
\bar J_m^{\mathrm{qs}}
(\widehat\pi_{m,\mathrm{sp},B_m};\mu_m^{\mathrm{ref}})
\right].
\label{eq:phase-local-learning-gaps}
\end{align}
which absorb finite capacity, data, exploration, optimization, and architectural bias. In particular, the shared $\varepsilon_{m,\mathrm{sp}}$ accounts for joint optimization challenges such as multi-task gradient dilution (App.~\ref{app:local-optimization}), whereas independent specialists avoid cross-phase gradient interference entirely. Writing $H_m$ as an argument of $\varepsilon_{m,\mathrm{sh}}$ emphasizes that the shared learner must fit phase-$m$ behaviour jointly with other phases, without assuming any particular functional dependence. The expected settled specialization gain is then exactly
\begin{equation}
G_{m,\mathrm{ref}}^{\mathrm{qs}}
=
\varepsilon_{m,\mathrm{sh}}(B;H_m)
-
\varepsilon_{m,\mathrm{sp}}(B_m).
\label{eq:phase-local-gain-accounting}
\end{equation}

Heterogeneity $H_m$ widens $\varepsilon_{m,\mathrm{sh}}$, while data pooling transferable representations across phases narrows it. Specialization eliminates this joint capacity bottleneck, but each specialist receives only a fraction $\zeta_m \in (0,1)$ of the total sample budget $B$, i.e., $B_m = \zeta_m B$, or $B/M$ under uniform allocation $\zeta_m = 1/M$. A specialist ensemble therefore achieves $G_{m,\mathrm{ref}}^{\mathrm{qs}} > 0$ only when the local sample allocation $B_m = \zeta_m B$ avoids data starvation by exceeding the phase-local sample threshold $B_{0,m}(\delta)$ required for a target suboptimality tolerance $\delta > 0$ (App.~\ref{sec:capacity-budget}, Prop.~\ref{prop:adequate-specialist-budget}).

\subsection{From Regime Quantities to Architecture Choice}

The advantage presented in \eqref{eq:main-regime-decomposition} becomes predictive when the regime quantities are bounded, either by environment analysis or by short pilot runs.
% ; \eqref{eq:main-handoff-cost-bound} places $K_m$ in $[K_{m,\mathrm{ref}} - E_m^{\rm tr}, \, K_{m,\mathrm{ref}} + E_m^{\rm tr}]$. 
Substituting bounds $\underline G_m \leq G_m \leq \overline G_m$ and $\underline K_m \leq K_m \leq \overline K_m$, yields certified bounds on relative performance:
\begin{equation}
\underline R := \sum_m \omega_m \left[ (1-\lambda_m)\underline G_m - \lambda_m \overline K_m \right] \leq \Delta\bar J \leq \sum_m \omega_m \left[ (1-\lambda_m)\overline G_m - \lambda_m \underline K_m \right] =: \overline R.
\label{eq:architecture-score-bounds}
\end{equation}
Hence $\underline R>0$ certifies specialization, i.e., $\bar J(\widehat\pi^{\rm sp})>\bar J(\widehat\pi^{\rm sh})$, and $\overline R<0$ certifies sharing. An interval containing zero leaves the choice unresolved and favours an intermediate architecture (i.e., a multihead network, although \eqref{eq:architecture-score-bounds} compares only the shared and specialized endpoints and does not certify that option. For a specific phase $m$ with $\underline G_m > 0$ and $\overline K_m \geq 0$, specialization yields a positive local lower bound whenever the transient fraction satisfies:
\begin{equation}
\lambda_m < \frac{\underline G_m}{\underline G_m + \overline K_m}.
\label{eq:phasewise-predictive-threshold}
\end{equation}  
so the settled gain must persist long enough to amortize the transient handoff cost. Table~\ref{tab:architecture-regimes} summarizes the resulting regimes. The specialized bound dominates when the settled gain outweighs the handoff and
budget penalties, and the shared bound when transfer, scarce phase-local data, or boundary sensitivity is decisive; multihead suits the unresolved region because it preserves shared features while allowing phase-specific action mappings. 

\begin{table}[t]
\centering
\caption{Architecture selection regimes derived from the transient--quasi-stationary framework.}
\label{tab:architecture-regimes}
\small
\begin{tabular}{p{0.29\linewidth}p{0.27\linewidth}p{0.34\linewidth}}
\toprule
Regime & Governing Condition & Recommended Architecture \\
\midrule
Insufficient $B_m$ & Specialist data starvation & \textbf{Shared}; multihead if outputs differ \\
Enough $B_m$, low $H_m$ & High representation transfer & \textbf{Shared} or \textbf{Multihead} \\
Enough $B_m$, high $H_m$, low $\overline K_m$ & High local learning burden & \textbf{Specialized Multi-Policy} \\
Enough $B_m$, high $H_m$ and high $\overline K_m$ & Severe boundary handoff cost & \textbf{Shared} or \textbf{Coordinated Specialists} \\
Uncertain $[\underline R, \overline R]$ & Uncertified bounds & \textbf{Multihead} / Model Selection \\
\bottomrule
\end{tabular}
\end{table}

\long\def\regimediagnosticstable{
\begin{table}[t]
\centering
\caption{Regime quantities and empirical diagnostics.}
\label{tab:regime-diagnostics}
\small
\begin{tabular}{p{0.18\linewidth}p{0.30\linewidth}p{0.42\linewidth}}
\toprule
Quantity & Meaning & Empirical estimate \\
\midrule
$\lambda_m$ & mean transient exposure & occurrence-level recovery times or phase-age splits \\
$G_m$ & settled specialization gain & common-entry evaluation; \eqref{eq:estimable-regime-bounds} \\
$K_m$ & sharing's handoff advantage & early common-entry evaluation and entry-law shift \\
$H_m$ & return-scaled heterogeneity & cross-phase policy regret \\
$V_g,C_g$ & update heterogeneity & phase-restricted gradients at one shared checkpoint \\
$\zeta_mB/B_{0,m}$ & budget adequacy & phase-local learning curves \\
$\overline K_m$ & handoff-risk bound & controlled entry states and reachability \\
\bottomrule
\end{tabular}
\end{table}
}

\section{Experiments and Results}
\label{sec:experiments}

To test the regime model of Sec.~\ref{sec:regime-framework}, we evaluate PPO \citep{schulman2017ppo} on three phase-structured environments: high-altitude platform systems (HAPS) coverage optimization \cite{loussouarn2026haps}, PhasePendulum, and PhaseGridWorld, all described in detail in App.~\ref{app:hap-details}, App.~\ref{app:pendulum-details}, and App.~\ref{app:gridworld-details}, respectively. In the HAPS network, the idea is to find the optimal position of the HAPS given that the user density and traffic vary over time but in a periodic manner. In particular, different phases inside a period are differentiated by different user densities and traffic loads. The entire service region of the HAPS network is divided into multiple cells; only a few selected cells act as hotspots where the user density and traffic are higher than the rest of the cells. PhasePendulum environment consists of the standard torque-controlled pendulum dynamics while changing the target angle $g_m$ between phases. PhaseGridWorld environment is a $9\times9$ grid with four deterministic actions: up, down, left, and right. Each phase has its own goal and obstacle map. All environments are periodic, with the number of phases and cycles varying by experiment. We compare three architectures: a single policy with the phase and clock augmentation, a multipolicy of $M$ specialists, and a multihead policy with a shared trunk and $M$ heads. All methods use the same PPO hyperparameters unless stated otherwise, and we report raw reward per environment step throughout. We organize the results around four hypotheses (H1 to H4) obtained from Sec.~\ref{sec:regime-framework} and presented in Table \ref{tab:architecture-regimes}.

Our three main methods span different degrees of parameter sharing. \emph{Single+clock} is a single actor--critic taking physical state, phase index, and within-phase clock as inputs. \emph{Multihead} receives the physical state and phase index and combines a shared actor body and a shared critic with $M$ phase-specific linear action heads; at each step the head indexed by the active phase emits the action, so the shared body is updated from every transition while each head is updated only from transitions of its own phase. The phase code is also supplied to the body, allowing it to form phase-conditioned features rather than only phase-agnostic ones. \emph{Multipolicy} uses $M$ independent actor--critics, the active phase selecting which one acts and is updated; its input is the physical state, whereas the clock-aware variant is denoted \emph{multipolicy+clock}. The phase code is not supplied here, since selecting the network already identifies the phase. More details on the methods can be found in App.~\ref{app:architecture-training}

\begin{figure}[ht]
\centering
\begin{minipage}{0.48\linewidth}
\centering
\begin{tikzpicture}
\begin{axis}[
width=\linewidth,
height=1.35in,
xlabel={$T$},
ylabel={Reward/step},
xmode=log,
log basis x={10},
xmin=80,xmax=2500,
ymin=15,ymax=50,
xtick={100,200,500,2000},
xticklabels={$100$,$200$,$500$,$2000$},
tick label style={font=\scriptsize},
label style={font=\scriptsize},
grid=both,
grid style={gray!15}
]
\addplot+[mark=o,thick,error bars/.cd,y dir=both,y explicit]
table[x=T,y=single_phase_tau,y error=single_phase_tau_ci95,col sep=comma] {figures_data/hap_t_sweep.csv};
\addplot+[mark=square,thick,error bars/.cd,y dir=both,y explicit]
table[x=T,y=multihead,y error=multihead_ci95,col sep=comma] {figures_data/hap_t_sweep.csv};
\addplot+[mark=triangle,thick,error bars/.cd,y dir=both,y explicit]
table[x=T,y=multipolicy,y error=multipolicy_ci95,col sep=comma] {figures_data/hap_t_sweep.csv};
\end{axis}
\end{tikzpicture}

\end{minipage}\hfill
\begin{minipage}{0.48\linewidth}
\centering
\begin{tikzpicture}
\begin{axis}[
width=\linewidth,
height=1.35in,
xlabel={$M$},
ylabel={Reward/step},
xmin=1.5,xmax=20.5,
ymin=18,ymax=46,
xtick={2,5,10,20},
legend to name=hap-regime-legend,
legend columns=-1,
legend style={draw=none,font=\scriptsize,
  /tikz/every even column/.append style={column sep=0.8em}},
tick label style={font=\scriptsize},
label style={font=\scriptsize},
grid=both,
grid style={gray!15}
]
\addplot+[mark=o,thick,error bars/.cd,y dir=both,y explicit]
table[x=M,y=single_phase_tau,y error=single_phase_tau_ci95,col sep=comma] {figures_data/hap_m_sweep.csv};
\addlegendentry{Single+clock}
\addplot+[mark=square,thick,error bars/.cd,y dir=both,y explicit]
table[x=M,y=multihead,y error=multihead_ci95,col sep=comma] {figures_data/hap_m_sweep.csv};
\addlegendentry{Multihead}
\addplot+[mark=triangle,thick,error bars/.cd,y dir=both,y explicit]
table[x=M,y=multipolicy,y error=multipolicy_ci95,col sep=comma] {figures_data/hap_m_sweep.csv};
\addlegendentry{Multipolicy}
\end{axis}
\end{tikzpicture}
\end{minipage}

\vspace{0.2em}
\ref{hap-regime-legend}
\caption{Main HAPS regime evidence. Left: at fixed $M=20$, phase-clock sharing leads at
$T=100$, multihead and multipolicy are close at $T=200$, and multipolicy leads
at $T\in\{500,2000\}$. Right: increasing the number of phases preserves the specialist advantage at $T=500$ and exposes the degradation of full sharing at $M=20$. This is consistent with, but does not prove, amortization of boundary costs.}
\label{fig:hap-regime}
\end{figure}

\subsection{H1: Longer Phases Favour Specialization with multi-policy}
In the HAPS environment, the transient regime is governed by platform relocation and by user persistence across phase boundaries. A direct transit takes at most $100$ steps when both spatial coordinates of the HAPS change maximally, and mean user lifetimes of $5$ to $25$ steps are much shorter, so we take $L_m\approx 100$. Reward differences arise mainly when one or two platforms fail to reach a high-density zone, so any difference between two policies should be in the same order. Following this observation, $G_m\approx K_m$ reduces \eqref{eq:phasewise-predictive-threshold} to $\lambda_m<1/2$, and with $\gamma\approx 1$ and $\lambda_m\approx L_m/T_m$ this predicts a crossover at $T\approx 2L_m=200$. We treat $200$ as a diagnostic horizon rather than a strict value of $T_m$, since arrivals are stochastic and learned trajectories need not be straight. The prediction is that shared policies win below this dwell time, where the transient dominates, and specialized policies above it, where the quasi-stationary regime does. Fig.~\ref{fig:hap-regime} (left), at $M=20$, shows exactly this: multipolicy reward increases with $T$, Single+Clock decreases, the two curves cross near $T=200$, and multihead is largely insensitive to $T$, confirming its position as an in-between solution. PhasePendulum reproduces this pattern (Table~\ref{tab:pendulum-m-t-controls}). For both $M=5$ and $M=10$, the single policy leads at $T\in\{100,200\}$ while multi-policy attains the best mean at $T=500$; multi-head performs very close to multi-policy and even surpasses in some cases. Longer dwell times therefore favour specialization wherever a quasi-stationary regime is physically reachable.

\subsection{H2: Behavioural Diversity Increases the Burden on Single Shared Policy}
Heterogeneity can be increased by adding distinct phases, which leaves each specialist's problem unchanged while requiring the shared policy to represent more behaviours. Fig.~\ref{fig:hap-regime}(right) reports the performance of the HAPS for different phase counts at $T=500$: Single+Clock degrades as $M$ grows while multi-policy is flat, so the effect is a degradation of complete sharing rather than a benefit of more phases. The snapshots in Fig.~\ref{fig:hap-selected-snapshots} in the appendix show the mechanism, with the shared policy failing to reach the hotspots of some phases. Widening the shared network or using four, six or eight layers does not close the gap, and a network with eight layers performs worse than one with six layers (Table~\ref{tab:hap-capacity-depth}). The shared policy therefore does not fail for lack of parameters: adding them does not help and can hurt. Its limitation lies in training rather than in representational capacity, as Prop.~\ref{prop:policy-class-equivalence} implies, so the remaining gap is an architectural advantage for multipolicy. PhasePendulum varies behavioral diversity more directly, setting target angles to $g_m(h)=h\,g_m^{\max}$ (App.~\ref{app:pendulum-details}). Table~\ref{tab:het} reports $h$ from $0$ to $1$. The single policy is best at $h=0$, where all phases are identical, and at $h=0.25$; from $h=0.5$ the ordering reverses, with multihead first and multipolicy second. Heterogeneity therefore widens the shared learner's gap as in \eqref{eq:phase-local-gain-accounting}, far enough to reverse the advantage of sharing. In PhaseGridWorld (Fig.~\ref{fig:gridworld-boundary-intervention}), where handoff costs are negligible, shared and specialized architectures perform comparably when adjacent phases share an optimal policy, while multipolicy clearly outperforms the shared baseline when they require distinct ones. Heterogeneity alone therefore suffices to separate the architectures.

We test whether the single-policy degradation observed above is caused by destructive shared updates in Table~\ref{tab:corrected-gradient-diagnostic} and Fig.~\ref{fig:grad-main}. The orthogonal-null alignment statistic $A_g$ presented (App.~\ref{app:local-optimization}) remains close to one for any $M$. Thus the main reason behind learning degradation is not the destructive interference. It explains why gradient surgery solutions such as PCGrad/CAGrad did not close the gap with multipolicy in our environments, as gradient destruction is not the biggest burden (Table~\ref{tab:hap-main}).
The cosine similarity falls from $0.675$ to $0.195$ for complete sharing as $M$ grows from $2$ to $20$ (Table~\ref{tab:corrected-gradient-diagnostic}), close to the $1/\sqrt M$ orthogonal benchmark derived in App.~\ref{app:local-optimization}. 
 
With more distinct phases in the shared policy, the cosine similarity is going to decrease for larger $M$. We call this effect \emph{gradient dilution}. Together with the capacity controls, it shifts the explanation toward finite approximation and shared optimization burden.

\subsection{H3: Specialization Requires Sufficient Phase-Local Data}
Under a fixed total interaction budget $B$, a balanced multipolicy allocates only $B_m\simeq B/M$ samples to specialist $m$. Shared PPO instead updates one model from all $B$ interactions. We call failure to reach a high-return local solution within $B_m$ the \emph{learning starvation}; this concerns finite-budget performance, not optimizer convergence.PhasePendulum at $h=0$ isolates this effect because all ten phases are identical and any performance gap must reflect optimization and sample efficiency rather than representation capacity. With a total budget of $B=5\text{M}$ timesteps, each redundant specialist receives a partitioned allocation of only $B_m \approx 500\text{k}$ interactions. As shown in Fig.~\ref{fig:pendulum-h0-learning-specialists}, Single+clock strongly outperforms Multipolicy because local specialists suffer from severe sample starvation ($B_m < B_{0,m}$ App.~\ref{sec:capacity-budget}), failing to reach local convergence. Fig.~\ref{fig:pendulum-h0-learning} confirms this threshold behavior: Multipolicy requires a total budget scaling to $B \ge 10\text{M}$ for each local allocation to cross $B_{0,m}$ and approach the asymptotic performance level of Single+clock. A single-phase GridWorld calibration provides consistent evidence (Tab.~\ref{tab:gridworld-one-phase-budget}): the clock-aware shared model reaches the target learning threshold within $500\text{k}$ interactions across all three seeds, whereas the multipolicy model requires a total $750\text{k}$ steps to reliably match it. It is important to note the intrinsic advantage of adding clock information to any solution (Sec.~\ref{sec:gridworld-results}). Also, Tab.~\ref{tab:gridworld-specialist-controls} illustrates that increasing the global budget can invert the performance ranking between single and multipolicy models.

\subsection{H4: Handoff Dependence Can Reverse the Ranking}
Studying handoff dependence is challenging in general because transient regimes are rarely cleanly delimited in complex tasks. We address this by using controlled GridWorld environments where the underlying causality is fully understood. Because independent specialists optimize phase-local objectives from local resets, they lack awareness of downstream continuation values, incurring a boundary coordination loss. Full-trajectory shared policies can instead sacrifice immediate phase return to ensure the overall highest return (Prop.~\ref{prop:two-goal-handoff}). 

Fig.~\ref{fig:gridworld-handoff-schematic} formalizes this trade-off in a two-phase GridWorld. Selecting an immediate goal ($g_{00}$, reward $1.0$) leaves the agent trapped behind an impassable wall in Phase 1, whereas selecting a safe goal ($g_{01}$, reward $\rho < 1.0$) preserves Phase 1 reachability. As shown in Fig.~\ref{fig:gridworld-handoff} ($\rho \to 0.9$), Single+clock and Multihead adapt to choose $g_{01}$ in almost all episodes. In contrast, independent specialists (Multipolicy) remain myopic, selecting $g_{00}$ in $90\%$ of episodes and failing Phase 1 entirely. Here, the transient regime length equals the full phase duration ($L_m = T_m$), completely annihilating specialization gains due to uncoordinated local policies.

A second GridWorld experiment (App.~\ref{app:gridworld-details}) evaluates the transient regime across two dimensions: (1) varying phase goals and wall layouts to adjust phase heterogeneity and coordination demands, and (2) comparing physical state carryover against resampling from reachable entry states (Fig.~\ref{fig:gridworld-handoff-topologies}). As shown in Fig.~\ref{fig:gridworld-boundary-intervention}, physical carryover severely penalizes Multipolicy in the irrecoverable Closed Trap ($L_m = T_m$). Resetting phase entry to a common reachable state eliminates boundary costs ($K_m \approx 0$), reversing to a positive gap for Multipolicy. Conversely, in the recoverable Bottleneck ($L_m < T_m$), Multipolicy dominates under both physical carryover and reachable resets.

\section{Conclusion}
We have investigated when phase-structured reinforcement learning should employ a single shared policy versus phase-specific multiple-policies. Given sufficient scheduler information, single and multiple policies have the same representational capability, so any difference in performance must come from the environment and the learning dynamics rather than from representation. The proposed regime decomposition of the phases helps to understand the difference between single and multiple policies under constraints such as limited samples, phase-heterogeneity, and the mutual coupling between phases. We observe that sharing is favoured when phases exhibit behavioral overlap, learning budgets are constrained, or boundary-state coordination consumes a large fraction of the phase duration. Specialization is favored when phases are heterogeneous, per-specialist data volume is sufficient, and the quasi-stationary gain exceeds the handoff cost. Partially shared designs, e.g., multihead networks offer a intermediate solution between these two extremes.

Future work should look beyond the binary choice between single shared and specialized multi policy models. The key challenge lies in designing adaptive architectures that share transferable features, specialize where local optimization is hard. and propagate continuation values across phase boundaries. Extending this framework to unknown phase structures and develop a coordinated multiple policies provide a promising foundation for a new class of phase-structured RL algorithms.
% \section*{Acknowledgments}
% This work was supported by [Sponsor/Grant Name, e.g., the Engineering and Physical Sciences Research Council (EPSRC) under Grant No. XXXX], [Sponsor 2, e.g., European Research Council (ERC)], and [Company/Industry Partner Name]. Guilhem Loussouarn was supported by [Scholarship/Fellowship Name if applicable]. The authors also acknowledge the computational resources provided by [Imperial College London HPC / Cluster Name].
\bibliographystyle{unsrt}
\bibliography{references.bib}

\newpage
\appendix
\section{Appendix}
% \section*{Appendix Overview}
The appendix is organized in four parts. Sec.~\ref{app:supporting-theory} defines the optimization, heterogeneity, capacity, budget, and handoff quantities used by the regime framework, and then assembles the predictive certificate. Sec.~\ref{app:proofs} groups the complete derivations by their role in the argument. Sec.~\ref{app:experiments} specifies the environments, training protocols, and reporting conventions. Sec.~\ref{app:extended-results} contains additional quantitative results and qualitative diagnostics.

\subsection{Supporting Mechanisms and Bounds}
\label{app:supporting-theory}

This section provides the definitions and results used by the different sections of this work. 

%\subsection{Settled Learning: Heterogeneity, Capacity, and Budget}
%\label{sec:heterogeneity}

%Phases can differ in their transition laws, rewards, useful policies, and training updates. Model differences alone do not determine architecture choice: two distinct phase models may require the same behavior. The main text therefore defines $H_m$ through the return lost when behavior suited to another phase is reused in phase $m$. Exact optimal policies are unavailable in our deep-RL domains, so learned-policy cross-evaluation is only a proxy and can include residual learning error \citep{barreto2017successor,barreto2019transfer}.

%Large behavioral heterogeneity can increase the burden on a finite shared network, but it does not guarantee that specialization wins: an expressive shared model may implement an inexpensive phase gate. Conversely, specialists split the sample budget and remain coupled through boundary states. We next separate finite-class capacity from finite-budget learning. Gradient geometry is treated independently in Appendix~\ref{app:local-optimization} because different behaviors need not imply harmful aggregate updates.

\subsubsection{Schedule Sufficiency and Markovian Augmentation}
\label{app:markov-sufficiency}

To model phase transitions formally within the MDP framework, the physical state $s_t \in \mathcal{S}$ must be augmented with scheduler information $z_t \in \mathcal{Z}$. Let $z_t = (\sigma(t), i_t)$, where $\sigma(t) \in \{0, \dots, M-1\}$ is the active phase index and $i_t$ represents auxiliary schedule context (e.g., elapsed phase time or global step index).

\paragraph{Formal Markovian Sufficiency Condition.}
We define the scheduler state $z_t$ as \emph{sufficient} if the joint transition dynamics satisfy:
\begin{equation}
\Pr(s_{t+1} \in A, z_{t+1} \in B \mid s_{0:t}, a_{0:t}, z_{0:t}) = P_{\sigma(t)}(A \mid s_t, a_t) \, q(B \mid z_t),
\label{eq:markov-sufficiency-condition}
\end{equation}
for all measurable sets $A \subseteq \mathcal{S}$ and $B \subseteq \mathcal{Z}$, where $P_m(\cdot \mid s, a)$ is the transition kernel of phase $m$, and $q(\cdot \mid z_t)$ is the exogenous schedule update kernel. When Equation~\eqref{eq:markov-sufficiency-condition} holds, the augmented state $\tilde s_t = (s_t, z_t)$ forms a time-homogeneous Markov Decision Process $\widetilde{\mathcal{M}} = (\mathcal{S} \times \mathcal{Z}, \mathcal{A}, \widetilde P, \widetilde r, \gamma)$.
To preserve sample efficiency, state augmentation must minimize the scheduler state space $\mathcal{Z}$. For periodic or cyclic phase structures, folding global time $t$ into a compact within-phase clock $\tau(t) = t \bmod T_m$ quotient space preserves state recurrence across cycles while satisfying Equation~\eqref{eq:markov-sufficiency-condition}:
\begin{corollary}[Periodic Phase-Clock Augmentation]
\label{cor:phase-clock-markov}
Suppose phases follow a deterministic cycle with phase durations $(T_0, T_1, \dots, T_{M-1})$. Let $T_{\mathrm{period}} = \sum_{m=0}^{M-1} T_m$ denote the total cycle length, and let $S_m = \sum_{k=0}^{m-1} T_k$ denote the entry step offset of phase $m$ within a cycle (with $S_0 = 0$). For global step $t$, let $\bar t = t \bmod T_{\mathrm{period}}$. The phase index $\sigma(t)$ and within-phase clock $\tau(t)$ are uniquely defined by:
\begin{equation}
\sigma(t) = m \quad \text{such that } S_m \le \bar t < S_m + T_m, \qquad \tau(t) = \bar t - S_{\sigma(t)}.
\end{equation}
Then $z_t = (\sigma(t), \tau(t))$ is sufficient. Consequently, the augmented state $(s_t, \sigma(t), \tau(t))$ is Markovian, time-homogeneous, and compact.
\end{corollary}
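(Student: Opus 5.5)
The plan is to verify the sufficiency condition \eqref{eq:markov-sufficiency-condition} directly for $z_t=(\sigma(t),\tau(t))$ and then invoke Prop.~\ref{prop:scheduler-state-markov}. The argument has three parts.

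First, I check that the formulas define $\sigma(t)$ and $\tau(t)$ uniquely. The intervals $[S_m, S_m+T_m)$ for $m=0,\dots,M-1$ partition $\{0,\dots,T_{\mathrm{period}}-1\}$, because $S_{m+1}=S_m+T_m$, $S_0=0$ and $S_{M-1}+T_{M-1}=T_{\mathrm{period}}$. So each $\bar t$ lies in exactly one interval, which fixes $\sigma(t)$, and $\tau(t)\in\{0,\dots,T_{\sigma(t)}-1\}$ follows.

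Second, I show that $z_{t+1}$ is a deterministic function of $z_t$ alone. I define
\[
f(m,\tau)=\begin{cases}(m,\tau+1), & \tau+1<T_m,\\ ((m+1)\bmod M,\,0), & \tau+1=T_m.\end{cases}
\]
The claim is that $z_{t+1}=f(z_t)$ for every $t$. The main obstacle is the case where $\bar t=T_{\mathrm{period}}-1$, so that $\overline{t+1}=0$ wraps around. I handle it by splitting into cases.
\begin{itemize}
\item If $\bar t+1<S_m+T_m$, then $\overline{t+1}=\bar t+1$ lies in the same interval, so the phase is unchanged and the clock increments by one.
\item If $\bar t+1=S_m+T_m$ and $m<M-1$, then $\overline{t+1}=S_{m+1}$, which gives phase $m+1$ with clock $0$.
\item If $m=M-1$, then $\bar t+1=T_{\mathrm{period}}$, so $\overline{t+1}=0=S_0$, which gives phase $0$ with clock $0$.
\end{itemize}
Hence $q(B\mid z)=\mathbf 1\{f(z)\in B\}$, a time-invariant kernel, and it does not depend on $s$, $a$ or the history.

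Third, I factorize the joint law. Because $\sigma$ is deterministic and exogenous, conditioning on the history fixes $t$ up to period equivalence through $z_t$. By Definition~\ref{def:psmdp}, $s_{t+1}\sim p_{\sigma(t)}(\cdot\mid s_t,a_t)$ regardless of earlier history. Since $z_{t+1}=f(z_t)$ almost surely, the joint probability equals $P_{\sigma(t)}(A\mid s_t,a_t)\,\mathbf 1\{f(z_t)\in B\}$. This is exactly \eqref{eq:markov-sufficiency-condition}, with $P_m$ and $q$ independent of $t$.

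Time-homogeneity and the Markov property then follow from Prop.~\ref{prop:scheduler-state-markov}. The augmented transition is $\widetilde P(\cdot\mid (s,z),a)=p_{m}(\cdot\mid s,a)\otimes\delta_{f(z)}$, where $m$ is the first component of $z$, and the augmented reward is $\widetilde r((s,z),a)=r_m(s,a)$. Compactness holds because $\mathcal Z=\{(m,\tau):0\le\tau<T_m\}$ is finite, with $|\mathcal Z|=T_{\mathrm{period}}$, in contrast with the unbounded global clock.
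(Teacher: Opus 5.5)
Your proposal is correct and follows essentially the same route as the paper: the same deterministic update map on $z=(m,\tau)$ (the paper's $F$), the Dirac scheduler kernel $q(B\mid z)=\delta_{F(z)}(B)$ combined with the phase-$\sigma(t)$ physical kernel in the factorization of Prop.~\ref{prop:scheduler-state-markov}, and finiteness $|\mathcal Z|=T_{\mathrm{period}}$ for compactness. Your explicit check that the intervals $[S_m,S_m+T_m)$ partition $\{0,\dots,T_{\mathrm{period}}-1\}$ and your separate treatment of the wrap-around case $\bar t=T_{\mathrm{period}}-1$ fill in details the paper only asserts; like the paper, both rely on every $T_m\ge 1$.
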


If the same phase $m$ occurs multiple times within a single macro-cycle with different durations or successor phases, the auxiliary schedule context $i_t$ must additionally include the macro-cycle phase occurrence index $j$ to maintain unique state identification. Table~\ref{tab:periodic} illustrates the unique sequence of pairs $(\sigma(t), \tau(t))$ for a periodic 4-phase schedule.

\begin{table}[h]
\centering
\caption{\textbf{Periodic phase schedule and unique state augmentation pairs $(\sigma(t), \tau(t))$.}}
\label{tab:periodic}
\small
\begin{tabular}{l|ccccccccccc|cccc}
\toprule
$t$ & 0 & 1 & 2 & 3 & 4 & 5 & 6 & 7 & 8 & 9 & 10 & 11 & 12 & 13 & $\dots$ \\
\midrule
$m = \sigma(t)$ & 0 & 0 & 1 & 1 & 1 & 2 & 2 & 3 & 3 & 3 & 3 & 0 & 0 & 1 & $\dots$ \\
$\tau(t)$ & 0 & 1 & 0 & 1 & 2 & 0 & 1 & 0 & 1 & 2 & 3 & 0 & 1 & 0 & $\dots$ \\
\bottomrule
\end{tabular}
\end{table}

The formal proof of Corollary~\ref{cor:phase-clock-markov} is provided in Appendix~\ref{app:proofs}

\subsubsection{A Transient--Quasi-Stationary View}
\label{sec:transient-stationary}

We now split each phase into an early transient period and a later settled quasi-stationary period. Let $p_{m}^{\pi}(t)$ be the state-transition probabilities induced by policy $\pi$ at time step $t$. Let $\mu_{m,j}^{\pi}$ be the entry distribution at occurrence $j$ produced by the earlier policies, and let $\rho_m^{\rm ref}$ be a possible near-invariant law. Let $W_1$ be the Wasserstein-1 distance that is used to measure how far the current distribution is from the settled one in units of distance on the state space. For $\varepsilon>0$, define the distance at time step $t$ inside the phase $m$
\begin{equation}
d_{m,j}^{\pi}(t)=
\Wone\!\left(
\mu_{m,j}^{\pi}p_{m}^{\pi}(t),
\rho_m^{\rm ref}p_{m}^{\pi}(t)
\right).
\label{eq:recovery-discrepancy}
\end{equation}
Let us define the \emph{$\varepsilon$-occurrence-recovery time} $\ell_{m,j}^\pi(\varepsilon)$ as the first step after which this distance remains at or below  $\varepsilon>0$ for all later steps such that 
\begin{equation}
\ell_{m,j}^\pi(\varepsilon)=
\inf\left\{\ell\in\{0,\ldots,T_{m,j}\}:
d_{m,j}^{\pi}(t)\leq\varepsilon
\text{ for all }t=\ell,\ldots,T_{m,j}\right\}.
\label{eq:recovery-time}
\end{equation}

If no such $\ell$ exists, we set $\ell_m^\pi(\varepsilon)=T_{m,j}$. The $\varepsilon$-occurrence-recovery time depends on both the policy and the entry law, and therefore is not a fixed property of the phase. 

For any comparison window $0\leq L_{m,j}\leq T_{m,j}$, the phase-local return  $J_{m,j}(\pi;\mu_{m,j}^\pi)$ for phase $m$ is given by
\begin{equation}
\begin{aligned}
    J_{m,j}(\pi;\mu_{m,j}^\pi)
=J_{m,j}^{\rm tr}(\pi;L_{m,j})+&J_{m,j}^{\rm qs}(\pi;L_{m,j}),\\ \text{where, }
J_{m,j}^{\rm tr}(\pi;L_{m,j})
=\E\!\left[\sum_{j=0}^{L_{m-1,j}}\gamma^j r_{m,j}(s_j,a_j)\right],& \text{ and }
J_{m,j}^{\rm qs}(\pi;L_{m,j})
=\E\!\left[\sum_{j=L_{m,j}}^{T_{m-1,j}}\gamma^j r_{m,j}(s_j,a_j)\right].
\label{eq:transient-return-split}
\end{aligned}
\end{equation}

Let $\pi_m^{\rm sp}$ and $\pi^{\rm sh}$ be the learned specialized and shared policies, each evaluated under the cyclic entry law it induces, so that the differences in handoff are included in the comparison. For a split point $L_{m,j}$ between transient and quasi-stationary phase inside phase $m$ at occurrence $j$, define the average regime differentials
\begin{equation}
G_{m,j}=\bar J_{m,j,\mathrm{sp}}^{\rm qs}
-\bar J_{m,j,\mathrm{sh}}^{\rm qs},
\qquad
K_{m,j}=\bar J_{m,j,\mathrm{sh}}^{\rm tr}
-\bar J_{m,j,\mathrm{sp}}^{\rm tr}.
\label{eq:occurrence-regime-terms}
\end{equation}
 
For occurrence $(m,j)$, let $Z_{m,j}=\sum_{t=0}^{T_{m,j}-1}\gamma^t$ and
\begin{equation}
\lambda_{m,j}=
\frac{\sum_{t=0}^{L_{m,j}-1}\gamma^t}{Z_{m,j}}.
\label{eq:occurrence-transient-exposure}
\end{equation}

Decomposing the normalized return difference $\bar J_{m,j,\mathrm{sp}} - \bar J_{m,j,\mathrm{sh}}$ over occurrence $(m,j)$ yields:
\begin{align}
\bar J_{m,j,\mathrm{sp}} - \bar J_{m,j,\mathrm{sh}} &= (1 - \lambda_{m,j})\left( \bar J_{m,j,\mathrm{sp}}^{\mathrm{qs}} - \bar J_{m,j,\mathrm{sh}}^{\mathrm{qs}} \right) + \lambda_{m,j}\left( \bar J_{m,j,\mathrm{sp}}^{\mathrm{tr}} - \bar J_{m,j,\mathrm{sh}}^{\mathrm{tr}} \right) \nonumber \\
&= (1 - \lambda_{m,j}) G_{m,j} - \lambda_{m,j} K_{m,j},
\end{align}

 Let $\omega_{m,j} \ge 0$ denote the normalized evaluation weight assigned to occurrence $(m,j)$, with $\sum_{m,j} \omega_{m,j} = 1$. Multiplying by $\omega_{m,j}$ and summing over all phase occurrences gives:
\begin{equation}
\Delta\bar J = \bar J(\widehat\pi^{\mathrm{sp}}) - \bar J(\widehat\pi^{\mathrm{sh}}) = \sum_{m=0}^{M-1} \sum_{j} \omega_{m,j} \left[ (1 - \lambda_{m,j}) G_{m,j} - \lambda_{m,j} K_{m,j} \right].
\end{equation}

For compact phase-level reporting, let
$\omega_m=\sum_{j\in\mathcal O_m}\omega_{m,j}$ and define
\begin{align}
\lambda_m
&=\frac{1}{\omega_m}
\sum_{j\in\mathcal O_m}\omega_{m,j}\lambda_{m,j},\nonumber\\
G_m
&=\frac{\sum_j\omega_{m,j}(1-\lambda_{m,j})G_{m,j}}
{\omega_m(1-\lambda_m)},\qquad
K_m
=\frac{\sum_j\omega_{m,j}\lambda_{m,j}K_{m,j}}
{\omega_m\lambda_m}.
\label{eq:phase-level-regime-aggregates}
\end{align}
The values of $K_m$ and $G_m$ are set to zero when the duration of the transient and quasi-stationary periods is zero, respectively. Now we can write the advantage of using learned specialized policies over a learned single shared policy as \begin{equation}
\Delta\bar J
:=
\bar J(\widehat\pi^{\rm sp})
-
\bar J(\widehat\pi^{\rm sh})
=\sum_m\omega_m[(1-\lambda_m)G_m-\lambda_mK_m].
\end{equation}

This compares learned shared and learned specialized policies, $\widehat\pi^{\rm sh}$ and $\widehat\pi^{\rm sp}$, respectively, not the best policies in unrestricted classes, i.e., $\pi^{\rm sh}$ and $\pi^{\rm sp}$. This framework yields a simple three-step empirical diagnostic: we measure the performance gap between shared and specialized models across phase age, measure the decay rate of the entry-state distribution post-boundary, and test whether recovery times remain invariant to changes in $T_m$. Ultimately, longer phase horizons favor specialists only when they expose a settled quasi-stationary region where local gains dominate ($G_m > 0$); phase length alone does not guarantee a specialist advantage.

\subsubsection{Hand off Costs with Two Goals}
\label{sec:two-goal-handoff}
\begin{proposition}[Hand off threshold for two-goal problem]
\label{prop:two-goal-handoff}
Consider two consecutive phases. Phase \(0\) has duration \(T_0\) and offers two possible choices of goals: $i \in \{00, 01\}$ one with a higher reward than other, i.e., \(g_{00}\) gives the larger phase-\(0\) reward, while \(g_{01}\) produces a more favourable entry state for phase \(1\).

Let \(F_i(s)\) be the expected discounted return collected during phase \(0\), starting from state \(s\), under goal \(i\). Let \(\mu_i\) be the phase-\(1\) entry-state distribution induced by this choice of goal. Define the continuation value:
\begin{equation}
V_1^{\mathrm{cont}}(\mu_i) = \mathbb{E}_{s_1 \sim \mu_i} \left[V_1^{\mathrm{cont}}(s_1)\right].
\end{equation}

Assume that a phase-local specialized policy optimizes only the phase-\(0\) return \(F_i(s)\), whereas a coordinated full-trajectory controller also accounts for the continuation value in phase \(1\).

If $F_{00}(s) > F_{01}(s)$ then the phase-local specialized policy prefers the immediate-reward goal \(g_{00}\). By contrast, the full-trajectory controller prefers the safe-handoff goal \(g_{01}\) if and only if
\begin{equation}
\gamma^{T_0} \left[ V_1^{\mathrm{cont}}(\mu_{01}) - V_1^{\mathrm{cont}}(\mu_{00}) \right] > F_{00}(s) - F_{01}(s).
\label{eq:two-goal-handoff-threshold}
\end{equation}

When both inequalities hold, the return loss caused by the uncoordinated phase-local choice is
\begin{equation}
\Delta_{\mathrm{coord}}(s) = \gamma^{T_0} \left[ V_1^{\mathrm{cont}}(\mu_{01}) - V_1^{\mathrm{cont}}(\mu_{00}) \right] - \left[ F_{00}(s) - F_{01}(s) \right] > 0.
\label{eq:two-goal-coordination-gap}
\end{equation}
\end{proposition}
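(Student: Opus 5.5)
The plan is to write the total discounted return of each goal choice as a phase-0 part plus a discounted phase-1 continuation. Then each claim follows by comparing two scalar quantities. Fix the phase-0 entry state $s$. For goal $i\in\{00,01\}$, the full-trajectory objective from $s$ is
\begin{equation*}
\Phi_i(s) = F_i(s) + \gamma^{T_0}\, V_1^{\mathrm{cont}}(\mu_i).
\end{equation*}
This identity comes from splitting the sum $\sum_t \gamma^t r_{\sigma(t)}$ in Definition~\ref{def:psmdp} at the boundary $t=T_0$. The first $T_0$ terms give $F_i(s)$ by definition. For the remaining terms, I would apply the tower property, conditioning on the state $s_{T_0}$, whose law is $\mu_i$ by assumption. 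Since the schedule is deterministic and the augmented state is Markov (Prop.~\ref{prop:scheduler-state-markov}), the conditional return after $T_0$ equals $\gamma^{T_0} V_1^{\mathrm{cont}}(s_{T_0})$. Here $V_1^{\mathrm{cont}}$ is understood as the value of the fixed continuation behaviour in phase 1 and beyond, taken the same for both choices. Averaging over $\mu_i$ then gives the displayed formula.

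With this decomposition, the three claims are short. The phase-local specialist maximizes only $F_i(s)$, so $F_{00}(s)>F_{01}(s)$ immediately implies that it selects $g_{00}$. The coordinated controller maximizes $\Phi_i(s)$, so it strictly prefers $g_{01}$ exactly when $\Phi_{01}(s)>\Phi_{00}(s)$. Rearranging this inequality gives \eqref{eq:two-goal-handoff-threshold}, and each step is reversible, so we get the stated if-and-only-if. When both inequalities hold, the specialist attains $\Phi_{00}(s)$ while the coordinated optimum attains $\Phi_{01}(s)$. The coordination loss is therefore $\Delta_{\mathrm{coord}}(s)=\Phi_{01}(s)-\Phi_{00}(s)$, which expands to \eqref{eq:two-goal-coordination-gap}. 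It is positive precisely by \eqref{eq:two-goal-handoff-threshold}.

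The main obstacle is the decomposition step. It needs the continuation value $V_1^{\mathrm{cont}}$ to depend on the past only through $s_{T_0}$, and to be the same function for both goal choices. I would state this explicitly as an interpretive assumption: phase-1 behaviour is fixed, for example optimal in phase 1, and independent of which goal was chosen beyond its effect on the entry state. Without that assumption the comparison is not well defined. I would also note that the discount factor is exactly $\gamma^{T_0}$ only if the phase-0 return is indexed from local time $0$ to $T_0-1$. If the phase starts at global time $t_0$, both $\Phi_i$ pick up a common factor $\gamma^{t_0}$, which cancels in every comparison.
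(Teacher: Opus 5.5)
Your proposal is correct and follows the paper's proof essentially step for step. The paper likewise defines $Q_i(s)=F_i(s)+\gamma^{T_0}V_1^{\mathrm{cont}}(\mu_i)$, reads off the specialist's choice from $F_{00}(s)>F_{01}(s)$, rearranges $Q_{01}(s)>Q_{00}(s)$ into the threshold, and sets $\Delta_{\mathrm{coord}}(s)=Q_{01}(s)-Q_{00}(s)$; your tower-property justification of the boundary split and your explicit assumption that $V_1^{\mathrm{cont}}$ is one fixed function of the entry state for both goal choices make precise what the paper simply asserts.
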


\subsubsection{Sample Allocation and Sample Thresholds}
\label{sec:capacity-budget}
% {sec:capacity-budget}

Let $\bar J_m^{\mathrm{qs}}(\widehat\pi_{m, b})$ denote the expected settled return of specialist $m$ trained on $b$ local interactions, and let $\bar J_m^{\mathrm{qs}}(\widehat\pi_{m, \infty})$ represent its asymptotic performance under unlimited samples ($b \to \infty$). %In the asymptotic limit, phase $m$ achieves its maximum specialization gain $G_{m, \infty}^{\mathrm{qs}} > 0$.

Because an ensemble of $M$ specialists partitions a total budget $B$ into local allocations $B_m = \zeta_m B$ (where $\sum \zeta_m = 1$, and $\zeta_m = 1/M$ for balanced sampling), each specialist receives significantly fewer environment steps than a shared model trained on the full budget $B$.

To guarantee that local learning starvation does not destroy the settled specialization gain, we define the local sample threshold $B_{0, m}(\delta)$ as the minimum allocation required for specialist $m$ to recover its asymptotic return within a tolerance $\delta > 0$:
\begin{equation}
B_{0, m}(\delta) = \inf \left\{ b \in \mathbb{N} : \bar J_m^{\mathrm{qs}}(\widehat\pi_{m, \infty}) - \bar J_m^{\mathrm{qs}}(\widehat\pi_{m, b}) \le \delta \right\}.
\label{eq:simple-sample-threshold}
\end{equation}

\begin{proposition}[Sufficient Budget for Positive Specialization Gain]
\label{prop:adequate-specialist-budget}
Assume that under unlimited samples, specialist $m$ achieves a net specialization advantage with the quasi-stationary gain $G_{m, \infty} > \delta$. If the total interaction budget $B$ satisfies:
\begin{equation}
B \ge \max_{m} \frac{B_{0, m}(\delta)}{\zeta_m},
\label{eq:simple-budget-condition}
\end{equation}
then every specialist receives sufficient samples ($B_m \ge B_{0, m}(\delta)$) to maintain a strictly positive local gain:
\begin{equation}
G_{m, \mathrm{ref}}^{\mathrm{qs}}(B_m) \ge G_{m, \infty}^{\mathrm{qs}} - \delta > 0.
\label{eq:positive-gain-guarantee}
\end{equation}
For balanced sampling ($\zeta_m = 1/M$), condition \eqref{eq:simple-budget-condition} simplifies to $B \ge M \max_m B_{0, m}(\delta)$.
\end{proposition}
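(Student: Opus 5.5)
The argument is elementary once the quantities are lined up. It has two steps: convert the budget condition \eqref{eq:simple-budget-condition} into a per-specialist sample guarantee, then convert that guarantee into a bound on the settled gain through the accounting identity \eqref{eq:phase-local-gain-accounting}.

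\textbf{Step 1: from the total budget to each specialist.} Fix any phase $m$. Since $B \ge \max_k B_{0,k}(\delta)/\zeta_k \ge B_{0,m}(\delta)/\zeta_m$ and $\zeta_m>0$, we get $B_m=\zeta_m B \ge B_{0,m}(\delta)$. The threshold in \eqref{eq:simple-sample-threshold} is an infimum, so reaching it does not by itself say that every larger allocation is also within tolerance. For this I will assume, or state explicitly as an assumption, that the expected settled return $b\mapsto \bar J_m^{\mathrm{qs}}(\widehat\pi_{m,b})$ is nondecreasing in $b$. The paper implicitly treats the threshold this way when it speaks of ``crossing'' $B_{0,m}$. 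Monotonicity then gives $\bar J_m^{\mathrm{qs}}(\widehat\pi_{m,\infty})-\bar J_m^{\mathrm{qs}}(\widehat\pi_{m,B_m})\le\delta$. In the language of the learning gaps, this reads $\varepsilon_{m,\mathrm{sp}}(B_m)\le \varepsilon_{m,\mathrm{sp}}(\infty)+\delta$.

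\textbf{Step 2: bounding the gain.} By \eqref{eq:phase-local-gain-accounting}, $G^{\mathrm{qs}}_{m,\mathrm{ref}}(B_m)=\varepsilon_{m,\mathrm{sh}}(B;H_m)-\varepsilon_{m,\mathrm{sp}}(B_m)$. Define the asymptotic gain $G^{\mathrm{qs}}_{m,\infty}=\varepsilon_{m,\mathrm{sh}}(B;H_m)-\varepsilon_{m,\mathrm{sp}}(\infty)$. Here the shared learner is held at its actual budget $B$, and only the specialist's allocation is sent to infinity. This matches how the hypothesis $G_{m,\infty}>\delta$ is phrased. Subtracting the two expressions gives
\[
G^{\mathrm{qs}}_{m,\mathrm{ref}}(B_m)=G^{\mathrm{qs}}_{m,\infty}-\bigl[\varepsilon_{m,\mathrm{sp}}(B_m)-\varepsilon_{m,\mathrm{sp}}(\infty)\bigr]\ge G^{\mathrm{qs}}_{m,\infty}-\delta>0,
\]
where the last inequality uses the hypothesis. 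Since $m$ was arbitrary, the bound holds for every specialist simultaneously. For the balanced case, substituting $\zeta_m=1/M$ turns the condition into $B\ge M\max_m B_{0,m}(\delta)$.

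\textbf{Main obstacle.} The delicate point is Step 1: the infimum definition of $B_{0,m}$ alone does not give the tolerance for all $b\ge B_{0,m}$. I would first try to close this with the monotonicity assumption above. If that seems too strong, the alternative is to redefine the threshold as $\inf\{b_0:\ \text{the tolerance holds for all } b\ge b_0\}$, which makes Step 1 immediate. A secondary point needs one sentence of care: both returns must be evaluated under the same reference law $\mu_m^{\mathrm{ref}}$, so that the difference of gaps equals the difference of returns and $\bar J_m^\star$ cancels.
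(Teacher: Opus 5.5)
Your argument is correct. On the specialist side it follows the paper's skeleton: the budget condition gives $B_m=\zeta_m B\ge B_{0,m}(\delta)$, hence the tolerance at $B_m$. It departs from the paper's proof in two respects.

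\textbf{The infimum step.} The paper simply asserts that ``by definition of the infimum'' the tolerance holds for every $b\ge B_{0,m}(\delta)$. As you point out, this does not follow. The infimum of a nonempty subset of $\mathbb{N}$ is attained, so the tolerance holds at $b=B_{0,m}(\delta)$ itself, but nothing is implied for larger $b$. Your explicit monotonicity assumption, or the upward-closed redefinition of the threshold, repairs a step the paper glosses over.

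\textbf{The meaning of $G_{m,\infty}$.} You read this quantity differently from the paper. The paper takes both learners asymptotic, $G_{m,\infty}=\bar J_m^{\mathrm{qs}}(\widehat\pi_{m,\infty})-\bar J_m^{\mathrm{qs}}(\widehat\pi_{\mathrm{sh},\infty})$. It first reaches $G_{m,\mathrm{ref}}^{\mathrm{qs}}(B_m)\ge \bar J_m^{\mathrm{qs}}(\widehat\pi_{m,\infty})-\delta-\bar J_m^{\mathrm{qs}}(\widehat\pi_{\mathrm{sh},B})$, which is exactly where your chain stops. It then invokes $\bar J_m^{\mathrm{qs}}(\widehat\pi_{\mathrm{sh},B})\le\bar J_m^{\mathrm{qs}}(\widehat\pi_{\mathrm{sh},\infty})$ to finish. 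That final step is itself a monotonicity assumption on the shared learning curve, of the same kind you flag for the specialists, and the paper states it without justification.

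\textbf{What each reading buys.} By holding the shared learner at its actual budget $B$, your bound becomes an exact identity minus the specialist's deficit. It needs no assumption about the shared learner. What your reading gives up is a budget-independent hypothesis. Your condition $G_{m,\infty}>\delta$ depends on $B$, and enlarging $B$ to meet the budget condition also improves the shared learner, which can break the hypothesis. To match the paper's statement literally, add the shared-side inequality as an explicit assumption and append it as a final step.
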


\subsubsection{Gradient Geometry as an Optimization Mechanism Test}
\label{app:local-optimization}

We use phase-restricted gradients to test one potential optimization mechanism driving shared architecture performance. Let $\mathcal L_m(\theta)$ denote a differentiable phase-local objective evaluated at a common parameter vector $\theta$ under a fixed reference data distribution. Given normalized phase weights $w_m \ge 0$ with $\sum_{m=0}^{M-1} w_m = 1$, define:
\begin{equation}
\mathcal L(\theta) = \sum_{m=0}^{M-1} w_m \mathcal L_m(\theta),
\qquad
g_m(\theta) = \nabla_\theta \mathcal L_m(\theta),
\qquad
g(\theta) = \sum_{m=0}^{M-1} w_m g_m(\theta).
\label{eq:phase-local-objectives}
\end{equation}
In balanced diagnostic sweeps, we set $w_m = 1/M$. Under a shared gradient ascent step $\theta^+ = \theta + \eta g(\theta)$, local objective updates admit a first-order Taylor expansion:

\begin{proposition}[First-Order Decomposition of Shared Updates]
\label{prop:shared-phase-update}
For a differentiable local objective $\mathcal L_m$, a shared gradient update $\theta^+ = \theta + \eta g(\theta)$ yields:
\begin{equation}
\mathcal L_m(\theta^+) - \mathcal L_m(\theta) = \eta w_m \|g_m\|^2 + \eta \sum_{k \neq m} w_k \langle g_m, g_k \rangle + o(\eta).
\end{equation}
\end{proposition}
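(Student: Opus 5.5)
The plan is a first-order Taylor expansion of $\mathcal L_m$ along the shared update direction, followed by expanding the inner product $\langle g_m, g\rangle$ using the definition of $g$ in \eqref{eq:phase-local-objectives}. The computation is entirely local at the fixed checkpoint $\theta$, and the reference data distribution is held fixed. As a result, $\mathcal L_m$ is a deterministic differentiable function of $\theta$, and the PPO sampling machinery plays no role.

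\textbf{Step 1.} Set $\theta^+ = \theta + \eta g(\theta)$ and treat $\eta \mapsto \mathcal L_m(\theta+\eta g(\theta))$ as a scalar function of $\eta$ with $g(\theta)$ a fixed vector. Differentiability of $\mathcal L_m$ at $\theta$ gives
\[
\mathcal L_m(\theta^+)-\mathcal L_m(\theta)=\langle \nabla_\theta \mathcal L_m(\theta),\, \eta g(\theta)\rangle + o(\eta\|g(\theta)\|).
\]
Since $\|g(\theta)\|$ is a finite constant independent of $\eta$, the remainder is $o(\eta)$ as $\eta\to 0$.

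\textbf{Step 2.} Substitute $\nabla_\theta\mathcal L_m = g_m$ and $g=\sum_k w_k g_k$. Bilinearity of the inner product then gives
\[
\langle g_m, g\rangle=\sum_k w_k\langle g_m,g_k\rangle .
\]
Isolating the $k=m$ term, which equals $w_m\|g_m\|^2$, yields the stated decomposition. The self-term represents phase $m$'s own progress, and the cross terms $\sum_{k\neq m} w_k\langle g_m,g_k\rangle$ capture transfer or interference from the other phases.

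\textbf{Main obstacle.} The only delicate point is the remainder term. One must state clearly that $o(\eta)$ is meant as $\eta\to0$ with $\theta$ fixed. One must also state that only Fréchet differentiability of $\mathcal L_m$ at $\theta$ is needed. Differentiability of the other $\mathcal L_k$ is used only to define $g$, and no uniformity in $\theta$ is required. If a quantitative bound is desired instead, I would assume $\nabla\mathcal L_m$ is $\beta$-Lipschitz. The descent lemma then replaces $o(\eta)$ with a term bounded in absolute value by $\tfrac{\beta}{2}\eta^2\|g\|^2$, which also makes the link to the gradient-dilution diagnostics in Sec.~\ref{app:local-optimization} explicit.
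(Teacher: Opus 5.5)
Your proof is correct and follows the same route as the paper. Both take a first-order Taylor expansion of $\mathcal L_m$ along the fixed direction $g(\theta)$, then expand $\langle g_m, g\rangle$ bilinearly and isolate the $k=m$ term. Your extra care about the remainder ($o(\eta)$ taken as $\eta\to 0$ at fixed $\theta$, and the optional $\tfrac{\beta}{2}\eta^2\|g\|^2$ bound under a Lipschitz gradient) is a sound refinement, and nothing needed for the statement is missing.
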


The first term represents phase $m$'s self-improvement, while the second term captures cross-phase transfer. A shared update is locally destructive to phase $m$ if and only if the aggregate inner product is strictly negative: $\langle g_m, g(\theta) \rangle < 0$. Negative pairwise inner products $\langle g_m, g_k \rangle < 0$ alone do not imply local objective degradation.

To measure aggregate directional alignment without introducing mechanical scaling artifacts with respect to phase count $M$, we introduce the orthogonal-null normalized statistics:
\begin{equation}
A_g = \frac{\|\sum_{m=0}^{M-1} w_m g_m\|^2}{\sum_{m=0}^{M-1} w_m^2 \|g_m\|^2},
\qquad 
C_g = 1 - A_g.
\label{eq:aggregate-alignment}
\end{equation}
Under mutually orthogonal phase gradients ($\langle g_m, g_k \rangle = 0$ for $k \neq m$), $A_g = 1$ identically for any phase count $M$. A positive conflict statistic $C_g > 0$ indicates that negative cross-phase terms dominate the aggregate update.

Phase-to-aggregate cosine similarity also exhibits a fundamental orthogonal baseline, as formalized below.

\begin{proposition}[Orthogonal Gradient Alignment Scaling]
\label{prop:orthogonal-cosine-proof}
Let $g_0, \dots, g_{M-1} \in \mathbb{R}^d$ be non-zero mutually orthogonal phase gradients ($\langle g_m, g_k \rangle = 0$ for all $m \neq k$) and let $g = \frac{1}{M} \sum_{k=0}^{M-1} g_k$ denote the balanced aggregate update ($w_m = 1/M$). If all gradients share equal norms $\|g_m\| = C > 0$, then:
\begin{equation}
\cos(g_m, g) = \frac{1}{\sqrt M}.
\end{equation}
More generally, for arbitrary gradient norms $\|g_m\|$, the orthogonal alignment baseline scales as:
\begin{equation}
\cos(g_m, g) = \frac{\|g_m\|}{\sqrt{\sum_{k=0}^{M-1} \|g_k\|^2}}.
\end{equation}
\end{proposition}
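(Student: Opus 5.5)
The plan is to compute the two inner products that define the cosine directly and to use orthogonality to collapse every cross term. With $w_m=1/M$ the aggregate is $g=\frac1M\sum_k g_k$, and
\[
\cos(g_m,g)=\frac{\langle g_m,g\rangle}{\|g_m\|\,\|g\|}.
\]
I would treat the general statement with arbitrary norms first. The equal-norm case $\cos(g_m,g)=1/\sqrt M$ then follows by substituting $\|g_k\|=C$.

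\textbf{Numerator.} By bilinearity,
\[
\langle g_m,g\rangle=\frac1M\sum_k\langle g_m,g_k\rangle .
\]
Every term with $k\neq m$ vanishes by the mutual orthogonality hypothesis. This leaves $\langle g_m,g\rangle=\frac1M\|g_m\|^2$, which is strictly positive because $g_m\neq 0$. This is the same computation as the first-order decomposition in Prop.~\ref{prop:shared-phase-update} with the cross-phase transfer term equal to zero.

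\textbf{Denominator.} Expand
\[
\|g\|^2=\frac1{M^2}\sum_{k,l}\langle g_k,g_l\rangle .
\]
Orthogonality removes the off-diagonal terms, and this is the Pythagorean identity: $\|g\|^2=\frac1{M^2}\sum_k\|g_k\|^2$. Equivalently, $A_g=1$ in \eqref{eq:aggregate-alignment}. Hence $\|g\|=\frac1M\sqrt{\sum_k\|g_k\|^2}$. This is nonzero since every $g_k\neq 0$, so the cosine is well defined.

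\textbf{Combination.} Dividing gives
\[
\cos(g_m,g)=\frac{\frac1M\|g_m\|^2}{\|g_m\|\cdot\frac1M\sqrt{\sum_k\|g_k\|^2}}=\frac{\|g_m\|}{\sqrt{\sum_k\|g_k\|^2}} .
\]
The factors of $1/M$ cancel, so the general formula does not depend on the weighting constant. Setting $\|g_k\|=C$ for all $k$ yields $C/\sqrt{MC^2}=1/\sqrt M$.

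There is no real obstacle here. The only points needing care are checking that both $\|g_m\|$ and $\|g\|$ are nonzero, so the cosine is defined, and noting that the result holds with a nonnegative cosine (in fact positive) rather than only in absolute value. I would also remark that the general formula remains valid for any uniform weighting, because the weights cancel.
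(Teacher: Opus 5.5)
Your proposal is correct and matches the paper's proof: orthogonality removes the cross terms in both $\langle g_m, g\rangle$ and $\|g\|^2$, and the $1/M$ factors cancel. The only difference is order, which is immaterial: the paper does the equal-norm case first and then states the generalization, whereas you derive the general formula and then substitute $\|g_k\| = C$.
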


Therefore, a decay rate of $1/\sqrt M$ in cosine similarity indicates that the shared update vector is being spread across orthogonal task directions—a phenomenon we term \textbf{gradient dilution}—rather than experiencing destructive multi-task interference \citep{bengio2020interference}.

Empirically, we differentiate the phase-restricted clipped PPO actor surrogate at commonly trained checkpoints using global advantage normalization. As reported in Appendix~\ref{app:HAP-Res}, empirical evaluation rejects destructive interference in HAPS: $A_g \approx 1.0$, and fewer than $2\%$ of phases exhibit $\langle g_m, g \rangle < 0$. The observed drop in cosine alignment closely tracks $1/\sqrt M$, confirming gradient dilution as the primary optimization bottleneck.

\subsection{Proofs and Derivations}
\label{app:proofs}

This appendix provides complete formal proofs for all theoretical propositions and identities presented in the main text and Appendix~\ref{app:supporting-theory}. The notation is inherited directly from the corresponding statements.

% ============================================================
% STRUCTURAL RESULTS
% ============================================================
\subsubsection{Structural Results}

% \subsubsection{Scheduler-State Markovization (Proposition~\ref{prop:scheduler-state-markov})}

\textbf{Proposition~\ref{prop:scheduler-state-markov}} (Scheduler-state augmentation) \textit{If $z_t$ is sufficient, then the augmented state $\widetilde{s}_t=(s_t,z_t)$ defines a time-homogeneous MDP and recovers Markovianity.}

\begin{proof}
Let $\mathcal{H}_t = (s_0, a_0, z_0, s_1, a_1, z_1, \dots, s_t, a_t, z_t)$ denote the complete execution trajectory history up to time step $t$. To prove that the augmented state $\tilde s_t = (s_t, z_t) \in \mathcal{S} \times \mathcal{Z}$ defines a time-homogeneous Markov Decision Process (MDP), we must establish three distinct properties:
\begin{enumerate}
    \item \textbf{Markovian Memorylessness:} $\Pr(\tilde s_{t+1} \in C \mid \mathcal{H}_t) = \Pr(\tilde s_{t+1} \in C \mid \tilde s_t, a_t)$.
    \item \textbf{Time-Homogeneity:} The transition probability distribution depends on $\tilde s_t$ and $a_t$, but has no explicit dependence on the step index $t$.
    \item \textbf{Reward Consistency:} Expected single-step rewards depend solely on $(\tilde s_t, a_t)$.
\end{enumerate}

\paragraph{Step 1: Proving the Markov Memoryless Property.}
Let $C = A \times B$ be any product measurable set in the augmented state space $\mathcal{S} \times \mathcal{Z}$, where $A \subseteq \mathcal{S}$ and $B \subseteq \mathcal{Z}$. By the definition of conditional probability, we factorize the joint transition of physical state $s_{t+1}$ and phase scheduler state $z_{t+1}$ given complete history $\mathcal{H}_t$:
\begin{equation}
\begin{aligned}
\Pr(\tilde s_{t+1} \in A \times B \mid \mathcal{H}_t) &= \Pr(s_{t+1} \in A, \, z_{t+1} \in B \mid \mathcal{H}_t) \\ &= \Pr(s_{t+1} \in A \mid \mathcal{H}_t, z_{t+1} \in B) \cdot \Pr(z_{t+1} \in B \mid \mathcal{H}_t).
\label{eq:joint-factorization-step1}
\end{aligned}
\end{equation}

By the definition of \emph{scheduler state sufficiency}, the exogenous scheduler update law is conditionally independent of past physical states and actions given $z_t$:
\begin{equation}
\Pr(z_{t+1} \in B \mid \mathcal{H}_t) = q(B \mid z_t).
\label{eq:scheduler-independence}
\end{equation}
Similarly, conditioned on the active phase index $\sigma(t) = h(z_t)$, the physical transition law depends only on the current physical state $s_t$ and action $a_t$:
\begin{equation}
\Pr(s_{t+1} \in A \mid \mathcal{H}_t, z_{t+1} \in B) = p_{h(z_t)}(A \mid s_t, a_t).
\label{eq:physical-independence}
\end{equation}
Substituting \ref{eq:scheduler-independence} and \ref{eq:physical-independence} into \ref{eq:joint-factorization-step1} yields:
\begin{equation}
\Pr(\tilde s_{t+1} \in A \times B \mid \mathcal{H}_t) = p_{h(z_t)}(A \mid s_t, a_t) \, q(B \mid z_t).
\label{eq:final-markov-kernel}
\end{equation}
Because the right-hand side of \ref{eq:final-markov-kernel} depends strictly on $\tilde s_t = (s_t, z_t)$ and $a_t$, the historical terms \\$(s_{0:t-1}, a_{0:t-1}, z_{0:t-1})$ drop out entirely, proving the Markov property.

\paragraph{Step 2: Proving Time-Homogeneity.}
Notice that neither the phase-local transition kernel $p_{h(z_t)}(\cdot \mid s_t, a_t)$ nor the scheduler update kernel $q(\cdot \mid z_t)$ contains an explicit parameter $t$. The time dependency in the environment is entirely localized inside $z_t$. Consequently, for any two step indices $t_1 \neq t_2$, if $\tilde s_{t_1} = \tilde s_{t_2} = (s, z)$ and $a_{t_1} = a_{t_2} = a$, then:
\begin{equation}
\begin{aligned}
    \Pr(\tilde s_{t_1 + 1} \in C \mid \tilde s_{t_1} = (s,z), a_{t_1} = a) &= \Pr(\tilde s_{t_2 + 1} \in C \mid \tilde s_{t_2} = (s,z), a_{t_2} = a) \\&= \widetilde P(C \mid (s,z), a),
\end{aligned}
\end{equation}
where $\widetilde P(A \times B \mid (s,z), a) := p_{h(z)}(A \mid s, a) \, q(B \mid z)$ defines a time-invariant transition kernel on $\mathcal{S} \times \mathcal{Z}$.

\paragraph{Step 3: Reward Consistency.}
The expected single-step reward in phase $m = h(z_t)$ is given by $r_m(s_t, a_t)$. Defining the augmented reward function $\tilde r(\tilde s_t, a_t):= r_{h(z_t)}(s_t, a_t)$, we see that $\tilde r: (\mathcal{S} \times \mathcal{Z}) \times \mathcal{A} \to \mathbb{R}$ is a well-defined, stationary function on the augmented state space.

Combining Steps 1, 2, and 3 establishes that $\widetilde{\mathcal{M}} = (\mathcal{S} \times \mathcal{Z}, \mathcal{A}, \widetilde P, \widetilde r, \gamma)$ is a formally valid, time-homogeneous Markov Decision Process.
\end{proof}

% \subsubsection{Proof of Corollary~\ref{cor:phase-clock-markov} (Periodic Phase-Clock Augmentation)}
% \label{app:proof-cor-clock}
\textbf{Corollary \ref{cor:phase-clock-markov}} (Periodic Phase-Clock Augmentation) \textit{Suppose phases follow a deterministic cycle with phase durations $(T_0, T_1, \dots, T_{M-1})$. Let $T_{\mathrm{period}} = \sum_{m=0}^{M-1} T_m$ denote the total cycle length, and let $S_m = \sum_{k=0}^{m-1} T_k$ denote the entry step offset of phase $m$ within a cycle (with $S_0 = 0$). For global step $t$, let $\bar t = t \bmod T_{\mathrm{period}}$. The phase index $\sigma(t)$ and within-phase clock $\tau(t)$ are uniquely defined by:
\begin{equation}
\sigma(t) = m \quad \text{such that } S_m \le \bar t < S_m + T_m, \qquad \tau(t) = \bar t - S_{\sigma(t)}.
\end{equation}
Then $z_t = (\sigma(t), \tau(t))$ is sufficient. Consequently, the augmented state $(s_t, \sigma(t), \tau(t))$ is Markovian, time-homogeneous, and compact.}

\begin{proof}
Let $(T_0, T_1, \dots, T_{M-1})$ be a sequence of positive integer phase durations. The total cycle period is $T_{\mathrm{period}} = \sum_{m=0}^{M-1} T_m$, and the cumulative entry offset for phase $m$ within a cycle is $S_m = \sum_{k=0}^{m-1} T_k$, with $S_0 = 0$. 

By definition of Corollary~\ref{cor:phase-clock-markov}, for any global step index $t \in \mathbb{N}_0$, let $\bar t = t \bmod T_{\mathrm{period}} \in \{0, \dots, T_{\mathrm{period}} - 1\}$. The phase index $\sigma(t) \in \{0, \dots, M-1\}$ and within-phase clock $\tau(t) \in \{0, \dots, T_{\sigma(t)} - 1\}$ are uniquely determined by:
\begin{equation}
\sigma(t) = m \iff S_m \le \bar t < S_m + T_m, \qquad \tau(t) = \bar t - S_{\sigma(t)}.
\label{eq:clock-offset-definition}
\end{equation}

\paragraph{Step 1: Deterministic One-Step Update Operator $F$.}
We now evaluate the one-step temporal evolution of $z_t = (\sigma(t), \tau(t))$ from step $t$ to step $t+1$. Consider two cases based on the value of the within-phase clock $\tau(t)$:

\begin{enumerate}
    \item \textbf{Intra-Phase Step ($\tau(t) < T_{\sigma(t)} - 1$):} The step $t+1$ remains inside the same phase $m = \sigma(t)$. The within-phase clock advances by 1, yielding:
    \begin{equation}
    \sigma(t+1) = \sigma(t), \qquad \tau(t+1) = \tau(t) + 1.
    \end{equation}
    
    \item \textbf{Phase Boundary Transition ($\tau(t) = T_{\sigma(t)} - 1$):} Step $t$ is the final step of phase $m = \sigma(t)$. At step $t+1$, the system transitions to the next phase in the macro-cycle, $(m + 1) \bmod M$, resetting the within-phase clock:
    \begin{equation}
    \sigma(t+1) = (\sigma(t) + 1) \bmod M, \qquad \tau(t+1) = 0.
    \end{equation}
\end{enumerate}

Thus, the schedule state update $z_{t+1} = (\sigma(t+1), \tau(t+1))$ is governed by the deterministic function $F: \mathcal{Z} \to \mathcal{Z}$:
\begin{equation}
F(m, \tau) = \begin{cases}
(m, \, \tau + 1), & \text{if } \tau < T_m - 1, \\
((m + 1) \bmod M, \, 0), & \text{if } \tau = T_m - 1,
\end{cases}
\label{eq:deterministic-clock-operator}
\end{equation}
where $z_{t+1} = F(z_t)$ holds for any arbitrary phase lengths $(T_0, \dots, T_{M-1})$.

\paragraph{Step 2: Kernel Factorization and Sufficiency.}
Because $z_{t+1} = F(z_t)$ is a deterministic mapping, the exogenous scheduler kernel $q(B \mid z_t)$ in Proposition~\ref{prop:scheduler-state-markov} reduces to a point-mass Dirac delta measure:
\begin{equation}
q(B \mid z_t) = \delta_{F(z_t)}(B) = \begin{cases} 1, & \text{if } F(z_t) \in B, \\ 0, & \text{if } F(z_t) \notin B, \end{cases}
\end{equation}
for any measurable set $B \subseteq \mathcal{Z}$. Substituting $q(B \mid z_t) = \delta_{F(z_t)}(B)$ into \ref{eq:final-markov-kernel} gives:
\begin{equation}
\Pr(s_{t+1} \in A, \, z_{t+1} \in B \mid \mathcal{H}_t) = p_{\sigma(t)}(A \mid s_t, a_t) \, \delta_{F(z_t)}(B),
\end{equation}
which is conditionally independent of past trajectory history $\mathcal{H}_{:t-1}$ given $(s_t, a_t, z_t)$ and also time-invariant.

\paragraph{Step 3: Compactness and Recurrence.}
The domain of $z_t$ is the finite discrete union:
\begin{equation}
\mathcal{Z} = \bigcup_{m=0}^{M-1} \left( \{m\} \times \{0, \dots, T_m - 1\} \right),
\end{equation}
which has finite cardinality $|\mathcal{Z}| = T_{\mathrm{period}} = \sum_{m=0}^{M-1} T_m$. Unlike global time $t \in \mathbb{N}_0$, which grows infinitely and prevents state reuse, $z_t$ recurs periodically every $T_{\mathrm{period}}$ steps ($z_{t + T_{\mathrm{period}}} = z_t$). This establishes that the augmented state $(s_t, \sigma(t), \tau(t))$ forms a compact, time-homogeneous Markov Decision Process.
\end{proof}

% \subsubsection{Shared--Multipolicy Equivalence (Proposition~\ref{prop:policy-class-equivalence})}

\textbf{Proposition~\ref{prop:policy-class-equivalence}} (Shared--multipolicy equivalence) \textit{There is a bijection between shared policies $\pi^\text{sh}(\cdot\mid s, i, m)$ and an ensemble of specialized policies $\boldsymbol{\pi}^\textbf{sp} =(\pi_0(\cdot\mid s, i),\ldots,\pi_{M-1}(\cdot\mid s, i))$ expressed through the gated map $\Gamma$ such that
\begin{equation}
\begin{aligned}
\pi^{\rm sp}(a\mid s,i ,m)
= \Gamma(\boldsymbol{\pi}^\textbf{sp})(a\mid s, i, m) =
\pi_m(a\mid s, i),\\
\pi^\text{sh}(a\mid s, i, m)=\pi_m(a\mid s, i) = \pi^\text{sp}(a\mid s, i, m).
\end{aligned}
\label{eq:shared-multipolicy-bijection1}
\end{equation}
It is always possible to represent a shared policy with an ensemble of specialized policies and vice versa.}

\begin{proof}
Let $x = (s, i)$ denote the shared input observation (physical state and context). Given an ensemble of specialized policies $(\pi_0, \dots, \pi_{M-1})$, define the canonical concatenation mapping $\Phi$:
\begin{equation}
\Phi(\pi_0, \dots, \pi_{M-1})(a \mid x, m) = \pi_m(a \mid x).
\end{equation}
Conversely, given a shared phase-conditioned policy $\pi$, define the phase slicing operator $\Psi$:
\begin{equation}
\Psi(\pi)_m(a \mid x) = \pi(a \mid x, m).
\end{equation}
For any specialist ensemble, $\Psi(\Phi(\pi_0, \dots, \pi_{M-1}))_m = \pi_m$ for all $m \in \{0, \dots, M-1\}$. For any shared policy, $\Phi(\Psi(\pi))(a \mid x, m) = \pi(a \mid x, m)$. Thus, $\Phi$ and $\Psi$ define bijective inverse mappings between the unrestricted shared policy space $\Pi_{\mathrm{sh}}$ and the product specialist space $\Pi_0 \times \dots \times \Pi_{M-1}$.

Consequently, corresponding policies assign identical action probability distributions $\pi(a \mid x, m) = \pi_m(a \mid x)$ across all physical states and phase indices. Under identical initial state distributions and transition dynamics, induction over time steps $t$ proves identical trajectory laws and identical expected returns. Taking supremums over both policy spaces establishes $\bar J_{\mathrm{sh}}^\star = \bar J_{\mathrm{sp}}^\star$.
\end{proof}

\begin{figure}
    \centering
    \includegraphics[width=1\linewidth]{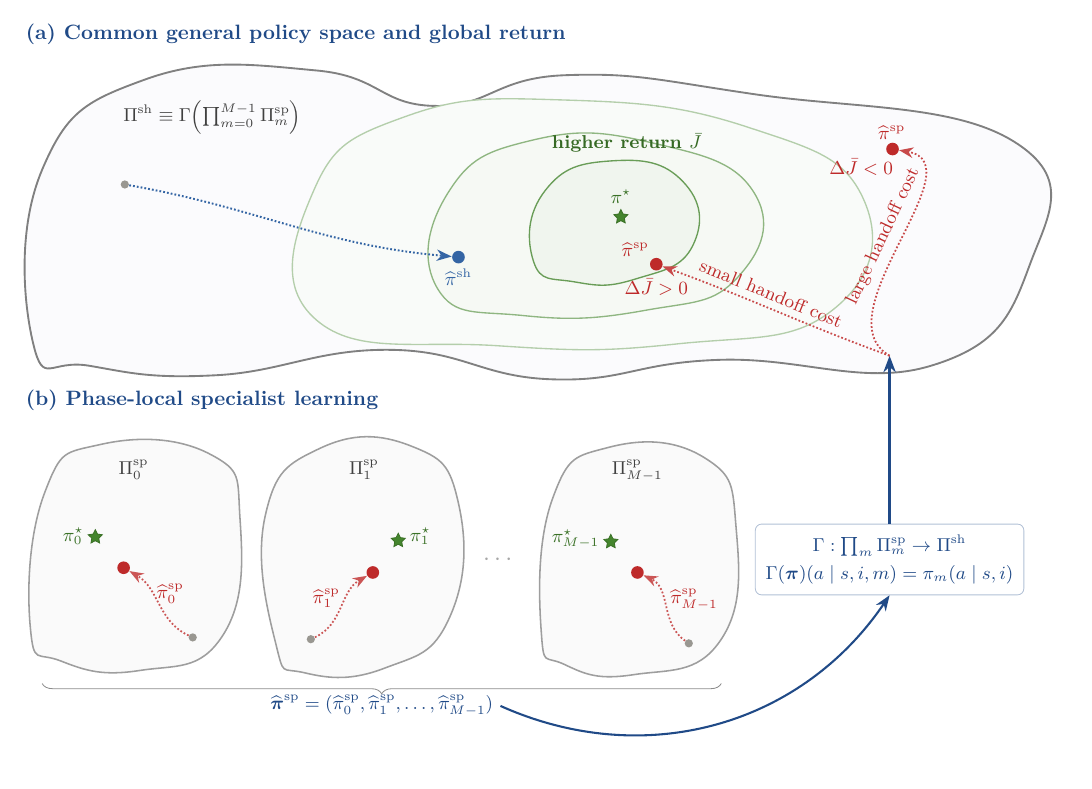}
    \caption{\textbf{Geometric intuition of policy class equivalence and local specialist learning.} 
  \textbf{(a)}The common achievable policy space $\Pi^{\mathrm{sh}}$ encompasses both shared and multi-policy solutions, as established by Proposition~\ref{prop:policy-class-equivalence}. Depending on the alignment of phase-local entry distributions with deployment handoffs, exact multi-policy assembly yields either a specialization gain ($\Delta\bar{J} > 0$) or a handoff mismatch penalty ($\Delta\bar{J} < 0$).
  \textbf{(b)} Individual phase specialists $\widehat{\pi}_m^{\mathrm{sp}}$ are trained independently within local policy spaces $\Pi_m^{\mathrm{sp}}$ under phase-local entry distributions. The tuple $\widehat{\boldsymbol{\pi}}^{\mathrm{sp}}$ is assembled via the exact gating map $\Gamma$ with zero projection loss. }
    \label{fig:policy_geometry}
\end{figure}
\subsubsection{Geometric Intuition of Policy Class Equivalence}
\label{sec:geometric_intuition}

Figure~\ref{fig:policy_geometry} provides a geometric visualization of the equivalence result established in Proposition~\ref{prop:policy-class-equivalence}. While shared and multi-policy architectures possess identical functional expressivity in the common achievable space $\Pi^{\mathrm{sh}}$ (Figure~\ref{fig:policy_geometry}a), their distinct optimization dynamics lead to different realized solutions under finite data, leading to different average global return $\bar J$. 

As depicted in Figure~\ref{fig:policy_geometry}b, each specialist policy $\widehat{\pi}_m^{\mathrm{sp}}$ in the multi-policy ensemble is optimized independently within its local space $\Pi_m^{\mathrm{sp}}$. The exact gating map $\Gamma$ embeds this product tuple $\widehat{\boldsymbol{\pi}}^{\mathrm{sp}} = (\widehat{\pi}_0^{\mathrm{sp}}, \dots, \widehat{\pi}_{M-1}^{\mathrm{sp}})$ into $\Pi^{\mathrm{sh}}$ without projection loss. However, because local training is decoupled across phases, specialists optimize for phase-local performance without accounting for cross-phase coordination. 

    Consequently, the global performance of the assembled multi-policy depends heavily on boundary handoffs: when phase-local optimization yields terminal states that do not compromise initialization in subsequent phases,specialization can yield strict return gains over a single shared policy ($\Delta\bar{J} > 0$). Conversely, when handoff mismatch or data starvation occurs, uncoordinated local optimization degrades overall trajectory performance ($\Delta\bar{J} < 0$). Note that for visual clarity, this representation assumes a unique global optimal policy $\pi^\star$.
%
% ============================================================
% HANDOFF AND RECOVERY RESULTS
% ============================================================
\subsubsection{Transient Handoff, Recovery, and Regime Bounds}

% \subsubsection{Proof of Proposition~\ref{prop:two-goal-handoff} (Two-Goal Handoff Threshold)}
% \label{app:proof-two-goal-handoff}
\textbf{Proposition \ref{prop:two-goal-handoff}} (Hand off threshold for two-goal problem) \textit{Consider two consecutive phases. Phase \(0\) has duration \(T_0\) and offers two possible choices of goals: $i \in \{00, 01\}$ one with a higher reward than other, i.e., \(g_{00}\) gives the larger phase-\(0\) reward, while \(g_{01}\) produces a more favourable entry state for phase \(1\).}

\textit{Let \(F_i(s)\) be the expected discounted return collected during phase \(0\), starting from state \(s\), under goal \(i\). Let \(\mu_i\) be the phase-\(1\) entry-state distribution induced by this choice of goal. Define the continuation value:
\begin{equation}
V_1^{\mathrm{cont}}(\mu_i) = \mathbb{E}_{s_1 \sim \mu_i} \left[V_1^{\mathrm{cont}}(s_1)\right].
\end{equation}}

\textit{Assume that a phase-local specialized policy optimizes only the phase-\(0\) return \(F_i(s)\), whereas a coordinated full-trajectory controller also accounts for the continuation value in phase $1$.}

\textit{If $F_{00}(s) > F_{01}(s)$ then the phase-local specialized policy prefers the immediate-reward goal \(g_{00}\). By contrast, the full-trajectory controller prefers the safe-handoff goal \(g_{01}\) if and only if
\begin{equation}
\gamma^{T_0} \left[ V_1^{\mathrm{cont}}(\mu_{01}) - V_1^{\mathrm{cont}}(\mu_{00}) \right] > F_{00}(s) - F_{01}(s).
\label{eq:two-goal-handoff-threshold2}
\end{equation}
}

\textit{When both inequalities hold, the return loss caused by the uncoordinated phase-local choice is
\begin{equation}
\Delta_{\mathrm{coord}}(s) = \gamma^{T_0} \left[ V_1^{\mathrm{cont}}(\mu_{01}) - V_1^{\mathrm{cont}}(\mu_{00}) \right] - \left[ F_{00}(s) - F_{01}(s) \right] > 0.
\label{eq:two-goal-coordination-gap1}
\end{equation}
}

\begin{proof}
Fix an initial state $s \in \mathcal{S}$ at step $t = 0$ of phase $0$. For each choice of goals $i \in \{00, 01\}$, the total expected trajectory return across both phases decomposes additively at the phase boundary $t = T_0$:
\begin{equation}
Q_i(s) := F_i(s) + \gamma^{T_0} V_1^{\mathrm{cont}}(\mu_i),
\label{eq:two-goal-total-value}
\end{equation}
where $F_i(s) = \mathbb{E}_{\pi_i} \left[ \sum_{t=0}^{T_0-1} \gamma^t r_0(s_t, a_t) \,\middle|\, s_0 = s \right]$ denotes the expected phase-$0$ return, and $\mu_i(\cdot) = \Pr(s_{T_0} \in \cdot \mid s_0 = s, \pi_i)$ is the induced phase-$1$ entry state distribution under goal choice $i$.

By definition, an uncoordinated phase-local specialist $\widehat{\pi}_{0,\mathrm{sp}}$ optimizes strictly the phase-$0$ return $F_i(s)$. Consequently, under condition $F_{00}(s) > F_{01}(s)$, the specialist uniquely selects commitment $g_{00}$.

Conversely, a coordinated full-trajectory controller selects commitment $g_{01}$ over $g_{00}$ if and only if $Q_{01}(s) > Q_{00}(s)$. Substituting \eqref{eq:two-goal-total-value} yields:
\begin{align}
Q_{01}(s) > Q_{00}(s) &\iff F_{01}(s) + \gamma^{T_0} V_1^{\mathrm{cont}}(\mu_{01}) > F_{00}(s) + \gamma^{T_0} V_1^{\mathrm{cont}}(\mu_{00}) \nonumber \\
&\iff \gamma^{T_0} \left[ V_1^{\mathrm{cont}}(\mu_{01}) - V_1^{\mathrm{cont}}(\mu_{00}) \right] > F_{00}(s) - F_{01}(s),
\end{align}
which establishes the threshold condition in \eqref{eq:two-goal-handoff-threshold2}.

Finally, when $F_{00}(s) > F_{01}(s)$ and \eqref{eq:two-goal-handoff-threshold2} simultaneously hold, the specialist selects $g_{00}$ while the optimal coordinated policy selects $g_{01}$. The resulting suboptimality gap $\Delta_{\mathrm{coord}}(s) := Q_{01}(s) - Q_{00}(s)$ evaluates to:
\begin{equation}
\Delta_{\mathrm{coord}}(s) = \gamma^{T_0} \left[ V_1^{\mathrm{cont}}(\mu_{01}) - V_1^{\mathrm{cont}}(\mu_{00}) \right] - \left[ F_{00}(s) - F_{01}(s) \right] > 0,
\end{equation}
where strict positivity follows directly from \eqref{eq:two-goal-handoff-threshold2}, completing the proof of \eqref{eq:two-goal-coordination-gap1}.
\end{proof}  

\subsubsection{Sample Allocation and Sample Thresholds}
\textbf{Propositon  \ref{prop:adequate-specialist-budget}} (Sufficient Budget for Positive Specialization Gain) \textit{Assume that under unlimited samples, specialist $m$ achieves a net specialization advantage with the quasi-stationary gain $G_{m, \infty} > \delta$. If the total interaction budget $B$ satisfies:
\begin{equation}
B \ge \max_{m} \frac{B_{0, m}(\delta)}{\zeta_m},
\label{eq:simple-budget-condition1}
\end{equation}
then every specialist receives sufficient samples ($B_m \ge B_{0, m}(\delta)$) to maintain a strictly positive local gain:
\begin{equation}
G_{m, \mathrm{ref}}^{\mathrm{qs}}(B_m) \ge G_{m, \infty}^{\mathrm{qs}} - \delta > 0.
\end{equation}
For balanced sampling ($\zeta_m = 1/M$), condition \eqref{eq:simple-budget-condition1} simplifies to $B \ge M \max_m B_{0, m}(\delta)$.
}

\begin{proof}
By definition of the infimum in \eqref{eq:simple-sample-threshold}, for any phase-local allocation $b \ge B_{0, m}(\delta)$, the learning gap relative to the asymptotic limit satisfies:
\begin{equation}
\bar J_m^{\mathrm{qs}}(\widehat\pi_{m, \infty}) - \bar J_m^{\mathrm{qs}}(\widehat\pi_{m, b}) \le \delta \implies \bar J_m^{\mathrm{qs}}(\widehat\pi_{m, b}) \ge \bar J_m^{\mathrm{qs}}(\widehat\pi_{m, \infty}) - \delta.
\end{equation}
Under the budget condition $B \ge \max_{m} \frac{B_{0, m}(\delta)}{\zeta_m}$, multiplying by $\zeta_m > 0$ yields:
\begin{equation}
B_m = \zeta_m B \ge \zeta_m \left( \frac{B_{0, m}(\delta)}{\zeta_m} \right) = B_{0, m}(\delta) \quad \forall m \in \{0, \dots, M-1\}.
\end{equation}
Substituting $b = B_m$ into the suboptimality bound gives:
\begin{equation}
\bar J_m^{\mathrm{qs}}(\widehat\pi_{m, \mathrm{sp}, B_m}) \ge \bar J_m^{\mathrm{qs}}(\widehat\pi_{m, \infty}) - \delta.
\end{equation}
Subtracting the return of the shared policy $\bar J_m^{\mathrm{qs}}(\widehat\pi_{\mathrm{sh}, B})$ trained on total budget $B$ yields the expected settled specialization gain $G_{m, \mathrm{ref}}^{\mathrm{qs}}(B_m)$:
\begin{equation}
G_{m, \mathrm{ref}}^{\mathrm{qs}}(B_m) = \bar J_m^{\mathrm{qs}}(\widehat\pi_{m, \mathrm{sp}, B_m}) - \bar J_m^{\mathrm{qs}}(\widehat\pi_{\mathrm{sh}, B}) \ge \bar J_m^{\mathrm{qs}}(\widehat\pi_{m, \infty}) - \delta - \bar J_m^{\mathrm{qs}}(\widehat\pi_{\mathrm{sh}, B}).
\end{equation}
Because the shared policy's return at finite budget $B$ is upper bounded by its asymptotic return ($\bar J_m^{\mathrm{qs}}(\widehat\pi_{\mathrm{sh}, B}) \le \bar J_m^{\mathrm{qs}}(\widehat\pi_{\mathrm{sh}, \infty})$), we obtain:
\begin{equation}
G_{m, \mathrm{ref}}^{\mathrm{qs}}(B_m) \ge \bar J_m^{\mathrm{qs}}(\widehat\pi_{m, \infty}) - \bar J_m^{\mathrm{qs}}(\widehat\pi_{\mathrm{sh}, \infty}) - \delta = G_{m, \infty}^{\mathrm{qs}} - \delta.
\end{equation}
Since $\delta < G_{m, \infty}^{\mathrm{qs}}$ by assumption, $G_{m, \infty}^{\mathrm{qs}} - \delta > 0$, proving that every local specialist achieves a strictly positive specialization gain when $B \ge \max_m \frac{B_{0, m}(\delta)}{\zeta_m}$.
\end{proof}

%
%============================================================
% GRADIENT GEOMETRY RESULTS
% ============================================================
\subsubsection{Gradient Geometry and Optimization Proofs}

% \subsubsection{First-Order Shared Update Decomposition (Proposition\ref{prop:shared-phase-update})}
\textbf{Proposition \ref{prop:shared-phase-update}} (First-Order Decomposition of Shared Updates) \textit{For a differentiable local objective $\mathcal L_m$, a shared gradient update $\theta^+ = \theta + \eta g(\theta)$ yields:
\begin{equation}
\mathcal L_m(\theta^+) - \mathcal L_m(\theta) = \eta w_m \|g_m\|^2 + \eta \sum_{k \neq m} w_k \langle g_m, g_k \rangle + o(\eta).
\end{equation}
}

\begin{proof}
Because the local objective $\mathcal L_m$ is differentiable at the parameter vector $\theta$, its first-order Taylor expansion in the direction of the aggregate update $g(\theta) = \sum_k w_k g_k(\theta)$ is:
\begin{equation}
\mathcal L_m(\theta + \eta g) - \mathcal L_m(\theta) = \eta \langle \nabla \mathcal L_m(\theta), g(\theta) \rangle + o(\eta).
\end{equation}
Substituting $g_m = \nabla \mathcal L_m(\theta)$ and expanding $g(\theta)$ gives:
\begin{equation}
\langle g_m, g \rangle = \left\langle g_m, \, \sum_{k=0}^{M-1} w_k g_k \right\rangle = w_m \|g_m\|^2 + \sum_{k \neq m} w_k \langle g_m, g_k \rangle.
\end{equation}
Multiplying by step size $\eta$ proves Prop.~\ref{prop:shared-phase-update}.

Define $c_m = \langle g_m, g \rangle$. If $c_m < 0$, then because $\lim_{\eta \to 0^+} \frac{o(\eta)}{\eta} = 0$, there exists a step size threshold $\eta_0 > 0$ such that $|o(\eta)| < \eta \frac{|c_m|}{2}$ for all $\eta \in (0, \eta_0)$. Consequently:
\begin{equation}
\mathcal L_m(\theta + \eta g) - \mathcal L_m(\theta) \le \eta c_m + |o(\eta)| < \eta c_m - \frac{\eta c_m}{2} = \frac{\eta c_m}{2} < 0,
\end{equation}
proving that a sufficiently small shared gradient step strictly decreases phase $m$'s local objective.
\end{proof}

% \subsubsection{Orthogonal Gradient Alignment Scaling (Proposition\ref{prop:orthogonal-cosine-proof})}
\textbf{Proposition \ref{prop:orthogonal-cosine-proof}} (Orthogonal Gradient Alignment Scaling) \textit{Let $g_0, \dots, g_{M-1} \in \mathbb{R}^d$ be non-zero mutually orthogonal phase gradients ($\langle g_m, g_k \rangle = 0$ for all $m \neq k$) and let $g = \frac{1}{M} \sum_{k=0}^{M-1} g_k$ denote the balanced aggregate update ($w_m = 1/M$). If all gradients share equal norms $\|g_m\| = C > 0$, then:
\begin{equation}
\cos(g_m, g) = \frac{1}{\sqrt M}.
\end{equation}
More generally, for arbitrary gradient norms $\|g_m\|$, the orthogonal alignment baseline scales as:
\begin{equation}
\cos(g_m, g) = \frac{\|g_m\|}{\sqrt{\sum_{k=0}^{M-1} \|g_k\|^2}}.
\end{equation}
}

\begin{proof}
Let $g_0, \dots, g_{M-1} \in \mathbb{R}^d$ satisfy $\|g_m\| = C > 0$ and $\langle g_m, g_k \rangle = 0$ for all $m \neq k$. Let $g = \frac{1}{M} \sum_{k=0}^{M-1} g_k$ denote the aggregate gradient under balanced weights $w_m = 1/M$.

\paragraph{Step 1: Aggregate Gradient Norm.}
Expanding the squared norm $\|g\|^2$:
\begin{equation}
\|g\|^2 = \left\langle \frac{1}{M} \sum_{k=0}^{M-1} g_k, \, \frac{1}{M} \sum_{j=0}^{M-1} g_j \right\rangle = \frac{1}{M^2} \sum_{k=0}^{M-1} \sum_{j=0}^{M-1} \langle g_k, g_j \rangle.
\end{equation}
By mutual orthogonality, cross-terms $\langle g_k, g_j \rangle = 0$ for $k \neq j$. Substituting $\|g_k\|^2 = C^2$:
\begin{equation}
\|g\|^2 = \frac{1}{M^2} \sum_{k=0}^{M-1} \|g_k\|^2 = \frac{1}{M^2} \cdot (M C^2) = \frac{C^2}{M} \implies \|g\| = \frac{C}{\sqrt M}.
\end{equation}

\paragraph{Step 2: Cosine Similarity.}
The inner product $\langle g_m, g \rangle$ evaluates to:
\begin{equation}
\langle g_m, g \rangle = \left\langle g_m, \, \frac{1}{M} \sum_{k=0}^{M-1} g_k \right\rangle = \frac{1}{M} \langle g_m, g_m \rangle = \frac{C^2}{M}.
\end{equation}
Substituting into the cosine similarity definition:
\begin{equation}
\cos(g_m, g) = \frac{\langle g_m, g \rangle}{\|g_m\| \|g\|} = \frac{\frac{C^2}{M}}{C \cdot \frac{C}{\sqrt M}} = \frac{\frac{C^2}{M}}{\frac{C^2}{\sqrt M}} = \frac{1}{\sqrt M}.
\end{equation}

If gradient norms are unequal ($\|g_m\| \neq \|g_k\|$), Step 1 generalizes to $\|g\| = \frac{1}{M} \sqrt{\sum_{k=0}^{M-1} \|g_k\|^2}$, yielding the general orthogonal baseline:
\begin{equation}
\cos(g_m, g) = \frac{\|g_m\|}{\sqrt{\sum_{k=0}^{M-1} \|g_k\|^2}}.
\end{equation}
\end{proof}

\subsection{Experimental Details}
\label{app:experiments}

% This section details the domain formulations, network architectures, sample budgets, and evaluation protocols needed to replicate all experiments. The three test environments (HAP, PhasePendulum, and PhaseGridWorld) balance diagnostic rigor with low computational overhead, running entirely on local CPU resources. Commands, PBS files, raw seed outputs, and analysis scripts are included in the supplementary repository.

This appendix gives the environment definitions, network architectures, sample budgets, and evaluation protocols needed to reproduce every experiment. Appendix~\ref{app:architecture-training} describes the architectures and the training protocol common to all runs; Appendix~\ref{app:hap-details}, \ref{app:pendulum-details} and \ref{app:gridworld-details} describe the three environments, and Appendix~\ref{app:eval} describes how results are aggregated and reported. 

% The environments are deliberately small and each isolates a different factor: HAP is a telecommunications coverage-control task with an traffic driven phase schedule \cite{loussouarn2026haps}, 

% PhasePendulum a continuous-control task in which we have customized the dwell time and phase heterogeneity can be varied directly, and PhaseGridWorld a discrete task in which the recovery time can be computed exactly. Every configuration runs on commodity CPU hardware, with a full sweep over $N$~seeds requiring approximately $X$~core-hours and no accelerators. Commands, job scripts, raw per-seed outputs, and analysis scripts are provided in the supplementary material.

\subsubsection{Architectures and Common Training Protocol}
\label{app:architecture-training}

% \paragraph{Policy inputs and parameter sharing.}
% Let $s_t$ denote the original observation state, $e_{m_t}$ the one-hot phase code, and \( c(\tau_t)=[\sin(2\pi\tau_t/T),\cos(2\pi\tau_t/T)] \) the clock encoding. \emph{Single w/ clock} is one PPO actor--critic receiving $[s_t,e_{m_t},c(\tau_t)]$. \emph{Multihead} receives $[s_t,e_{m_t}]$ and uses one shared actor body, one phase-specific linear action head, and one shared critic. \textcolor{red}{what do you mean by this? (The active head is selected from the phase code.)} \emph{Multipolicy} uses $M$ independent PPO actor--critics selected externally by phase. A standard specialist receives $s_t$; \emph{multipolicy w/ clock} receives $[s_t,c(\tau_t)]$. It does not receive the phase code because selecting the network already identifies the phase. Unless an ablation states otherwise, all actor and critic multilayer perceptrons have two hidden layers of width $256$, Tanh activations, and the Stable-Baselines \citep{stable-baselines3} initialization defaults.

\paragraph{Policy inputs and parameter sharing.}
Let $s_t$ be the environment observation, $e_{m_t}\in\{0,1\}^M$ the one-hot code of the active phase, and $c(\tau_t)=[\sin(2\pi\tau_t/T),\cos(2\pi\tau_t/T)]$ the encoding of the local clock within the current block. All experiments use equal block durations $T_m\equiv T$. All architectures are trained with PPO. \emph{Single+clock} is a single actor--critic taking $[s_t,e_{m_t},c(\tau_t)]$. \emph{Multihead} takes $[s_t,e_{m_t}]$ and combines a shared actor body and a shared critic with $M$ phase-specific linear action heads; at each step the head indexed by the active phase emits the action, so the shared body is updated from every transition while each head is updated only from transitions of its own phase. The phase code is also supplied to the body, allowing it to form phase-conditioned features rather than only phase-agnostic ones. \emph{Multipolicy} uses $M$ independent actor--critics, the active phase selecting which one acts and is updated; its input is $s_t$, or $[s_t,c(\tau_t)]$ in the \emph{multipolicy+clock} variant. The phase code is not supplied here, since selecting the network already identifies the phase. Unless an ablation states otherwise, all actor and critic MLPs have two hidden layers of width $256$ with Tanh activations and the Stable-Baselines3 \citep{stable-baselines3} default initialization.

\paragraph{Secondary controls in ablation studies.}
Two input ablations isolate the value of the phase and clock signals: the phase-blind baseline observes only $s_t$, and the phase-conditioned baseline observes $[s_t,e_{m_t}]$, that is, the shared architecture without the clock encoding. Capacity controls vary width, using $[128,128]$ or $[512,512]$ in place of the default $[256,256]$, and depth, using four or eight hidden layers of width $256$ in place of two; the same setting is applied to every architecture. In the warm-start protocol with nominal per-phase budget $B_0$, a single phase-blind policy is first trained on the full schedule for $B_0$ interactions, its parameters are copied into all $M$ specialists, and each copy is fine-tuned for $B_0(M-1)/M$ interactions, so the total is $MB_0$, matching standard multipolicy training. PCGrad \citep{yu2020pcgrad} and CAGrad \citep{liu2021cagrad} split each shared-policy PPO minibatch by phase, compute one complete PPO-loss gradient per represented phase, and combine those gradients by the respective rule before the optimizer step. The two distillation controls transfer policies in opposite directions. The specialist-to-shared control compresses the trained specialist ensemble into a single phase-conditioned student by minimizing the KL divergence to each specialist's action distribution on that phase's rollouts, following policy distillation \citep{rusu2015distillation}. The shared-to-specialist control trains each specialist with reinforcement learning while adding a KL penalty toward a pretrained shared teacher, with the coefficient decayed from $\beta_0$ to zero over training, following the teacher-guided principle of kickstarting \citep{schmitt2018kickstarting}. The latter is related to Distral \citep{teh2017distral}, although we do not jointly learn a shared policy centroid.
% The phase-blind baseline observes only $s_t$, while the phase-conditioned baseline observes $[s_t,e_{m_t}]$ without a clock. Capacity controls replace the default hidden widths by $[128,128]$ or $[512,512]$, or use four to height hidden layers of width $256$ compared to the original two layers. In the warm-start protocol with nominal per-phase budget $B_0$, one non-augmented policy is first trained for $B_0$ interactions, copied into all $M$ specialists, and each copy is then fine-tuned for $B_0(M-1)/M$ interactions. The complete procedure therefore uses the same total budget $MB_0$ as standard multipolicy training. PCGrad and CAGrad from \cite{yu2020pcgrad,liu2021cagrad} split each shared-policy PPO minibatch by phase, compute one complete PPO-loss gradient per represented phase, and combine those gradients before the optimizer step. Our distillation controls follow two complementary directions of policy transfer. The specialist-to-shared control compresses a phase-specific policy ensemble into one phase-conditioned student, following policy distillation \citep{rusu2015distillation}. The shared-to-specialist control adds a decaying KL penalty toward a pretrained shared teacher while each specialist continues reinforcement learning, following the teacher-guided training principle of kickstarting \citep{schmitt2018kickstarting}. It is also related to Distral, although we do not jointly learn a shared policy centroid\citep{teh2017distral}.

\paragraph{Training and evaluation distinction.}
Shared and multihead agents train on complete cyclic trajectories and retain the physical state at phase boundaries. A standard multi-policy instead trains in a phase-fixed environment whose episode lasts $T$ steps and begins from the domain's phase-local reset law. Specialized multi-policies are updated separately and receive approximately $B_m=B/M$ interactions under a balanced total budget $B$. Crucially, all architectures are evaluated on the same complete cyclic environment: selecting a new specialized policy does not reset the physical state. The resulting score therefore includes any mismatch between phase-local training resets and deployment-time entry states.

\paragraph{PPO hyperparameters.}
Hyperparameters are held fixed across architectures within each reported comparison. We use the domain-specific settings in Table~\ref{tab:ppo-hyperparameters}. The different rollout sizes reflect the substantially different simulation horizon scales of HAPS and the two diagnostic domains.

\begin{table}[H]
\centering
\small
\caption{Default PPO settings. %HAP normalizes observations and training rewards; evaluation uses frozen observation statistics and raw rewards. PhasePendulum and PhaseGridWorld do not use \texttt{VecNormalize}.
}
\label{tab:ppo-hyperparameters}
\begin{tabular}{lccc}
\toprule
Setting & HAPS & PhasePendulum & PhaseGridWorld \\
\midrule
Hidden layers & $[256,256]$ & $[256,256]$ & $[256,256]$ \\
Parallel environments & 8 & 8 & 8 \\
Rollout steps per environment & 2048 & 256 & 256 \\
Minibatch size & 1024 & 256 & 256 \\
Epochs per update & 10 & 10 & 10 \\
Discount $\gamma$ & 0.99 & 0.99 & 0.99 \\
GAE parameter & 0.95 & 0.95 & 0.95 \\
Learning rate & $3\!\times\!10^{-4}$ & $3\!\times\!10^{-4}$ & $3\!\times\!10^{-4}$ \\
Entropy coefficient & 0.02 & 0.01 & 0.01 \\
PPO clip range & 0.2 & 0.2 & 0.2 \\
Target KL & 0.02 & 0.03 & 0.03 \\
Observation/reward normalization & yes/yes & no/no & no/no \\
\bottomrule
\end{tabular}
\end{table}

\subsubsection{HAPS Coverage Environment}
\label{app:hap-details}

\paragraph{Scope and geometry.}
We consider only the mobility and coverage component of the simulation environment introduced by \cite{loussouarn2026haps}. The idea is to find the optimal position of the HAPS given that the user density and traffic vary over time but in a periodic manner. In particular, different phases inside a period are differentiated by different user densities and traffic loads. The entire service region of the HAPS network is divided into multiple cells; only a few selected cells act as hotspots where the user density and traffic are higher than the rest of the cells. 

The service region is a square of side $A=300\,\mathrm{km}$, partitioned into $5\times5$ equal cells. Unless an ablation states otherwise, $H=2$ platforms operate at altitudes $z_h\in[18,20]\,\mathrm{km}$ with $\theta_{\max}=60^\circ$. Their footprint radii therefore lie between $31.2$ and $34.6\,\mathrm{km}$. Each step permits horizontal increments of $3\,\mathrm{km}$ per coordinate and a vertical increment of $3(20/300)=0.2\,\mathrm{km}$, followed by clipping to the service and altitude bounds. The per-HAPS service maximum capacity is $200$ users.
For grid cell $c=(i,j)$, the number of new users is sampled independently at timestep $t$ with phase $m=\sigma(t)$ as $N_c^t\sim\mathrm{Poisson}(\lambda_c^{m})$. Their horizontal coordinates are uniform within the cell. Their lifetimes are independent geometric random variables with parameter $\Xi_c$, supported on the positive integers. The base arrival-rate and departure-parameter arrays are
\begin{equation}
\Lambda=\begin{bmatrix}
1.60&1.20&1.40&1.10&1.30\\
1.30&1.19&1.70&1.40&1.20\\
1.20&1.50&1.30&1.60&1.40\\
1.10&1.40&1.20&1.30&1.50\\
1.40&1.30&1.60&1.20&1.10
\end{bmatrix},\quad
\Xi=\begin{bmatrix}
0.10&0.08&0.20&0.12&0.09\\
0.12&0.07&0.15&0.14&0.11\\
0.08&0.10&0.05&0.06&0.04\\
0.11&0.09&0.18&0.13&0.10\\
0.09&0.11&0.12&0.08&0.07
\end{bmatrix}.
\label{eq:hap-traffic-matrices}
\end{equation}
In phase $m$, the two selected cells for being hotspots use $\lambda_c^m=2\Lambda_c$, and for all other cells, we use $\lambda_c^m=0.1\Lambda_c$. The lifetime parameters $\Xi_c$ do not change with phase. The mean lifetime, $1/\Xi_c\in[5,25]$ steps, determines how long demand generated in the previous phase should persist after a boundary. The different actions from the HAPS lead to movement in 3D space, and the reward depends only on coverage and potential interference. We perform an ablation study for different phase count, dwell time, budget, and architecture.% without mixing the phase-allocation question with radio-resource allocation.

\paragraph{Phase Structure and Episodes.}
Each episode is composed of two full cycles. A cycle is composed of the $M$ different phases that follow each other in increasing index order. Each phase lasts $T$ time steps, making the whole episode last $2MT$ time steps. While designing the phases, it is possible to select the hotspot cells in such a way that the distance between a hotspot from one phase and the other is high, low, or medium. \emph{neighbour} places consecutive hotspots in adjacent cells, \emph{random} draws them uniformly, and \emph{maxjump} places them as far apart as the grid allows. These three dispositions correspond to increasing heterogeneity between consecutive phases.

% In the experiments, the phases are chosen according to three different dispositions, as such that the hotspots between two phases are neighbours, random, and maximum distance (maxjump). These different dispositions could represent different degrees of heterogeneity between phases. \textcolor{red}{See where to talk about random is maybe the more heterogeneous}

\paragraph{Association and reward.}
%For every user, the simulator first identifies all HAP footprints containing that user. It counts the user in interference i.e., when it is under at least two footprints. The total number of interfering users is noted $I^t$. It then considers covering HAPs in increasing distance order and assigns the user to the first one with remaining capacity. Uncovered users and users for which every covering HAP is full remain unserved. The resulting assigned-user counts are $C_h^t$. We use $\kappa=0.5$ as an overlap is discouraged even if the user is eventually assigned to only one HAP.

The objective of the problem is to maximize the number of covered users over time by the HAPS while minimizing the interference that can be caused by two similar HAPS serving the same users. Let $C_h^t$ be the number of users assigned to HAPS $h$ and let $I^t$ be the total number of connected users. The coverage reward is $ r_t=\sum_{h=1}^{H}C_h^t-\kappa I^t,$ where $\kappa=0.5.$

\paragraph{Observation and action spaces.}
Let $U_t$ be the number of users present in the environment at time $t$; we have $U_t^+=\max\{1,|\mathcal U_t|\}$, and let $G_h^t$ be the number of users geometrically covered by HAPS $h$ before capacity-constrained assignment. We have $x_h, y_h, z_h$ as the coordinates of HAPS $h$. The physical observation is
\begin{equation}
s_t=\left[
\left\{
\frac{x_h^t}{A},\frac{y_h^t}{A},
\frac{z_h^t-z_{\min}}{z_{\max}-z_{\min}},
\frac{C_h^t}{U_t^+},\frac{G_h^t}{U_t^+}
\right\}_{h=1}^{H},
\bar\lambda^{m_t},\frac{I^t}{U_t^+}
\right],
\label{eq:hap-observation}
\end{equation}
where $\bar\lambda^{m_t}$ is the mean arrival rate over the $25$ cells. For $H=2$, this physical vector has dimension $12$. The phase-conditioned policy receives $[s_t,e_{m_t}]$, where $e_{m_t}\in\{0,1\}^{M}$ is one-hot. Clock variants append $[\sin(2\pi\tau_t/T),\cos(2\pi\tau_t/T)]$. The joint action space is $\{-1,0,1\}^{H}$ and controls all HAPS centrally.

\paragraph{Initialization and method-specific training.}
At the start of a full episode, the user set is empty, horizontal HAPS coordinates are sampled uniformly over the service area, and every altitude is set to $19\,\mathrm{km}$. Shared and multihead policies are trained on complete cyclic episodes, so their rollouts preserve HAPS positions and outstanding user sessions across every phase boundary. Standard multipolicy training instead uses one phase-fixed environment per specialist. Each specialist episode starts at its assigned phase with a fresh user set and lasts exactly $T$ steps. The specialists are updated in round-robin order, and each receives approximately $B/M$ of a total interaction budget $B$. All methods are nevertheless evaluated on the same complete cyclic process: at each boundary the evaluator selects the next specialist without resetting the environment. Handoff-aware variants replace the phase-fixed reset law with entry states collected from these full ensemble rollouts; we identify those variants explicitly when reported.

The main phase-count sweep uses $M\in\{2,5,10,20\}$ at $T=500$. The dwell-time sweep fixes $M=20$ and varies $T$. Training budgets denote total environment interactions, including interactions collected across the eight environments. The main phase-count results and the dwell-time sweep $T\in\{100,200,500,2000\}$ use seeds $\{0,1,2,3,4\}$; secondary ablations use seeds $\{0,1,2\}$ unless stated otherwise. Raw HAPS evaluations use three complete cyclic episodes per checkpoint. Because each episode preserves the state across boundaries, these scores include both settled phase behavior and handoff effects.

\subsubsection{PhasePendulum Environment and Protocol}
\label{app:pendulum-details}

\paragraph{Dynamics, observation, and reward.}
PhasePendulum environment consists of the standard torque-controlled pendulum dynamics\footnote{\url{https://gymnasium.farama.org/environments/classic_control/pendulum/}} while changing the target angle $g_m$ between phases. Its physical state is $(\theta_t,\dot\theta_t)$, its action is a scalar torque $u_t\in[-2,2]$, and its integration step is $\Delta t=0.05$. We use $g=10$, mass $1$, length $1$, and clip angular velocity to $[-8,8]$. The physical observation is
\[
s_t=[\cos\theta_t,\sin\theta_t,\dot\theta_t/8].
\]
Writing $\operatorname{ang}(x)=((x+\pi)\bmod 2\pi)-\pi$, the reward in phase $m$ is
\begin{equation}
r_t=-\operatorname{ang}(\theta_t-g_m)^2
     -0.1\dot\theta_t^2-0.001u_t^2.
\label{eq:pendulum-reward}
\end{equation}
Only the target, and therefore the reward, changes at a boundary; angle and angular velocity remain continuous. At a fresh episode reset, $\theta_0\sim\mathrm{Unif}[-\pi,\pi]$ and $\dot\theta_0\sim\mathrm{Unif}[-1,1]$.

\paragraph{Target structure and budgets.}
Each of the $M$ phases has its own target angle, and the phases repeat periodically in an  increasing index order. We control the heterogeneity between phases through the choice of successive targets. The maximum-jump structure greedily orders candidate target angles so as to maximize the circular distance between consecutive targets. For the continuous heterogeneity sweep, we construct one such structure $g^{\rm max}$ and set $g_m(h)=h g_m^{\rm max}$ for $h\in\{0,0.25,0.5,0.75,1\}$. At $h=0$ every phase has the same target; at $h=1$ the complete maximum-jump structure is recovered. We fix $M=10$, $T=1000$, five training seeds, four evaluation cycles, $5$M total interactions, and $20$ deterministic raw-evaluation episodes per checkpoint. Each specialist therefore receives $0.5$M interactions. Scaling targets also changes their absolute locations relative to the pendulum dynamics, so we compare architectures within each $h$ and do not interpret rewards across $h$ as a pure causal effect of heterogeneity.

The phase-count sweep uses $M\in\{2,5,10,20\}$, $T=1000$, the maximum-jump suite, five seeds, and a fixed allocation $B_0=1$M interactions per phase. Its total budget is therefore $MB_0$. This sweep tests scalability while holding specialist data approximately constant, not under a fixed total budget. The identical-phase budget study fixes $M=10, T=1000, h=0$ and varies the specialist allocation from $0.5$M to $2$M interactions. Shared and multihead policies train on full cyclic trajectories; specialists train from the fresh phase-local reset law above. All are evaluated for four complete cycles without resetting $(\theta,\dot\theta)$ at boundaries.

\begin{figure}[H]
\centering
\resizebox{0.98\linewidth}{!}{%
\begin{tikzpicture}[x=1cm,y=1cm,font=\scriptsize]
  % Target-angle contraction used by the continuous heterogeneity sweep.
  \foreach \cx/\hh in {0/0,3.6/0.5,7.2/1} {
    \draw[gray!55] (\cx,0) circle (1.15);
    \fill (\cx,0) circle (1.5pt);
    \draw[gray!35,dashed] (\cx,0) -- +(0,1.15);
    \node[below] at (\cx,-1.30) {$h=\hh$};
  }
  \foreach \dummy in {1,...,5} {
    \draw[very thick,blue!70!black] (0,0) -- +(0,1.03);
  }

  \foreach \a in {-55,-25,20,50,75} {
    \draw[very thick,blue!70!black] (3.6,0) -- ++({90+0.5*\a}:1.03);
    \draw[very thick,blue!70!black] (7.2,0) -- ++({90+\a}:1.03);
  }

  \node[align=center] at (3.6,1.62) {phase targets $g_m(h)=h g_m^{\rm max}$};

  \draw[->,thick] (9.15,-0.75) -- (13.0,-0.75) node[right] {time};
  \draw[fill=blue!12,draw=blue!50] (9.25,-0.52) rectangle (10.95,0.02);
  \draw[fill=red!12,draw=red!50] (10.95,-0.52) rectangle (12.65,0.02);
  \node at (10.10,-0.25) {phase $m$};
  \node at (11.80,-0.25) {phase $m+1$};
  \draw[dashed,thick] (10.95,-0.75) -- (10.95,0.55);
  \node[above,align=center] at (10.95,0.58)
    {$g_m\!\rightarrow g_{m+1}$\\$(\theta,\dot\theta)$ is carried};
\end{tikzpicture}
}
\caption{PhasePendulum construction. }
\label{fig:pendulum-construction}
\end{figure}
Fig.\ref{fig:pendulum-construction} presents the objective angles for different scenarios. On the left of the figure, we have the examples for different heterogeneity $h$ at $M=5$ phases. At $h=0$, all phase objectives coincide; increasing $h$ spreads them around the circle. On the right we have a detail of the between-phase transition where only the target changes. The angle and angular velocity remain continuous, so the new phase inherits the state
produced by the preceding policy.

\subsubsection{PhaseGridWorld Environment and Protocol}
\label{app:gridworld-details}

\paragraph{Base environment.}
The agent moves on a $9\times9$ grid using four deterministic actions: up, down, left, and right. Each phase has its own goal and obstacle map. The physical observation contains only the agent coordinates, each scaled to $[-1,1]$. Shared methods append the phase code, clock variants append the sine--cosine clock, and multipolicy selects a specialist externally. Goal and obstacle coordinates are never observed directly. The per-step reward is
\begin{equation}
r_t=-0.01-0.05\,\mathbf{1}\{\text{collision}\} +0.02(d_t-d_{t+1})+\mathbf{1}\{d_{t+1}=0\},
\label{eq:gridworld-reward}
\end{equation}
where $d_t$ is the Manhattan distance to the active goal. Reaching the goal resamples a reachable non-goal cell within the same phase.

At a normal boundary, the agent position is carried into the next map. If the inherited cell is an obstacle in that map, a safety rule resamples the agent uniformly from free cells connected to the incoming goal. Thus, obstacle collisions are repaired, but a free inherited cell in a disconnected component is retained. The boundary-reset intervention instead performs this reachable-state resampling at every boundary. We call these two boundary laws \emph{carried} and \emph{reset}. The same law is used for every architecture within a comparison.

\paragraph{Ordinary suites and distractors.}
Maximum-jump goal suites greedily maximize Euclidean separation between successive goals. Maximum-jump obstacle suites similarly select successive Bernoulli obstacle fields to maximize their normalized Hamming distance. The main ordinary runs use $M=10$, $T=500$, four cycles, obstacle densities $0.25$ and $0.40$, seeds $\{0,1,2\}$, and $B_0=1.5$M interactions per phase. A separate experiment uses $M=10, T=100$, three seeds, and appends $Z\in\{0,8,32,128\}$ independent $\mathrm{Unif}[-1,1]$ coordinates to every observation. These distractors do not affect transitions or rewards. Figure \ref{fig:gridworld-density-examples} presents examples of environments for the two densities. 

\begin{figure}[H]
\centering
\includegraphics[width=0.96\linewidth]{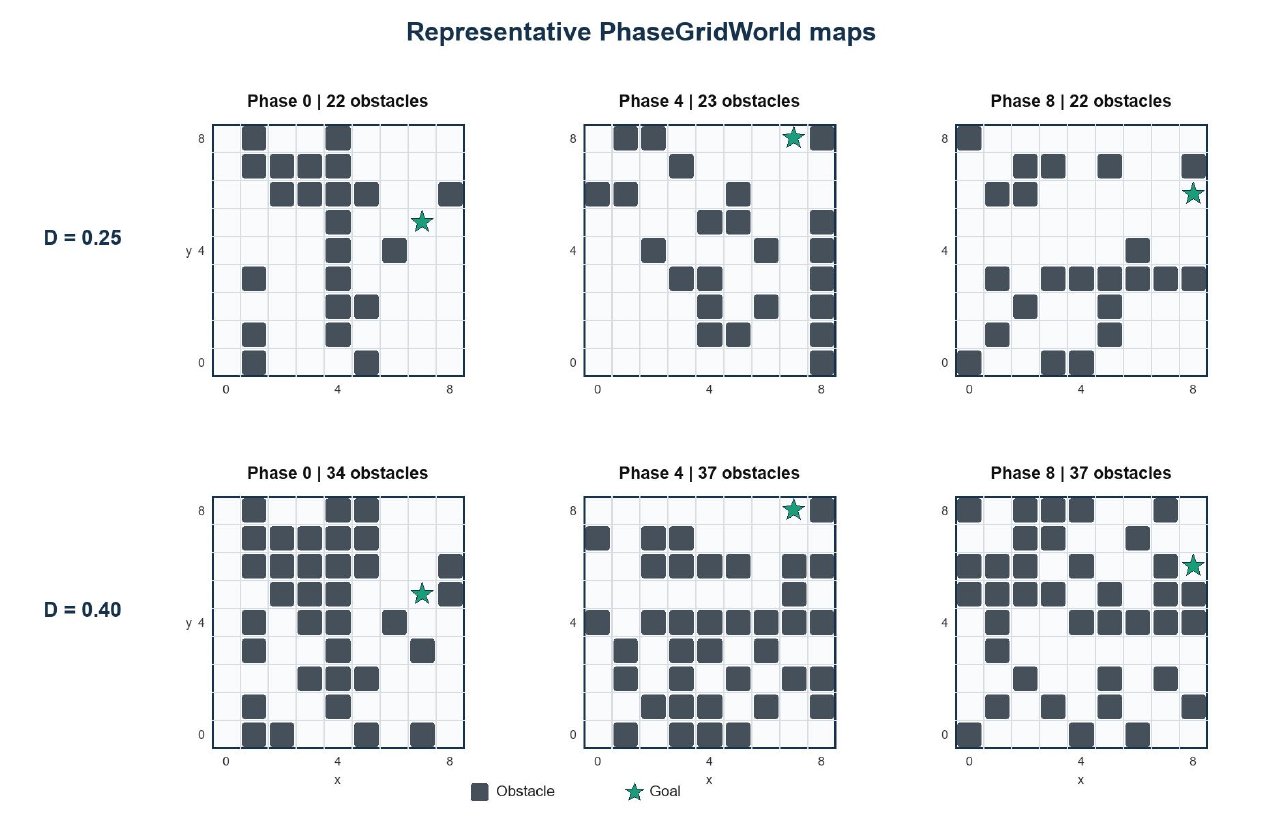}
\caption{Representative maps from the ordinary PhaseGridWorld maximum jump suite used in the reported experiments ($9\times9$, $M=10$, suite and obstacle seed 0). Columns show the same three phase indices at obstacle sampling probabilities $D=0.25$ and $D=0.40$. Obstacles are sampled independently before the maximum-jump ordering is selected, and the active goal cell is always cleared; consequently, the realized obstacle count need not equal exactly $81D$. These panels illustrate environment structure rather than policy performance.}
\label{fig:gridworld-density-examples}
\end{figure}

\paragraph{Controlled boundary topologies.}
To study in detail the handoff between phases, we built the following environment. The aligned, bottleneck, and closed-trap experiments use $M=2$, $T=100$, five training seeds, $3$M total interactions, and 20 evaluation cycles per episode. In the aligned case both phases use goal $(7,4)$. In the other two cases the goal changes from $(1,4)$ to $(7,4)$. Phase 1 adds the vertical wall $x=4$; the bottleneck opens the single gate $(4,7)$, whereas the trap leaves the wall closed. For the boundary intervention, each trained checkpoint is evaluated under both carried and reachable-state-reset laws. The policy parameters are unchanged, so the contrast isolates the effect of the entry distribution. This specific experiment is illustrated in Fig. \ref{fig:gridworld-handoff-topologies}

\begin{figure}[H]
\centering
\resizebox{0.95\linewidth}{!}{%
\begin{tikzpicture}[x=0.38cm,y=0.38cm,font=\scriptsize]
  % Aligned: both phases use the right-hand goal; phase 1 adds the wall.
  \begin{scope}[xshift=0cm]
    \fill[green!6] (4.5,-0.5) rectangle (8.5,8.5);
    \draw[step=1,gray!35,thin] (-0.5,-0.5) grid (8.5,8.5);
    \draw[black,thick] (-0.5,-0.5) rectangle (8.5,8.5);
    \foreach \y in {0,...,8} {
      \fill[gray!75] (3.58,\y-0.42) rectangle (4.42,\y+0.42);
    }
    \draw[blue!75!black,very thick] (7,4) circle (0.38);
    \fill[green!55!black] (7,4) circle (0.20);
    \node[white,font=\tiny] at (7,4) {$g$};
    \fill[orange!85!black] (6,4) circle (0.22);
    \node[above] at (6,4.22) {$s_1$};
    \draw[->,orange!85!black,very thick] (6.28,4) -- (6.58,4);
    \node[above,font=\small] at (4,9.05) {(a) Aligned goal};
    \node[below,align=center] at (4,-0.85) {entry and goal on\\the same side};
  \end{scope}

  % Bottleneck: opposite goals, with one gate in the phase-1 wall.
  \begin{scope}[xshift=4.6cm]
    \fill[orange!7] (-0.5,-0.5) rectangle (3.5,8.5);
    \fill[green!6] (4.5,-0.5) rectangle (8.5,8.5);
    \draw[step=1,gray!35,thin] (-0.5,-0.5) grid (8.5,8.5);
    \draw[black,thick] (-0.5,-0.5) rectangle (8.5,8.5);
    \foreach \y in {0,1,2,3,4,5,6,8} {
      \fill[gray!75] (3.58,\y-0.42) rectangle (4.42,\y+0.42);
    }
    \draw[blue!75!black,very thick] (1,4) circle (0.34);
    \node[blue!75!black] at (1,4) {$g_0$};
    \fill[green!55!black] (7,4) circle (0.28);
    \node[white,font=\tiny] at (7,4) {$g_1$};
    \fill[orange!85!black] (1.8,4) circle (0.22);
    \node[above] at (1.8,4.22) {$s_1$};
    \draw[->,orange!85!black,very thick,dashed]
      (2.05,4.15) -- (3.2,7) -- (4.8,7) -- (6.7,4.25);
    \node[above] at (4,7.35) {gate};
    \node[above,font=\small] at (4,9.05) {(b) Recoverable bottleneck};
    \node[below,align=center] at (4,-0.85) {long path, but every\\free entry is viable};
  \end{scope}

  % Trap: the same wall without a gate disconnects the old-goal side.
  \begin{scope}[xshift=9.2cm]
    \fill[red!9] (-0.5,-0.5) rectangle (3.5,8.5);
    \fill[green!6] (4.5,-0.5) rectangle (8.5,8.5);
    \draw[step=1,gray!35,thin] (-0.5,-0.5) grid (8.5,8.5);
    \draw[black,thick] (-0.5,-0.5) rectangle (8.5,8.5);
    \foreach \y in {0,...,8} {
      \fill[gray!75] (3.58,\y-0.42) rectangle (4.42,\y+0.42);
    }
    \draw[blue!75!black,very thick] (1,4) circle (0.34);
    \node[blue!75!black] at (1,4) {$g_0$};
    \fill[green!55!black] (7,4) circle (0.28);
    \node[white,font=\tiny] at (7,4) {$g_1$};
    \fill[orange!85!black] (1.8,4) circle (0.22);
    \node[above] at (1.8,4.22) {$s_1$};
    \draw[->,orange!85!black,very thick] (2.08,4) -- (3.25,4);
    \draw[red!75!black,very thick] (3.05,3.72) -- (3.45,4.28);
    \draw[red!75!black,very thick] (3.05,4.28) -- (3.45,3.72);
    \node[above,font=\small] at (4,9.05) {(c) Closed trap};
    \node[below,align=center] at (4,-0.85) {entry is free but\\cannot reach $g_1$};
  \end{scope}

  \begin{scope}[xshift=1.2cm,yshift=-1.95cm]
    \draw[blue!75!black,very thick] (0,0) circle (0.24);
    \node[right] at (0.35,0) {phase-0 goal};
    \fill[green!55!black] (5.1,0) circle (0.20);
    \node[right] at (5.45,0) {phase-1 goal};
    \fill[orange!85!black] (10.2,0) circle (0.18);
    \node[right] at (10.55,0) {carried state};
    \fill[gray!75] (15.0,-0.25) rectangle (15.5,0.25);
    \node[right] at (15.8,0) {wall};
  \end{scope}
\end{tikzpicture}
}
\caption{Controlled PhaseGridWorld boundary constructions at the start of phase 1. The $9\times9$ geometry, goals, wall, and bottleneck gate match the environment implementation. The orange marker is an illustrative carried entry state, not a sampled trajectory. In the aligned case, the goal remains at $(7,4)$. In the other cases, it switches from $(1,4)$ to $(7,4)$. Opening the gate at $(4,7)$ turns a disconnected trap into a recoverable, but long-path, bottleneck.}
\label{fig:gridworld-handoff-topologies}
\end{figure}

\paragraph{Two-goal handoff choice.}
In addition to the above handoff study environment, we created a two-goal handoff environment based on the theory proposed in Appendix \ref{sec:two-goal-handoff} and illustrated by Fig. \ref{fig:gridworld-handoff-schematic}. The causal choice construction also uses $M=2, T=100$ and a closed phase-1 wall. Phase 0 contains a primary goal $g_{00}=(1,4)$ worth $1$ and a secondary safe goal $g_{01}=(7,4)$ worth $\rho\in\{0.3,0.6,0.9\}$. Phase 1 uses $g_1=(7,6)$. Upon reaching either phase-0 goal, the agent remains there until the scheduled boundary. Hence $g_{00}$ produces a free but unreachable phase-1 entry, while $g_{01}$ preserves reachability. Distance shaping is disabled in this construction so that the comparison is driven by goal reward and continuation value. We train Single w/ clock, multihead w/ clock, and multipolicy w/ clock for $3$M total interactions over five seeds and evaluate ten episodes per checkpoint. Scripted controls additionally enumerate all 79 non-goal initial cells and follow shortest paths to either $g_{00}$ or $g_{01}$; they involve no RL training.

\begin{figure}[H]
\centering
\begin{tikzpicture}[x=0.48cm,y=0.48cm,font=\scriptsize]
  \begin{scope}
    \draw[step=1,gray!35,thin] (-0.5,-0.5) grid (8.5,8.5);
    \draw[black,thick] (-0.5,-0.5) rectangle (8.5,8.5);
    \draw[red!75!black,very thick] (1,4) circle (0.34);
    \node[red!75!black,above] at (1,4.35) {$g_{00}:1$};
    \fill[green!55!black] (7,4) circle (0.28);
    \node[green!40!black,above] at (7,4.35) {$g_{01}:\rho$};
    \draw[->,red!75!black,thick] (4,3.45) -- (1.45,3.85);
    \draw[->,green!45!black,thick] (4,4.55) -- (6.55,4.15);
    \node[align=center,font=\small,text width=4.5cm,anchor=south] at (4,8.85)
      {\textbf{Phase 0}\\Immediate vs. safe handoff};
  \end{scope}
  \begin{scope}[xshift=5.75cm]
    \fill[red!8] (-0.5,-0.5) rectangle (3.5,8.5);
    \fill[green!6] (4.5,-0.5) rectangle (8.5,8.5);
    \draw[step=1,gray!35,thin] (-0.5,-0.5) grid (8.5,8.5);
    \draw[black,thick] (-0.5,-0.5) rectangle (8.5,8.5);
    \foreach \y in {0,...,8} {
      \fill[gray!75] (3.58,\y-0.42) rectangle (4.42,\y+0.42);
    }
    \fill[red!75!black] (1,4) circle (0.22);
    \node[red!75!black,below] at (1,3.65) {trapped};
    \draw[red!75!black,very thick] (2.9,3.7) -- (3.4,4.2);
    \draw[red!75!black,very thick] (2.9,4.2) -- (3.4,3.7);
    \fill[green!55!black] (7,4) circle (0.22);
    \draw[->,green!45!black,thick] (7,4.35) -- (7,5.55);
    \draw[blue!75!black,very thick] (7,6) circle (0.32);
    \node[blue!75!black,right] at (7.35,6) {$g_1$};
    \node[align=center,font=\small,text width=4.5cm,anchor=south] at (4,8.85)
      {\textbf{Phase 1}\\Entry state controls reachability};
  \end{scope}
  \draw[->,very thick] (9.1,4) -- (11.0,4);
  \node[above] at (10.05,4) {boundary};
\end{tikzpicture}
\caption{Controlled two-phase handoff environment. In phase 0, taking the
larger local reward $g_{00}$ places the agent in a component that becomes
disconnected from $g_1$ after the boundary. Taking the lower reward $g_{01}$
preserves phase-1 reachability.}
\label{fig:gridworld-handoff-schematic}
\end{figure}

\subsubsection{Evaluation Metrics and Protocol}
\label{app:eval}
Evaluations report unweighted cross-seed mean raw returns at the final training checkpoint, with matched-seed differences and paired $95\%$ percentile bootstrap confidence intervals. Diagnostic gradient evaluations (Appendix~\ref{app:local-optimization}) compute phase-local PPO surrogates using $1024$ environment interactions per phase with global advantage normalization. Sample budgets strictly denote environment interaction steps rather than minibatch updates.
\subsection{Extended Empirical Results and Diagnostics}
\label{app:extended-results}

This section contains the longer empirical comparisons that are summarized in the main text. We keep confirmatory results separate from qualitative checks. All numerical comparisons use the evaluation and uncertainty conventions in Appendix~\ref{app:experiments}.

\subsubsection{HAPS Experiments Results}
\label{app:HAP-Res}

\paragraph{HAPS Main results}
In HAPS, transient dynamics are governed by platform relocation and user persistence across phase boundaries. An HAPS platform moves $3\,\mathrm{km}$ per step per coordinate in a $300\,\mathrm{km} \times 300\,\mathrm{km}$ operational area, meaning direct transit requires at most $100$ steps when both spatial coordinates change simultaneously. Meanwhile, mean user connection lifetimes range from $5$ to $25$ steps. We then take $L_m =100$. Moreover, reward differences are mostly caused when one or two HAPS do not reach a high user density zone. Then any difference between two policies should be in the same order. Following this observation, we propose the following approximation: $G_m \approx K_m$, transforming \eqref{eq:phasewise-predictive-threshold} into $\lambda_m < \frac{1}{2}$. And for $\gamma$ close to one, we have $\lambda_m \approx  \frac{L_m}{T_m}$; this motivates $T=200$ as a diagnostic transient recovery horizon from \eqref{eq:phasewise-predictive-threshold}, though not as a strict value for $L_m$, since user arrivals are stochastic and learned navigation trajectories need not be straight lines.

At fixed $M=20$, varying the dwell time $T$ fundamentally shifts method rankings (Figure~\ref{fig:hap-regime}, right). At short dwell times ($T=100$), the Single+clock shared architecture leads ($31.75$ vs.\ $20.88$ for Multipolicy). At $T=200$, Multihead and Multipolicy perform comparably ($32.26$ and $31.90$, respectively) with no consistent seed-level winner, slightly outperforming Single+clock. As dwell time increases further, Multipolicy establishes a decisive lead at $T=500$ ($42.36$ vs.\ $34.15$ for Multihead) and $T=2000$ ($45.76$ vs.\ $33.16$).

\begin{figure}[t]
\centering
\includegraphics[width=\linewidth]{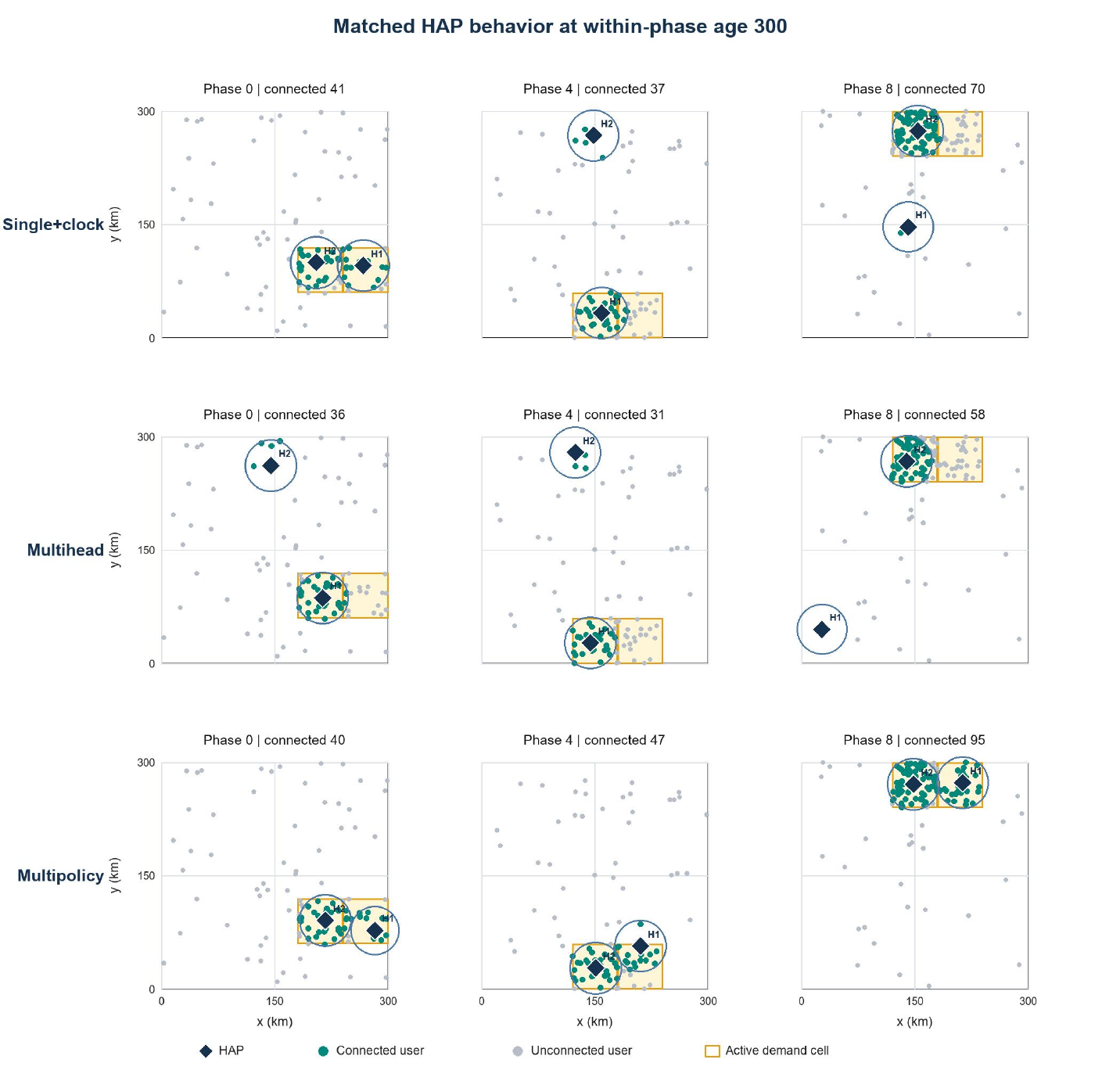}
\caption{Matched HAPS behavior within-phase at age $300$ for five evenly spaced phases. Rows are architectures and columns are phases. The phase subset $\{0,4,8\}$ was fixed by index rather than selected by performance. All methods see the same demand realization. This single-seed visualization makes phase-wise adaptation directly comparable, while Figure~\ref{fig:hap-regime} reports the confirmatory five-seed returns.}
\label{fig:hap-selected-snapshots}
\end{figure}

In addition to the three core architectures, we evaluated multi-task optimization baselines, extended sample budgets, and policy distillation methods on HAPS at $M=20$ and $T=500$. As summarized in Table~\ref{tab:hap-main}, several key insights emerge:

\paragraph{Budget scaling and sample efficiency.}
Increasing the shared-policy training budget up to $20\text{M}$ steps fails to close the high-$M$ performance gap relative to phase specialists. Specialists outperform the shared baseline even at the smallest evaluated per-phase budget ($B_0 = 250\text{k}$ steps, corresponding to $5\text{M}$ total environment interactions) and show only modest gains with additional samples. We observe no late-budget crossover: the advantage of parameter isolation significantly outweighs the data-efficiency cost of splitting sample allocations across specialists.

\paragraph{Multi-task gradient surgery.}
Gradient surgery methods—specifically PCGrad \citep{yu2020pcgrad} and CAGrad \cite{liu2021cagrad} improve upon the standard single-policy baseline at $5\text{M}$ steps ($28.87$ and $28.05$ vs.\ $25.54$). However, both methods remain substantially below the multi-policy score ($43.33$), indicating that gradient conflict mitigation alone cannot resolve the representation bottleneck in high-phase setups.

\paragraph{Asymmetry in policy distillation.}
Policy distillation exhibits a stark structural asymmetry: while compressing a specialist ensemble into a single shared student fails, using a shared teacher to regularize specialist training preserves full performance while guiding early learning. 

Specifically, under \textbf{Multipolicy+clock + teacher KL}, a frozen shared policy pretrained for $1\text{M}$ steps serves as a global teacher $\pi_{\mathrm{teacher}}(a \mid s, m, \tau)$. Each fresh phase specialist $\pi_m(a \mid s, \tau)$ is trained on its phase-fixed environment with PPO, supplemented by an alternating KL-divergence update against the teacher's action distribution. The shared teacher guides early exploration without constraining the specialist's final capacity. Reallocating $1\text{M}$ steps to teacher pretraining leaves $\approx 950\text{k}$ steps per specialist, strictly preserving the $20\text{M}$ total interaction budget. This regularized ensemble achieves $46.58$, matching the standard unconstrained specialist score ($46.72$).

In contrast, distilling the $20\text{M}$-step specialist ensemble back into a single shared student yields only $12.95$. Even when scaling the student architecture and allocating $5\text{M}$ dedicated student steps, performance reaches only $27.73$—remaining below both the un-distilled shared baseline ($30.10$) and the specialist teachers ($46.72$). This demonstrates that the representation bottleneck of a single shared network cannot be overcome simply by imitating optimal specialist behaviors post hoc.

\begin{table}[t]
\centering
\small
\caption{Performance summary on HAPS ($M=20, T=500$). Values denote mean raw reward per step averaged across random seeds $\{0, 1, 2\}$. Bold indicates the highest score.}
\label{tab:hap-main}
\begin{tabular}{lcc}
\toprule
Budget / Setup & Method & Reward/step \\
\midrule
$5\text{M}$ budget & Single+clock / Multihead / Multipolicy+clock & 26.16 / 33.52 / 43.33 \\
$10\text{M}$ budget & Single+clock / Multihead / Multipolicy+clock & 31.16 / 35.41 / 46.39 \\
$20\text{M}$ budget & Single+clock / Multihead / Multipolicy+clock & 30.10 / 37.37 / \textbf{46.72} \\
$20\text{M}$ budget & Multipolicy without clock & 46.57 \\
\midrule
$5\text{M}$ gradient surgery & Single+clock & 25.54 \\
$5\text{M}$ gradient surgery & PCGrad-style & 28.87 \\
$5\text{M}$ gradient surgery & CAGrad-style & 28.05 \\
\midrule
$20\text{M}$ distillation & Multipolicy+clock + teacher KL & 46.58 \\
$20\text{M}$ distillation & Single+clock distilled from specialists & 12.95 \\
$20\text{M}$ larger student & Big single+clock distilled ($5\text{M}$ student steps) & 27.73 \\
\bottomrule
\end{tabular}
\end{table}

\paragraph{Capacity and depth.}
Table~\ref{tab:hap-capacity-depth} summarizes the capacity control experiments supporting Hypothesis H2 of Sec. \ref{sec:experiments}. Increasing the width (``Big'') or depth (``Deep'') of the shared policy architecture fails to close the performance gap relative to the specialist ensemble at $M=20$ and $T=500$. While these results rule out parameter count of the default shared model as a simple explanatory factor, we note that this is not a strict equal-compute comparison: specialist ensembles store significantly more parameters overall, and deeper PPO networks are notoriously prone to optimization instability. Indeed, additional six- and eight-layer depth-stress runs exhibited severe optimization instability across seeds ($29.04$ and $23.30$, respectively) rather than approaching specialist performance.

\begin{table}[t]
\centering
\small
\caption{HAPS capacity and depth controls at $M=20, T=500$. Values denote mean reward per step across three seeds. ``Big'' doubles hidden width; ``Deep'' uses four hidden layers. The six- and eight-layer entries denote depth-stress runs.}
\label{tab:hap-capacity-depth}
\begin{tabular}{lrlr}
\toprule
Architecture & Reward/step & Architecture & Reward/step \\
\midrule
Single, default & 26.04 & Single+clock, default & 27.06 \\
Single, big & 22.48 & Single+clock, big & 24.27 \\
Single, deep & 24.73 & Single+clock, deep & 29.51 \\
Single+clock, 6 layers & 29.04 & Single+clock, 8 layers & 23.30 \\
Multihead, default & 32.76 & Multihead, deep & 27.47 \\
Multipolicy & \textbf{42.34} & & \\
\bottomrule
\end{tabular}
\end{table}

\paragraph{Phase ordering and handoff variants.}
To isolate the effect of temporal regularity versus transition difficulty, we evaluated three distinct phase ordering schedules at $M=20, T=500$ (Table~\ref{tab:hap-order-handoff}). Navigating adjacent phases proves easiest across all methods. Conversely, random and maximum-jump transitions degrade overall returns while widening the performance advantage of the multi-policy ensemble over multi-head baselines. Because altering phase order shifts both boundary state displacement and temporal regularity, this sensitivity highlights that phase count alone does not fully capture domain difficulty. An exploratory $M=30, T=500$ extension yields mean rewards of $23.80$, $32.52$, and $37.63$ for Single+clock, Multihead, and Multipolicy, respectively, preserving the relative performance ordering.

Our preliminary specialist-coordination mechanisms do not resolve the boundary handoff problem. Under a $20\text{M}$-step sample budget, appending a within-phase clock to standard multi-policy leaves performance essentially unchanged, whereas explicit handoff and handoff-warm-start objectives degrade performance across both tested dwell times ($T \in \{200, 500\}$). These negative results demonstrate that naive handoff regularizers can disrupt phase-local learning, highlighting the need for principled continuation-value coordination.

Similarly, a budget-matched warm-start protocol proved unsuccessful. In this setup, a non-augmented policy is pretrained for $B_0$ interactions, after which its weights are cloned into all $M$ specialists for fine-tuning over $B_0(M-1)/M$ steps (maintaining a total interaction budget of $MB_0$). As shown in Table~\ref{tab:hap-warm-handoff}, shared initialization and its handoff-aware extension significantly reduce performance across all tested settings ($M \in \{10, 20\}$, $T \in \{500, 1000\}$). This indicates that initializing specialists with shared parameters introduces negative transfer during fine-tuning, rather than serving as a beneficial warm start.

\begin{table}[t]
\centering
\small
\caption{Exploratory HAPS phase-order and handoff controls (means over three seeds). The phase-order sweep uses $M=20, T=500$ under the original sample budget. The handoff sweep uses $M=20$ under a $20\text{M}$ total interaction budget ($1\text{M}$ per phase).}
\label{tab:hap-order-handoff}
\begin{tabular}{llrrr}
\toprule
Control & Setting & Single+clock & Multihead & Multipolicy \\
\midrule
Phase order & Adjacent & 34.58 & 46.80 & \textbf{48.41} \\
Phase order & Random & 27.47 & 38.17 & \textbf{40.59} \\
Phase order & Maximum jump & 31.17 & 34.88 & \textbf{41.06} \\
\midrule
Clock/handoff, $T=200$ & Standard / clock-aware specialist & -- & -- & 37.47 / \textbf{38.50} \\
Clock/handoff, $T=200$ & Handoff / handoff+warm start & -- & -- & 31.75 / 27.14 \\
Clock/handoff, $T=500$ & Standard / clock-aware specialist & -- & -- & 46.35 / \textbf{46.75} \\
Clock/handoff, $T=500$ & Handoff / handoff+warm start & -- & -- & 36.69 / 27.29 \\
\bottomrule
\end{tabular}
\end{table}

\begin{table}[t]
\centering
\small
\caption{Exploratory HAPS warm-start controls with $B_0=1\text{M}$ interactions per phase under equal total budgets ($MB_0$). Values denote mean reward per step across seeds $\{0, 1, 2\}$. ``Warm'' denotes budget-matched pretraining followed by specialist fine-tuning.}
\label{tab:hap-warm-handoff}
\begin{tabular}{rrcccc}
\toprule
$M$ & $T$ & Multipolicy & Warm & Handoff & Handoff+warm \\
\midrule
10 & 500  & \textbf{44.84} & 16.82 & 32.91 & 17.67 \\
10 & 1000 & \textbf{48.21} & 14.50 & 39.55 & 14.13 \\
20 & 500  & \textbf{46.37} & 13.44 & 37.14 & 11.27 \\
20 & 1000 & \textbf{48.60} & 11.14 & 41.69 & 9.92 \\
\bottomrule
\end{tabular}
\end{table}

Table~\ref{tab:corrected-gradient-diagnostic} reports the actor-gradient diagnostic measurements based on appendix \ref{app:local-optimization}. Across all phase counts $M$, the aggregate alignment statistic $A_g$ remains close to its orthogonal baseline ($A_g = 1$), with every five-seed $95\%$ confidence interval containing unity. Furthermore, the aggregate update harms at most $2\%$ of phases ($f_{\mathrm{harm}} \le 0.02$). Consequently, the declining phase-to-update cosine similarity reflects gradient \emph{dilution} across high-dimensional, distinct task directions rather than an aggregate update vector that actively opposes individual phase objectives. Although $44$--$60\%$ of pairwise gradient cosines are negative ($f_{\mathrm{pair}}^{-} \approx 0.5$), their mean ($\overline{\cos(g_m, g_k)}$) remains close to zero, as positive cross-terms effectively offset negative ones. These findings demonstrate that the performance gap in high-$M$ shared architectures cannot be attributed to destructive aggregate gradient interference at the evaluated training checkpoints.

\begin{table}[t]
\centering
\small
\caption{Corrected HAPS actor-gradient diagnostic at $T=500$. Values denote means over five seeds. $A_g=1$ corresponds to the mutually orthogonal baseline ($C_g = 1 - A_g$), and $f_{\mathrm{harm}}$ is the fraction of phases where $\langle g_m, g \rangle < 0$. Remaining columns report the mean phase-to-aggregate cosine, mean off-diagonal pairwise cosine, and the fraction of negative phase pairs ($f_{\mathrm{pair}}^{-}$).}
\label{tab:corrected-gradient-diagnostic}
\setlength{\tabcolsep}{5pt}
\begin{tabular}{rlrrrrrr}
\toprule
$M$ & Architecture & $A_g$ & $C_g$ & $f_{\mathrm{harm}}$ & $\overline{\cos(g_m,g)}$ & $\overline{\cos(g_m,g_k)}$ & $f_{\mathrm{pair}}^{-}$\\
\midrule
2  & Single+clock & 0.950 &  0.050 & 0.000 & 0.675 & -0.053 & 0.600 \\
5  & Single+clock & 0.970 &  0.030 & 0.000 & 0.399 & -0.012 & 0.520 \\
10 & Single+clock & 0.968 &  0.032 & 0.000 & 0.264 &  0.000 & 0.462 \\
20 & Single+clock & 1.046 & -0.046 & 0.020 & 0.195 &  0.002 & 0.485 \\
\midrule
2  & Multihead    & 1.003 & -0.003 & 0.000 & 0.706 &  0.003 & 0.600 \\
5  & Multihead    & 1.017 & -0.017 & 0.000 & 0.416 &  0.006 & 0.440 \\
10 & Multihead    & 1.041 & -0.041 & 0.020 & 0.279 &  0.007 & 0.462 \\
20 & Multihead    & 1.074 & -0.074 & 0.000 & 0.204 &  0.002 & 0.468 \\
\bottomrule
\end{tabular}
\end{table}

\begin{figure}[t]
\centering
\begin{subfigure}[b]{0.48\linewidth}
  \centering
  \includegraphics[width=\linewidth]{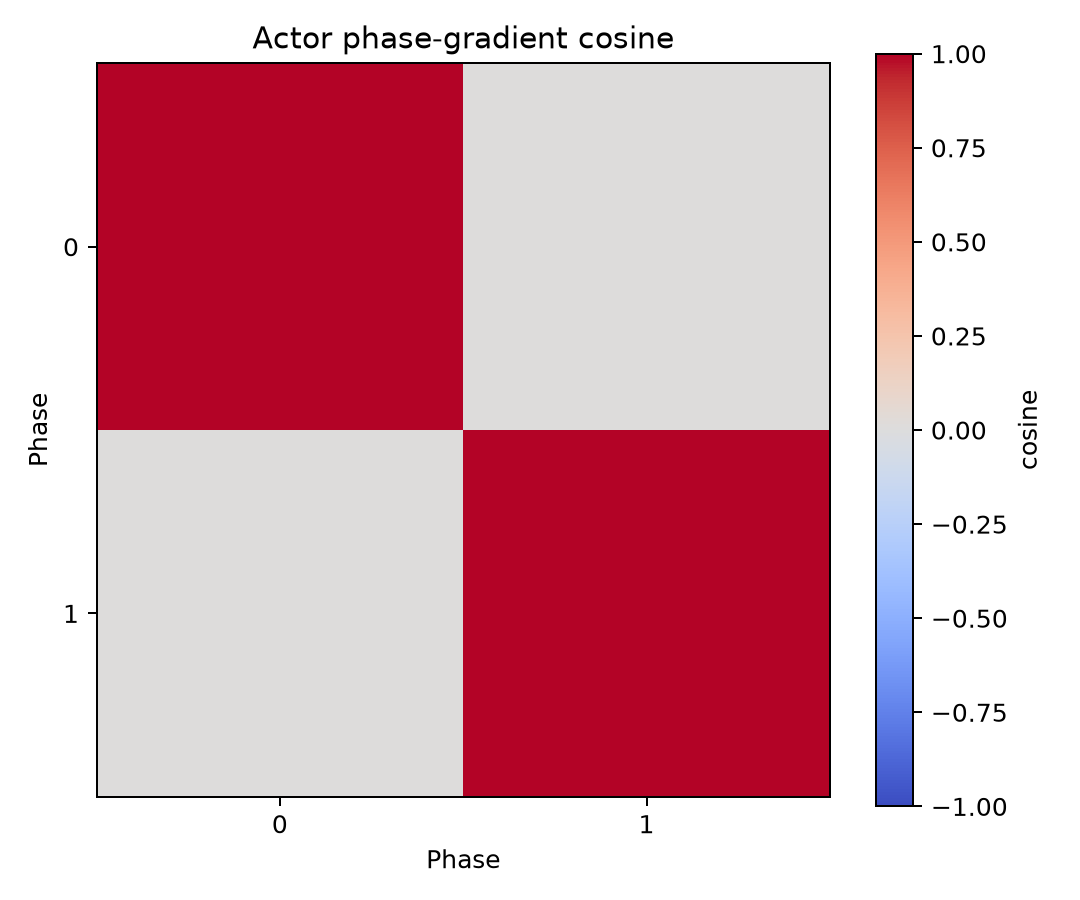}
  \caption{$M=2$, Single+clock}
  \label{fig:grad-m2}
\end{subfigure}\hfill
\begin{subfigure}[b]{0.48\linewidth}
  \centering
  \includegraphics[width=\linewidth]{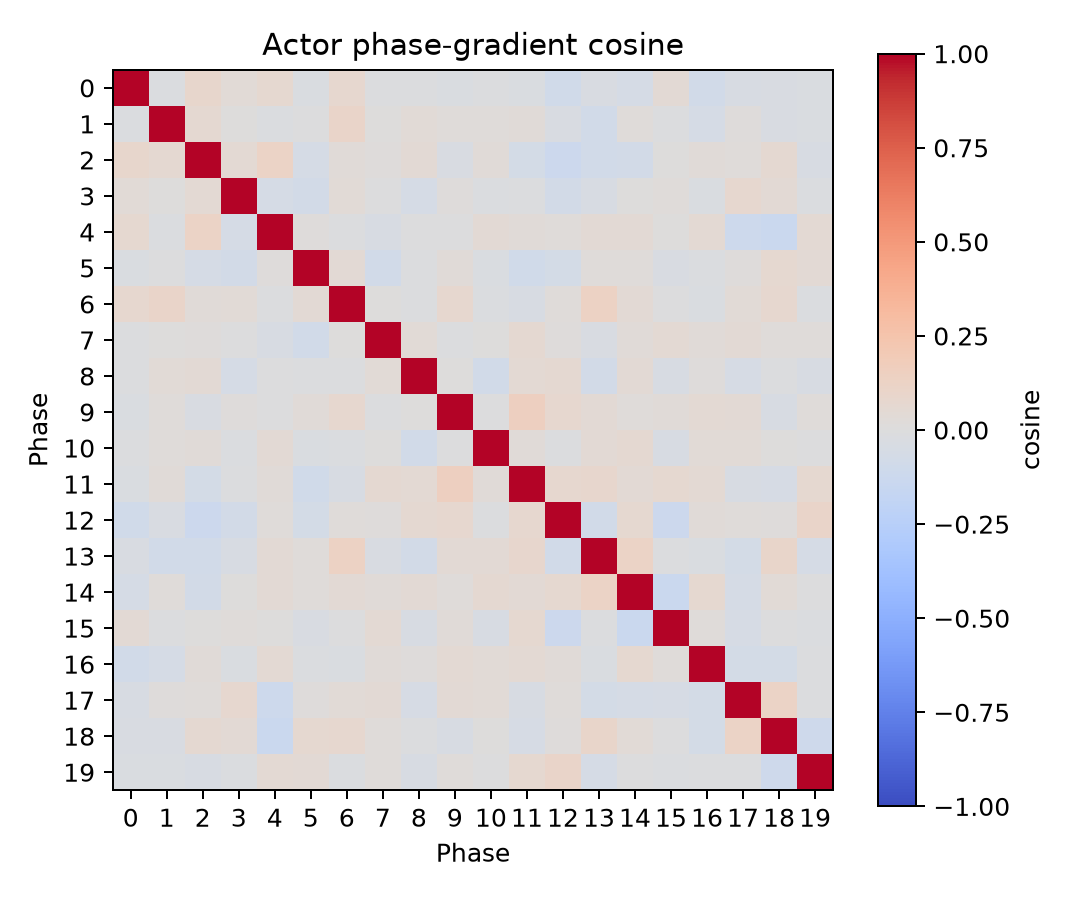}
  \caption{$M=20$, Single+clock}
  \label{fig:grad-m20}
\end{subfigure}
\caption{Representative single-seed actor-gradient cosine matrices for HAPS, illustrating qualitative phase geometry. A high fraction of negative pairwise cosines naturally arises in near-orthogonal high-dimensional gradient spaces and does not indicate that the aggregate update harms individual phases. Quantitative evaluations across five seeds are reported in Table~\ref{tab:corrected-gradient-diagnostic}.}
\label{fig:grad-main}
\end{figure}

\subsubsection{PhasePendulum Experiments Results}

PhasePendulum results confirm that the relative advantage of architectural specialization depends on task heterogeneity and sample availability. We control target separation by interpolating phase targets as $g_m(h) = h \, g_m^{\mathrm{max}}$, where $h \in [0, 1]$ parameterizes phase divergence. 

\paragraph{First Heterogeneity Test}
A first experiment is run with $M=10$ phases of duration $T=1000$ timesteps. The different results are presented in Tab.\ref{tab:het}. At $h = 0$ (identical phase targets), full parameter sharing via Single+clock solves the task efficiently in four of five seeds, whereas all ten specialists remain severely undertrained due to data fragmentation. At $h = 0.5$, shared and specialized architectures achieve comparable returns. At high heterogeneity ($h \in \{0.75, 1.00\}$), Multipolicy outperforms Multihead and Single+Clock across all five matched seeds. Its advantage over complete sharing, while positive in the majority of seeds, exhibits greater variance. This progression reflects a fundamental regime transition from data pooling to specialized parameter allocation as task divergence increases.

\begin{table}[t]
\centering
\small
\caption{PhasePendulum target-separation sweep ($M=10, T=1000$). Values denote mean reward per step across five seeds.}
\label{tab:pendulum-main}
\begin{tabular}{lccc}
\toprule
$h$ & Single+clock & Multihead & Multipolicy \\
\midrule
$0.00$ & \textbf{-1.370} & -2.785 & -6.007 \\
$0.25$ & \textbf{-2.185} & -2.592 & -5.107 \\
$0.50$ & -3.977 & \textbf{-3.844} & -3.977 \\
$0.75$ & -2.666 & -2.693 & \textbf{-2.614} \\
$1.00$ & -1.685 & -1.638 & \textbf{-1.596} \\
\bottomrule
\end{tabular}
\label{tab:het}
\end{table}

\paragraph{Budget allocation and entropy controls.}
To determine whether the failure of multipolicy at $h=0$ stems from fundamental optimization barriers or insufficient sample allocation, we performed budget and entropy sweeps (Fig. \ref{fig:pendulum-h0-learning}). Extending total interactions from $B=5\text{M}$ to $B=20\text{M}$ ($2\text{M}$ per specialist) fully resolves the undertraining deficit, allowing multipolicy to match the shared baseline. Crucially, learning curves show that rapid performance recovery occurs after surpassing the default $500\text{k}$ per-specialist threshold. Conversely, increasing the PPO entropy coefficient ($c_{\mathrm{ent}} = 0.05$) under $B=20\text{M}$ causes policy action variance to explode without restoring performance, confirming that the initial performance gap is a sample-allocation bottleneck rather than premature policy collapse. This is also visible in Fig.\ref{fig:pendulum-h0-learning-specialists} where we see that each local policy/specialist from multipolicy performs well under a single shared policy. The problem seems more linked to learning budget than to nonexistent coordination between phases.

\begin{figure}[t]
\centering
\begin{subfigure}[b]{0.55\linewidth}
\centering
\begin{tikzpicture}
\begin{axis}[
  width=\linewidth,height=1.55in,
  xlabel={Interactions per specialist (millions)},
  ylabel={Reward/step},
  xmin=0,xmax=2,ymin=-10,ymax=1,
  legend style={draw=none,font=\scriptsize,at={(0.98,0.03)},anchor=south east},
  tick label style={font=\scriptsize},label style={font=\scriptsize},
  grid=both,grid style={gray!15}
]
\addplot+[mark=o,thick,error bars/.cd,y dir=both,y explicit]
  table[x=specialist_interactions_millions,y=mean_raw_reward_per_step_mean,
  y error=mean_raw_reward_per_step_ci95,col sep=comma]
  {final_analysis/figure_data/pendulum_h0_budget_B5M_ent0p01.csv};
  \addlegendentry{$B=5\text{M}$}
\addplot+[mark=square,thick,error bars/.cd,y dir=both,y explicit]
  table[x=specialist_interactions_millions,y=mean_raw_reward_per_step_mean,
  y error=mean_raw_reward_per_step_ci95,col sep=comma]
  {final_analysis/figure_data/pendulum_h0_budget_B10M_ent0p01.csv};
  \addlegendentry{$B=10\text{M}$}
\addplot+[mark=triangle,thick,error bars/.cd,y dir=both,y explicit]
  table[x=specialist_interactions_millions,y=mean_raw_reward_per_step_mean,
  y error=mean_raw_reward_per_step_ci95,col sep=comma]
  {final_analysis/figure_data/pendulum_h0_budget_B20M_ent0p01.csv};
  \addlegendentry{$B=20\text{M}$}
\end{axis}
\end{tikzpicture}
\caption{Sample budget allocation ($h=0$)}
\label{fig:pendulum-budget}
\end{subfigure}\hfill
\begin{subfigure}[b]{0.42\linewidth}
\centering
\begin{tikzpicture}
\begin{axis}[
  width=\linewidth,height=1.55in,
  xlabel={Interactions per specialist (millions)},
  ylabel={Action std.\ dev.},
  xmin=0,xmax=2,ymode=log,ymin=0.25,ymax=10000,
  legend style={draw=none,font=\scriptsize,at={(0.02,0.98)},anchor=north west},
  tick label style={font=\scriptsize},label style={font=\scriptsize},
  grid=both,grid style={gray!15}
]
\addplot+[mark=triangle,thick]
  table[x=specialist_interactions_millions,y=mean_action_std_mean,col sep=comma]
  {final_analysis/figure_data/pendulum_h0_budget_B20M_ent0p01.csv};
  \addlegendentry{$c_{\mathrm{ent}}=0.01$}
\addplot+[mark=diamond,thick]
  table[x=specialist_interactions_millions,y=mean_action_std_mean,col sep=comma]
  {final_analysis/figure_data/pendulum_h0_budget_B20M_ent0p05.csv};
  \addlegendentry{$c_{\mathrm{ent}}=0.05$}
\end{axis}
\end{tikzpicture}
\caption{Entropy regularization ($B=20\text{M}$)}
\label{fig:pendulum-entropy}
\end{subfigure}
\caption{Budget scaling and entropy controls for Multipolicy in PhasePendulum under identical phase targets ($h=0$). \textbf{(a)} Learning curves across three total interaction budgets ($B$). Error bars denote two-sided $95\%$ Student-$t$ confidence intervals over five seeds. Significant gains emerge after surpassing $500\text{k}$ interactions per specialist. \textbf{(b)} Under $B=20\text{M}$, increasing the entropy coefficient ($c_{\mathrm{ent}}=0.05$) destabilizes action variance without improving returns (note log scale on right axis).}
\label{fig:pendulum-h0-learning}
\end{figure}
\begin{figure}[t]
\begin{tikzpicture}
\begin{axis}[
width=\linewidth,height=1.55in,
xlabel={Specialist phase $m$},ylabel={Reward/step},
xmin=-0.5,xmax=9.5,ymin=-10,ymax=1,
xtick={0,1,...,9},
legend style={draw=none,font=\scriptsize,at={(0.98,0.03)},anchor=south east},
tick label style={font=\scriptsize},label style={font=\scriptsize},
grid=both,grid style={gray!15}
]
\addplot+[only marks,mark=triangle*,thick,error bars/.cd,y dir=both,y explicit]
table[x=phase,y=mean_reward_per_step,y error=ci95_halfwidth,col sep=comma]
{final_analysis/figure_data/pendulum_h0_specialists.csv};
\addlegendentry{Specialists}
\addplot+[black,dashed,thick] coordinates {(-0.5,-1.3698) (9.5,-1.3698)};
\addlegendentry{Single+clock}
\end{axis}
\end{tikzpicture}
\caption{Learning starvation in PhasePendulum with ten identical phases
($h=0$).  Final phase-wise returns at the original $500$k allocation; every point is a mean over five seeds and the dashed line is the corresponding Single+clock mean. Error bars are two-sided $95\%$ Student-$t$ intervals. Doubling the local budget produces the largest recovery, while the low-budget failure is spread across all ten otherwise redundant specialists.}
\label{fig:pendulum-h0-learning-specialists}
\end{figure}
\paragraph{Phase count, dwell time, and reward weighting dynamics.}
Under a fixed $1$M per-phase interaction allocation ($T=1000$), multipolicy outperforms Single+clock in 19 out of 20 matched runs across $M \in \{2, 5, 10, 20\}$, yielding mean advantage gains of $+0.136$, $+0.119$, $+0.135$, and $+0.242$ reward per step. Paired bootstrap percentile intervals remain strictly positive across all $M$. However, performance scaling is non-monotone with respect to $M$, at $M=20$, multihead achieves a slightly higher mean score ($-2.236$ vs.\ $-2.260$).

Tab.~\ref{tab:pendulum-m-t-controls} summarizes phase-count, dwell-time, and target/reward weighting sweeps. In the dwell-time sweep at $M=10$, Multipolicy matches complete sharing at $T=500$ and secures the top score for $T \in \{1000, 2000\}$. At $M=20$, Multihead dominates short-to-medium horizons ($T \in \{500, 1000\}$), whereas Multipolicy leads at $T=2000$.

We then vary target spacing and reward weighting ($M=10, T=1000, 500\text{k}$/phase) to further illustrate these regime boundaries. The results are presented at the end of Tab. \ref{tab:pendulum-m-t-controls}. Concentrated or small-jump target configurations heavily favor shared representations, whereas maximum-jump configurations favor parameter separation (multipolicy). Assigning a dominant reward weight to Phase 0 eliminates the specialist advantage. This occurs because highly non-uniform rewards force the shared network to prioritize the high-value phase, relieving the capacity bottleneck that normally arises when balancing multiple competing task objectives.

\begin{table}[t]
\centering
\scriptsize
\caption{PhasePendulum ablation controls across phase counts ($M$), dwell times ($T$), and target/reward structures. The phase-count and dwell-time sweeps allocate $10^6$ interactions per phase; target/weighting sweeps allocate $500\text{k}$ per phase. $n$ denotes random seed count.}
\label{tab:pendulum-m-t-controls}
\begin{tabular}{llrrrr}
\toprule
Control & Setting & $n$ & Single+clock & Multihead & Multipolicy \\
\midrule
Phase count, $T=1000$ & $M=2$  & 5 & -1.888 & -1.917 & \textbf{-1.752} \\
                      & $M=5$  & 5 & -1.633 & -1.646 & \textbf{-1.514} \\
                      & $M=10$ & 5 & -1.688 & -1.640 & \textbf{-1.553} \\
                      & $M=20$ & 5 & -2.502 & \textbf{-2.236} & -2.260 \\
\midrule
Dwell time, $M=10$ & $T=500$  & 3 & \textbf{-1.567} & -1.660 & -1.572 \\
                    & $T=1000$ & 3 & -1.637 & -1.639 & \textbf{-1.564} \\
                    & $T=2000$ & 3 & -1.766 & -1.697 & \textbf{-1.548} \\
\midrule
Dwell, $B=5$M & $M=5,T=100$  & 3 & \textbf{-1.580} & -2.021 & -1.672 \\
               & $M=5,T=200$  & 3 & \textbf{-1.545} & -1.922 & -1.636 \\
               & $M=5,T=500$  & 3 & -1.567 & -1.706 & \textbf{-1.530} \\
               & $M=10,T=100$ & 3 & \textbf{-1.586} & -1.885 & -1.779 \\
               & $M=10,T=200$ & 3 & \textbf{-1.584} & -1.816 & -1.613 \\
               & $M=10,T=500$ & 3 & -1.645 & -1.624 & \textbf{-1.586} \\
\midrule
High dwell, $B_0=1$M & $M=10,T=500$  & 3 & \textbf{-1.567} & -1.660 & -1.572 \\
                      & $M=10,T=1000$ & 3 & -1.637 & -1.639 & \textbf{-1.564} \\
                      & $M=10,T=2000$ & 3 & -1.766 & -1.697 & \textbf{-1.548} \\
                      & $M=20,T=500$  & 3 & -2.421 & \textbf{-1.844} & -2.377 \\
                      & $M=20,T=1000$ & 3 & -2.461 & \textbf{-2.136} & -2.310 \\
                      & $M=20,T=2000$ & 3 & -2.570 & -2.585 & \textbf{-2.370} \\
\midrule
Targets / weights & Identical targets, uniform weights & 3 & \textbf{-1.631} & -2.419 & -6.174 \\
                & Small-jump targets, uniform weights & 3 & \textbf{-0.360} & -2.625 & -5.185 \\
                & Maximum-jump targets, uniform weights & 3 & -1.601 & -1.648 & \textbf{-1.584} \\
                & Maximum-jump targets, Phase 0 dominant & 3 & -2.030 & \textbf{-2.019} & -2.330 \\
\bottomrule
\end{tabular}
\end{table}

\subsubsection{GridWorld Experiment Results}
\label{sec:gridworld-results}

\paragraph{One-phase budget calibration.}
To establish a rigorous sample-efficiency baseline for phase specialists, we first calibrated the required interaction budget in a single-phase environment ($M=1, T=100$). Because single-phase environments offer no benefit from parameter specialization, this experiment isolates the sample complexity required by the underlying PPO learner to cross a predefined performance threshold. 

As reported in Table~\ref{tab:gridworld-one-phase-budget}, Single+clock achieves threshold performance across all three seeds at $500\text{k}$ interactions, whereas the multi-policy architecture requires $750\text{k}$ interactions. Based on this calibration, all subsequent multi-phase PhaseGridWorld experiments allocate a conservative minimum budget of $1\text{M}$ interactions per specialist ($B_0 \ge 1\text{M}$) to prevent sample starvation. At first glance, it is surprising that the multipolicy condition learns more slowly than the shared policy when $M=1$, since both use a single PPO learner. However, the comparison is not fully matched: the shared policy receives within-phase clock information, whereas the standard multipolicy condition does not. The clock does not introduce randomness directly, but it allows the policy to select different actions at the same physical state at different phase ages. This can help the agent escape repetitive behaviors, such as repeatedly attempting to move into a wall, and may partly explain the higher final performance of the clock-conditioned policy. Table~\ref{tab:gridworld-distractors} supports this interpretation: without distractors, adding the clock improves multipolicy performance relative to the standard multipolicy condition.

\begin{table}[t]
\centering
\small
\caption{Single-phase GridWorld budget calibration ($M=1, T=100$, three seeds). ``Reached'' denotes the proportion of seeds exceeding the performance threshold. Values are reported as mean reward per step.}
\label{tab:gridworld-one-phase-budget}
\begin{tabular}{rccrr}
\toprule
Budget & Method & Reached & Best mean & Final mean \\
\midrule
$50\text{k}$  & Multipolicy-style & $0/3$ & $-0.0400$ & $-0.0585$ \\
$50\text{k}$  & Single+clock      & $0/3$ & $-0.0410$ & $-0.0567$ \\
$100\text{k}$ & Multipolicy-style & $0/3$ & $-0.0417$ & $-0.0557$ \\
$100\text{k}$ & Single+clock      & $0/3$ & $-0.0420$ & $-0.0557$ \\
$200\text{k}$ & Multipolicy-style & $0/3$ & $-0.0375$ & $-0.0525$ \\
$200\text{k}$ & Single+clock      & $0/3$ & $-0.0461$ & $-0.0502$ \\
$300\text{k}$ & Multipolicy-style & $0/3$ & $-0.0310$ & $-0.0438$ \\
$300\text{k}$ & Single+clock      & $1/3$ & $-0.0003$ & $-0.0048$ \\
$500\text{k}$ & Multipolicy-style & $1/3$ & $-0.0040$ & $-0.0386$ \\
$500\text{k}$ & Single+clock      & $3/3$ & $0.1223$  & $0.1095$  \\
$750\text{k}$ & Multipolicy-style & $3/3$ & $0.1452$  & $0.0791$  \\
$750\text{k}$ & Single+clock      & $3/3$ & $0.1745$  & $0.1547$  \\
$1\text{M}$   & Multipolicy-style & $3/3$ & $0.1350$  & $0.0208$  \\
$1\text{M}$   & Single+clock      & $3/3$ & $0.1807$  & $0.1683$  \\
\bottomrule
\end{tabular}
\end{table}

\paragraph{Robustness to distractor dimensions.}
In order to evaluate how representation space capacity affects phase routing, we conducted a distractor sweep at $M=10, T=100$ by appending $Z$ task-irrelevant, independent observation dimensions. As shown in Table~\ref{tab:gridworld-distractors}, Single+clock maintains high performance up to $Z=32$, but degrades sharply at $Z=128$ ($0.0629$). Conversely, Multihead routing and warm-started clock-aware specialists demonstrate superior representation robustness at $Z=128$, achieving $0.0968$ and $0.1178$, respectively. This drop in shared policy performance is consistent with phase-signal neglect under high-dimensional noise, though distractors also increase the baseline representation burden across all modular models.

\begin{table}[t]
\centering
\small
\caption{PhaseGridWorld distractor sweep ($M=10, T=100$). Values denote mean reward per step across three seeds under $Z$ reward-irrelevant noise dimensions (higher is better).}
\label{tab:gridworld-distractors}
\begin{tabular}{rccccc}
\toprule
$Z$ & Single+clock & Multihead & Multipolicy & Multi+clock & Warm Multi+clock \\
\midrule
0   & \textbf{0.1422} & 0.0980 & 0.1210 & 0.1258 & 0.1356 \\
8   & \textbf{0.1427} & 0.1197 & 0.1048 & 0.1201 & 0.1319 \\
32  & \textbf{0.1359} & 0.1197 & 0.0836 & 0.1082 & 0.1312 \\
128 & 0.0629          & 0.0968 & 0.0544 & 0.0732 & \textbf{0.1178} \\
\bottomrule
\end{tabular}
\end{table}

\paragraph{Obstacle Density}
The fixed-total-budget density sweep in Appendix Table~\ref{tab:gridworld-density-sweep} provides a complementary control. At $D=0.40$, Single+clock leads in every tested $(M,T)$ setting when the total budget is only $3$M interactions. By contrast, clock-aware specialization is best at $D=0.40$ once each specialist receives $1.5$M interactions. Obstacle density is therefore not an architecture-selection criterion by itself: its effect depends jointly on phase-local data, clock information, and the topology induced by the sampled walls.

\begin{table}[H]
\centering
\scriptsize
\caption{Exploratory PhaseGridWorld obstacle-density sweep under a fixed $3$M-interaction total budget. Entries are mean reward per step over three matched seeds; higher is better. The specialist budget is approximately $3\text{M}/M$. Multipolicy and warm multipolicy do not receive clock features in this earlier sweep.}
\label{tab:gridworld-density-sweep}
\begin{tabular}{cccrrrr}
\toprule
Density & $M$ & $T$ & Single+clock & Multihead & Multipolicy & Warm multipolicy \\
\midrule
0.10 & 5  & 50  & 0.1169 & 0.1252 & 0.0917 & \textbf{0.1253} \\
0.10 & 5  & 100 & 0.1273 & 0.1192 & 0.0633 & \textbf{0.1331} \\
0.10 & 10 & 50  & 0.0861 & \textbf{0.0991} & 0.0798 & 0.0969 \\
0.10 & 10 & 100 & 0.0985 & 0.1031 & 0.0445 & \textbf{0.1272} \\
\midrule
0.25 & 5  & 50  & \textbf{0.1008} & 0.0704 & 0.0653 & 0.0992 \\
0.25 & 5  & 100 & 0.0901 & 0.0443 & 0.0261 & \textbf{0.1009} \\
0.25 & 10 & 50  & \textbf{0.1275} & 0.0571 & 0.0375 & 0.0960 \\
0.25 & 10 & 100 & 0.0750 & 0.0686 & 0.0326 & \textbf{0.0813} \\
\midrule
0.40 & 5  & 50  & \textbf{0.1162} & 0.0730 & 0.0502 & 0.0629 \\
0.40 & 5  & 100 & \textbf{0.1069} & 0.0440 & 0.0490 & 0.0533 \\
0.40 & 10 & 50  & \textbf{0.1254} & 0.0788 & 0.0065 & -0.0140 \\
0.40 & 10 & 100 & \textbf{0.0887} & 0.0552 & -0.0121 & 0.0113 \\
\bottomrule
\end{tabular}
\end{table}

\paragraph{Specialist controls and sample allocation.}
Table~\ref{tab:gridworld-specialist-controls} summarizes extended per-phase budget runs ($B_0 \in \{1\text{M}, 1.5\text{M}\}$) across varying grid densities and phase counts. At $B_0=1.5\text{M}$ and $M=10$, Multi+clock matches complete sharing at grid density $0.25$ ($0.0992$ vs.\ $0.0991$) and outperforms it at density $0.40$ ($0.1797$ vs.\ $0.1486$). Warm initialization (``Warm+clock'') provides gains at low density ($0.25$), whereas direct transition-aware warm starts underperform across all settings. These controls indicate that while clock availability and parameter initialization improve specialist performance, sample starvation alone does not fully explain modular failure modes in multi-phase grid worlds.

\begin{table}[t]
\centering
\small
\caption{PhaseGridWorld specialist controls at $T=500$. Values denote mean reward per step across three seeds. $B_0$ is the nominal interaction budget per phase. ``Warm+clock'' initializes specialists from a shared pretrained model prior to fine-tuning.}
\label{tab:gridworld-specialist-controls}
\setlength{\tabcolsep}{4pt}
\begin{tabular}{cccrrrrrr}
\toprule
$B_0$ & Density & $M$ & Single+clock & Multihead & Multipolicy & Multi+clock & Warm+clock & Transition warm \\
\midrule
$1\text{M}$   & 0.25 & 5  & 0.0836 & 0.0514 & 0.0200 & 0.0940 & 0.1124 & -- \\
$1\text{M}$   & 0.25 & 10 & 0.0878 & 0.1228 & 0.0916 & 0.0926 & 0.1179 & -- \\
$1\text{M}$   & 0.40 & 5  & 0.1508 & 0.2099 & 0.1924 & 0.2372 & 0.2323 & -- \\
$1\text{M}$   & 0.40 & 10 & 0.1480 & 0.1344 & 0.1772 & 0.1651 & 0.1681 & -- \\
$1.5\text{M}$ & 0.25 & 10 & 0.0991 & 0.0826 & 0.0791 & 0.0992 & \textbf{0.1155} & -0.0164 \\
$1.5\text{M}$ & 0.40 & 10 & 0.1486 & 0.1404 & 0.1606 & \textbf{0.1797} & 0.1695 & 0.0539 \\
\bottomrule
\end{tabular}
\end{table}

\paragraph{Two-phase coordination failure and the handoff trap.}
Because independent specialists optimize phase-local returns from phase-local reset states, they receive no training signal regarding downstream continuation values. In contrast, a trajectory-level shared policy can trade off immediate phase rewards to optimize downstream start states.

Fig.~\ref{fig:gridworld-handoff-schematic} formalizes this trade-off in a controlled two-phase GridWorld. In Phase 0, the agent chooses between an immediate goal ($g_{00}$, reward $1.0$) and a safe handoff goal ($g_{01}$, reward $\rho < 1.0$). Reaching $g_{00}$ leaves the agent in a disconnected component in Phase 1 behind a closed wall, rendering the Phase 1 goal unreachable. Scripted controls verify that choosing $g_{01}$ preserves Phase 1 reachability across all $79$ valid start states, improving net return by $\rho$.

As presented in Figure~\ref{fig:gridworld-handoff}. When evaluating learned policies, shared architectures adapt to downstream value trade-offs as $\rho$ increases: at $\rho=0.9$, Single+clock and Multihead choose the safe handoff in $96\%$ and $98\%$ of episodes, respectively. In contrast, independent specialists (Multipolicy+clock) select the safe handoff in only $10\%$ of episodes, continuing to greedily pursue the larger Phase 0 local reward $g_{00}$. This confirms that uncoordinated specialist ensembles suffer from myopic action selection at phase boundaries.

\begin{figure}[H]
\centering
\begin{tikzpicture}
\begin{axis}[
width=0.72\linewidth,height=2.25in,
xlabel={Safe-goal reward $\rho$},ylabel={Safe choice rate},
xmin=0.25,xmax=0.95,ymin=-0.05,ymax=1.08,
xtick={0.3,0.6,0.9},ytick={0,0.5,1},
legend style={draw=none,font=\scriptsize,at={(0.03,0.97)},anchor=north west},
tick label style={font=\scriptsize},label style={font=\scriptsize},
grid=both,grid style={gray!15}
]
\addplot+[mark=o,thick,error bars/.cd,y dir=both,y explicit]
table[x=rho,y=single_phase_tau,y error=single_phase_tau_ci95,col sep=comma]
{final_analysis/figure_data/gridworld_goal_choice.csv};
\addlegendentry{Single+clock}
\addplot+[mark=square,thick,error bars/.cd,y dir=both,y explicit]
table[x=rho,y=multihead_tau,y error=multihead_tau_ci95,col sep=comma]
{final_analysis/figure_data/gridworld_goal_choice.csv};
\addlegendentry{Multihead+clock}
\addplot+[mark=triangle,thick,error bars/.cd,y dir=both,y explicit]
table[x=rho,y=multipolicy_tau,y error=multipolicy_tau_ci95,col sep=comma]
{final_analysis/figure_data/gridworld_goal_choice.csv};
\addlegendentry{Multipolicy+clock}
\end{axis}
\end{tikzpicture}
\caption{Learned safe-handoff choices. Shared methods increasingly choose the
safe phase-0 goal as $\rho$ rises, whereas independently trained specialists
continue to prefer the immediate phase-local reward.
Means and two-sided $95\%$ Student-$t$ intervals over five seeds.}
\label{fig:gridworld-handoff}
\end{figure}

\paragraph{Causal boundary interventions and recovery dynamics.}
To isolate boundary state distribution shifts from intrinsic phase-local policy quality, we evaluated trained checkpoints under two distinct phase-entry conditions: physical state carryover versus resampling from reachable entry states (Fig.~\ref{fig:gridworld-handoff-topologies}).

Fig. \ref{fig:gridworld-boundary-intervention}, presents the reward gap between multipolicy and single policy in the two cases when the transition is carried on from one phase to the other or when there is a reachable state reset. Those comparisons are presented for the three already presented phase-structured gridworlds (Aligned, Bottleneck, Closed trap).
In the irrecoverable ``Closed trap'' setting, state carryover imposes a severe performance penalty on specialists (a gap of $-0.0798$ relative to shared policies). However, resetting phase entry to a common reachable state eliminates the handoff penalty, reversing the net performance gap to $+0.0060$. Conversely, in the recoverable ``Bottleneck'' environment (where a distant gate permits recovery), specialization outperforms sharing under both physical carryover ($+0.0681$) and reachable resets ($+0.0728$).

These interventions validate the regime decomposition framework in \eqref{eq:main-regime-decomposition}: recoverable transitions induce a finite transient recovery cost ($L_m$), enabling settled specialist gains ($G_m$) to dominate long-horizon returns. Unrecoverable transitions cause effective transient costs to span the entire phase duration ($L_m = T_m$), preventing boundary cost amortization. It also supports the effect of heterogeneity, as multipolicy, as a specialized solution, outperforms a single policy in ``Bottleneck," composed of two different phases, but underperforms in ``Aligned," where there is no difference between the phases.

\begin{figure}[t]
\centering
\begin{subfigure}[b]{0.56\linewidth}
\centering
\begin{tikzpicture}
\begin{axis}[
width=\linewidth,
height=1.65in,
xmin=-0.45,xmax=2.45,
ymin=-0.13,ymax=0.10,
xtick={0,1,2},
xticklabels={Aligned, Bottleneck, Closed trap},
ylabel={Specialist $-$ shared reward/step},
legend style={draw=none,font=\scriptsize,at={(0.5,1.02)},anchor=south,legend columns=2},
tick label style={font=\scriptsize},
label style={font=\scriptsize},
grid=major,
grid style={gray!15}
]
\addplot[black,thin,forget plot] coordinates {(-0.45,0) (2.45,0)};
\addplot+[only marks,mark=o,thick,xshift=-4pt,error bars/.cd,y dir=both,y explicit]
table[x=x,y=coupled,y error=coupled_ci,col sep=comma]
{figures_data/gridworld_boundary_intervention.csv};
\addlegendentry{Carried state}
\addplot+[only marks,mark=square,thick,xshift=4pt,error bars/.cd,y dir=both,y explicit]
table[x=x,y=reset,y error=reset_ci,col sep=comma]
{figures_data/gridworld_boundary_intervention.csv};
\addlegendentry{Reachable-state reset}
\end{axis}
\end{tikzpicture}
\caption{Reward gap across entry laws}
\label{fig:gridworld-intervention-a}
\end{subfigure}\hfill
\begin{subfigure}[b]{0.41\linewidth}
\centering
\begin{tikzpicture}
\begin{axis}[
width=\linewidth,
height=1.65in,
xmin=-0.45,xmax=2.45,
ymin=-0.015,ymax=0.13,
xtick={0,1,2},
xticklabels={Aligned,Bottleneck,Trap},
x tick label style={font=\scriptsize,rotate=18,anchor=east},
ylabel={Reset $-$ carried gap},
tick label style={font=\scriptsize},
label style={font=\scriptsize},
grid=major,
grid style={gray!15}
]
\addplot[black,thin,forget plot] coordinates {(-0.45,0) (2.45,0)};
\addplot+[only marks,mark=diamond*,thick,error bars/.cd,y dir=both,y explicit]
table[x=x,y=interaction,y error plus=interaction_ci_plus,
  y error minus=interaction_ci_minus,col sep=comma]
{figures_data/gridworld_boundary_intervention.csv};
\end{axis}
\end{tikzpicture}
\caption{Difference-in-differences}
\label{fig:gridworld-intervention-b}
\end{subfigure}
\caption{Controlled boundary intervention in PhaseGridWorld ($M=2, T=100$). \textbf{(a)} Specialist-minus-shared returns under physical state carryover versus reachable-state resets. \textbf{(b)} Corresponding difference-in-differences. Points represent means; error bars denote $95\%$ Student-$t$ CIs (left) and paired bootstrap CIs (right). Removing inherited boundary states reverses method rankings in the closed trap while preserving specialist dominance in the bottleneck.}
\label{fig:gridworld-boundary-intervention}
\end{figure}

\paragraph{Empirical transient--settled decomposition.}
Finally, Table~\ref{tab:gridworld-gk-decomposition} reports the empirical decomposition into settled gains ($G$) and transient handoff costs ($K$) under a preregistered window split $L=50$ ($T=100$). Under state carryover, both the Aligned and Closed Trap topologies exhibit positive transient costs ($K > 0$) and negative late-window specialist gaps ($G < 0$). Re-evaluating policies under a common reachable reset eliminates transient costs ($K \approx 0$) and restores positive net gains ($G > 0$). %These empirical breakdowns closely reflect the theoretical bounds established in Proposition~\ref{prop:reference-regime-bounds}.

\begin{table}[t]
\centering
\small
\caption{Empirical transient--late-window decomposition in PhaseGridWorld ($L=50, T=100$). Values denote means across five matched training seeds evaluated under $\Delta\bar J = (1-\lambda)G - \lambda K$ with $\lambda=0.5$. ``Carried'' retains physical states across boundaries; ``Reset'' uses a common reachable entry law.}
\label{tab:gridworld-gk-decomposition}
\begin{tabular}{llrrr}
\toprule
Topology & Entry law & $G$ (Settled) & $K$ (Transient) & Specialist $-$ Shared \\
\midrule
Aligned    & Carried & $-0.0375$ & $0.0450$  & $-0.0413$ \\
Aligned    & Reset   & $0.0065$  & $-0.0007$ & $0.0036$  \\
Bottleneck & Carried & $0.0746$  & $-0.0616$ & $0.0681$  \\
Bottleneck & Reset   & $0.0771$  & $-0.0684$ & $0.0728$  \\
Closed trap& Carried & $-0.0754$ & $0.0842$  & $-0.0798$ \\
Closed trap& Reset   & $0.0101$  & $-0.0018$ & $0.0060$  \\
\bottomrule
\end{tabular}
\end{table}

\end{document}